\documentclass{article}

\usepackage[main, final]{neurips_2026}

\usepackage{natbib} 

\usepackage[utf8]{inputenc} 
\usepackage[T1]{fontenc}    
\usepackage{hyperref}       
\usepackage{url}            
\usepackage{booktabs}       
\usepackage{amsfonts}       
\usepackage{nicefrac}       
\usepackage{microtype}      
\usepackage{xcolor}         

\usepackage{relsize}

\usepackage{amsthm}
\usepackage{bbm} 

\makeatletter
\newcommand*{\transpose}{%
  {\mathpalette\@transpose{}}%
}
\newcommand*{\@transpose}[2]{%
  \raisebox{\depth}{$\m@th#1\intercal$}%
}

\usepackage{amsfonts} 
\usepackage{amsmath} 
\usepackage{amssymb} 
\usepackage{mathtools} 
\usepackage{subcaption} 
\usepackage{wrapfig} 
\usepackage{nicefrac} 
\usepackage{multirow} 
\usepackage{arydshln} 

\usepackage{bm}

\def\eqref#1{equation~\ref{#1}}

\def\1{\bm{1}}

\def\rve{{\mathbf{e}}}

\def\rmO{{\mathbf{O}}}

\def\rmS{{\mathbf{S}}}
\def\rmT{{\mathbf{T}}}

\DeclareMathAlphabet{\mathsfit}{\encodingdefault}{\sfdefault}{m}{sl}
\SetMathAlphabet{\mathsfit}{bold}{\encodingdefault}{\sfdefault}{bx}{n}

\def\gF{{\mathcal{F}}}

\def\gN{{\mathcal{N}}}

\def\sN{{\mathbb{N}}}

\def\sR{{\mathbb{R}}}
\def\sS{{\mathbb{S}}}

\DeclareMathOperator{\Tr}{Tr}

\usepackage{enumitem}
\usepackage{graphicx} 
\usepackage[export]{adjustbox}
\usepackage{tikz-cd}
\usepackage{mathabx,epsfig}
\usepackage{quiver}
\usepackage{pgfplots}

\usepackage[dvipsnames]{xcolor}
\usepackage{colortbl}

\usepackage{algorithm} 
\usepackage{algpseudocode}
\usepackage{url}
\usepackage{adjustbox} 

\usepackage{hyperref}

\algblock{ParFor}{EndParFor}
\algnewcommand\algorithmicparfor{\textbf{parfor}}
\algnewcommand\algorithmicpardo{\textbf{do}}
\algnewcommand\algorithmicendparfor{\textbf{end\ parfor}}
\algrenewtext{ParFor}[1]{\algorithmicparfor\ #1\ \algorithmicpardo}
\algrenewtext{EndParFor}{\algorithmicendparfor}

\usepackage{booktabs}

\usepackage{natbib} 
\PassOptionsToPackage{sort}{natbib}
\definecolor{darkgreen}{rgb}{0.0, 0.5, 0.13}
\definecolor{bestgreen}{RGB}{108,192,146}

\usepackage{amsmath}
\usepackage{amssymb}
\usepackage{mathtools}
\usepackage{amsthm}
\usepackage{faktor}

\usepackage{float}

\usepackage[capitalize,noabbrev,nameinlink]{cleveref}

\theoremstyle{plain}
\newtheorem{theorem}{Theorem}[section]
\newtheorem{proposition}{Proposition}[section]
\newtheorem{lemma}{Lemma}[section]
\newtheorem{corollary}{Corollary}[section]
\newtheorem{example}{Example}[section]
\theoremstyle{definition}
\newtheorem{definition}{Definition}[section]

\theoremstyle{remark}
\newtheorem{remark}{Remark}[section]

\usepackage[textsize=tiny]{todonotes}

\usepackage{comment} 
\usepackage{utfsym}
\usepackage{caption}

\newcommand{\rmapf}[2]{\mathcal{F}_{#1 \unlhd #2}}

\newcommand{\crossmark}{\scalebox{0.75}{\usym{2613}}}

\usepackage[normalem]{ulem}

\definecolor{lb}{RGB}{31,119,180}
\usepackage{tcolorbox}
\newtcolorbox{mybox}[1]{colback=gray!15,colframe=black!20,fonttitle=\bfseries,title=#1, %
left=3pt,    
  right=3pt,   
}

\title{Mitigating Over-squashing without Rewiring: \\ A Sheaf Effective Resistance Perspective}

\author{%
  \hspace{-5pt}André Ribeiro \\
  \hspace{-5pt}Getulio Vargas Foundation\\
  \hspace{-5pt}\texttt{andre.guimaraes@fgv.br} \\
   \And 
   Germano Barcelos \\
   Federal University of Viçosa \\
   \texttt{gegenbarcelos@gmail.com} \\
      \AND
      \hspace{-5pt}Amauri H. Souza \\
   \hspace{-5pt}Federal Institute of Ceará, 2$\delta$ AI \\
   \hspace{-5pt}\texttt{amauriholanda@ifce.edu.br} \\
      \And
   \hspace{-40pt}Diego Mesquita \\
   \hspace{-40pt}Getulio Vargas Foundation, 2$\delta$ AI \\
   \hspace{-40pt}\texttt{diego.mesquita@fgv.br} 
   \And
   \hspace{-0pt}Ana Luiza Tenório \\
   \hspace{-0pt}Getulio Vargas Foundation, UFRRJ \\
   \hspace{-0pt}\texttt{anatenorio@ufrrj.br} \\   
}

\begin{document}

\maketitle

\begin{abstract}

Graph Neural Networks (GNNs) often struggle to capture long-range dependencies due to over-squashing — a phenomenon in which the repeated compression of node embeddings into finite-size messages causes representations to collapse. Over-squashing is most often diagnosed as a property of the graph topology, with effective resistance serving as a principled measure of the bottleneck. We provide a complementary view on the matter: building on cellular sheaves, we introduce sheaf effective resistance, a generalization of effective resistance that depends on the sheaf attached to the graph, and we prove that for flat vector bundles, the over-squashing sensitivity in the Jacobian sense is upper bounded by a quantity related to the sheaf effective resistance between the nodes. The bottleneck thus need not lie in the graph itself: it can be relocated, and reduced, by adjusting the sheaf. We instantiate this idea in FlatNSD, a simple message-passing variant of Neural Sheaf Diffusion, and show that it implicitly learns to modulate total sheaf effective resistance, performing well on benchmarks designed to stress over-squashing without altering the original graph topology.
\end{abstract}

\section{Introduction}

Graph neural networks (GNNs) have become the dominant approach for learning on relational data, with applications ranging from molecular chemistry \citep{duvenaud2015convolutional,gilmer2017neural} to recommendation \citep{ying2018graph} and physical simulation \citep{sanchez2020learning}. Their performance, however, is constrained by two well-documented pathologies: over-smoothing, where iterated message passing makes node embeddings nearly indistinguishable, and over-squashing, where information from exponentially many nodes is compressed into fixed-size vectors, hindering long-range dependencies. Recent work by \citet{bodnar2022neural} introduced Neural Sheaf Diffusion (NSD), a generalization of Graph Convolutional Networks \citep{kipf2016semi} that uses cellular sheaves to mitigate over-smoothing while accommodating heterophilic graphs.

This raises a natural question: \textbf{can sheaf-based models also mitigate over-squashing?} The role of sheaves in this setting has been explored by \citet{bamberger2024bundle}, who introduced flat vector bundles as a scalable parameterization of sheaves and proposed BuNN, a sheaf-inspired architecture that steps outside the message-passing paradigm to address over-squashing. Building on their flat-bundle formulation, we show that the message-passing paradigm itself is sufficient to alleviate over-squashing, and we introduce FlatNSD as a simple NSD variant to support this claim.

Our analysis builds on a tool from the rewiring literature: effective resistance. The prevailing view treats over-squashing as a topological bottleneck. Through a Jacobian sensitivity analysis, \citet{toppingunderstanding} linked the issue to edges of high negative curvature; \citet{black2023understanding} introduced effective resistance as a metric for over-squashing and proposed adding edges to reduce it; and \citet{pmlr-v202-di-giovanni23a} further showed that over-squashing primarily affects distant nodes. Together, these works established graph topology as the main driver of over-squashing. 

On the other hand, we have sheaves: a cellular sheaf $\gF$ on a fixed graph is mathematical structure that assigns a vector space, the \emph{stalk}, to each vertex and edge, together with linear \emph{restriction maps} for each incident vertex–edge pair. Whereas \citet{hansen2020sheaf} considered fixed restriction maps, \citet{bodnar2022neural} proposed learning them end-to-end, yielding a flexible model that adapts to the task. Information propagation in NSD is governed by the sheaf Laplacian, which generalizes the graph Laplacian and is determined by the restriction maps. This suggests a different perspective on over-squashing: the graph itself need not be modified; one can instead adjust the sheaf $\gF$ assigned to it. Intuitively, restriction maps modulate the communication channels between nodes, so an appropriate choice of $\gF$ can play the role traditionally assigned to edge rewiring. This marks a shift in perspective: \textbf{the bottleneck responsible for over-squashing need not lie in the graph, but in the sheaf attached to it}. Rather than asking \textit{which edges to add/remove}, we ask \textit{which restriction maps to learn}.\looseness=-1

To make this precise, we generalize the classical effective resistance and introduce \emph{sheaf effective resistance} $\operatorname{R}_{\gF}(u,v)$, which quantifies the ease of transferring information between nodes $u$ and $v$ on a fixed graph. Our results show that lower $\operatorname{R}_{\gF}(u,v)$ corresponds to less compression of messages (\cref{theo:Jacobian&resistance}). Empirically, evaluations on synthetic data reveal a strong inverse relationship between total sheaf effective resistance and predictive accuracy (\cref{fig:sprm-tnm}). When using SAGE to learn the restriction maps, FlatNSD produces Laplacians with low sheaf effective resistance and correspondingly higher accuracy (\cref{fig:tree-by-depth}). Importantly, as we analyze in \cref{app:oversq+oversm}, our modifications mitigate over-squashing without inducing over-smoothing — addressing a natural concern given the well-known tension between the two phenomena. Our theoretical results extend beyond FlatNSD to a broader class of MPNNs that we refer to as SNNs (see \cref{eq:MP}).

In summary, our main contributions are the following:
\begin{enumerate}[leftmargin=16pt, itemsep=1pt, nosep] 
    \item We derive closed-form equations for the sheaf effective resistance (\cref{sec:sheaves_eff}) and prove that the sheaf effective resistance for flat vector bundles upper bounds the over-squashing sensitivity (\cref{theo:Jacobian&resistance}). \looseness=-1
    \item Through synthetic experiments that control over-squashing, we support the theory observing that accuracy increases while sheaf effective resistance decreases (\cref{fig:tree-by-depth} and  \cref{fig:sprm-tnm}). \looseness-1
    \item Empirical evaluations on real-world tasks show that FlatNSD offers a robust alternative, performing competitively alongside established rewiring and graph transformer baselines.
    \item We identify key architectural choices driving SNN performance (i.e., the role of restriction-map parameterization and stalk dimension) providing guidance for future sheaf-based models.
\end{enumerate}

\section{Background}
In this section, we briefly review cellular sheaves and their use in GNNs. Moreover, we overview how effective resistance was previously studied to avoid over-squashing.

\textbf{Notation.} Let $G = (V, E)$ be a connected, undirected, unweighted graph with $n$ vertices and $m$ edges. Let $A$ be the adjacency matrix of $G$ and $D$  its degree matrix. The graph Laplacian of $G$ is $L_G = D - A$ and its normalized version by $\Delta_G = I - \hat{A}= D^{-1/2} L_G D^{-1/2}$, with $\hat{A} = D^{-1/2}AD^{-1/2}$. 


\textbf{Cellular Sheaves.} 
    A cellular sheaf $\mathcal{F}$ over a graph $G$ associates (i) vector spaces $\mathcal{F}(v)$ and  $\mathcal{F}(e)$ to each vertex $v \in V$ and each edge $e \in E$, respectively; (ii) linear maps $\mathcal{F}_{v \unlhd e} : \mathcal{F}(v) \to \mathcal{F}(e)$ to each incident vertex-edge pair $v \unlhd e$. The vector spaces are called \textbf{stalks}, and the linear maps are called \textbf{restriction maps}. For simplicity, it is standard to assume that all stalks are isomorphic to $\sR^d$. If $d=1$ and all restriction maps are the identity map, we have the so called \textbf{trivial sheaf}. 

In GNNs, diffusion is often governed by the graph Laplacian. In SNNs, the sheaf Laplacian plays an analogous role, but its definition involves several intermediate steps.

\textbf{Sheaf Laplacian.} We start by defining spaces of direct sums $ C^0(G, \gF) = \bigoplus_{v \in V} \gF(v) \mbox{ and }
        C^1(G, \gF) = \bigoplus_{e \in E} \gF(e)$  to consider a linear map $\delta : C^0(G,\gF) \to C^1(G,\gF)$ given by $(\delta \mathbf{X})_e = \rmapf{u}{e} \mathbf{x}_u - \rmapf{v}{e} \mathbf{x}_v$ for each $e \in E$, where an orientation $e = v \to u$ is chosen, and $\mathbf{x}_v$ refers to a vector in $\gF(v)$\textcolor{black}{, i.e. a row in \textbf{X}}. When the sheaf is trivial, this operator coincides with $B^\top$, where $B$ is the incidence matrix of $G$. Then, the sheaf Laplacian is defined by $\delta^\top\delta$, generalizing the Laplacian description  as $B B^\top $. Equivalently, we also have that the \textbf{sheaf Laplacian} of  $\mathcal{F}$  is the linear operator $L_{\mathcal{F}}: C^0(G, \mathcal{F}) \to C^0(G, \mathcal{F})$ given by  
$        L_{\mathcal{F}}(\mathbf{X})_v:=\sum_{v, u \unlhd e} \rmapf{v}{e}^{\top}\left(\rmapf{v}{e} \mathbf{x}_v-\rmapf{u}{e} \mathbf{x}_u\right),\forall v \in V.$

The sheaf Laplacian $L_{\mathcal{F}}$ is a positive semidefinite matrix with diagonal blocks $(L_{\gF})_{vv} = \sum_{v \unlhd e} \rmapf{v}{e}^{\top} \rmapf{v}{e}$ and non-diagonal blocks $(L_{\gF})_{vu} = (L_{\gF})_{vu}^\top=-\rmapf{v}{e}^{\top} \rmapf{u}{e}$. \textcolor{black}{It can also be written as $L_{\gF} = D_{\gF} - A_{\gF}$, where $D_{\gF}$ is the block-diagonal sheaf-degree matrix, and $A_{\gF}$ the sheaf-adjacency matrix.} Its normalized version is $ \Delta_{\gF} = D_{\gF}^{-\frac{1}{2}}L_{\gF}D_{\gF}^{-\frac{1}{2}}$. 

\textbf{Neural Sheaf Diffusion (NSD).} One of the first, and currently the most popular SNN model was introduced in \citet{bodnar2022neural} as follows:
\begin{equation}\label{NSD_model}
    \mathbf{H}_{t+1} = \mathbf{H}_t - \sigma(\Delta_{\gF(t)}(\mathbf{I} \otimes \mathbf{W}_{1,t})\mathbf{H}_t\mathbf{W}_{2,t}),
\end{equation}
where $\mathbf{H}_0 \in \sR^{nd \times h}$ is the matrix of initial node features projected into $h$ channels, $\mathbf{W}_{1,t}$ and $\mathbf{W}_{2,t}$ are weight square matrices of dimensions $d$ and $h$, respectively, $\otimes$ is the Kronecker product, and $\sigma$ is an activation. Furthermore, the Laplacian is learned locally through the graph features: each restriction map $F_{v \unlhd e}$ is learned using a function $\Phi: \mathbb{R}^{2d} \to \mathbb{R}^{d \times d}$, with $\gF_{v \unlhd e\coloneqq(v,u)} = \Phi(\mathbf{x}_v, \mathbf{x}_u)$.

\textbf{Classes of sheaves.} \citet{bodnar2022neural} studied three classes of sheaves based on the properties of their restriction maps: diagonal, orthogonal, and general (i.e, any invertible linear map). The best balance between scalability and empirical performance was achieved by orthogonal restriction maps. We call this class of sheaves \textbf{vector bundles}. More recently, further improving the computational complexity of SNNs, \citet{bamberger2024bundle} presented  
\textbf{flat vector bundles} as part of their Bundle Neural Networks (BuNNs) model. They consider the special case of flat vector bundles in which to every node $v$ is assigned an orthogonal map $\gF_v$, which replaces $\rmapf{v}{e}$ for any adjacent edge $e$. Instead of building the model directly over the sheaf Laplacian, BuNN uses the bundle heat kernel.


\textbf{Effective Resistance.} Originally defined by \citet{kirchhoff1847ueber}, effective resistance is the voltage difference between two points in an electrical network when a unit current is applied. In the realm of GNNs, it serves as a measure of proximity and connectivity within a graph, 
and it was recently incorporated to mitigate over-squashing \citep{black2023understanding}. \looseness=-1
Formally, the effective resistance between nodes $u$ and $v$ is defined by
$ \operatorname{R}(u,v) = (1_u - 1_v)^{\top}L^+_G(1_u - 1_v),$  where $1_v$ is an indicator vector, i.e.,  a binary vector of length $n$ and in this vector, only the entry corresponding to node $v$ is $1$, and $L^+$ is the pseudo-inverse of the graph Laplacian.

\textbf{Over-squashing.} The connection between over-squashing and effective resistance passes through the work from \citet{toppingunderstanding}, which proposed an upper bound on the $2$-norm of the Jacobian between node features, $\|\partial h^{(r)}_u / \partial x_v \|$, with $h^{(r)}_u$ and $x_v$ both being vectors. This Jacobian quantifies how the initial feature $x_v$ at node $v$ affects $h^{(r)}_u$ at node $u$ in the $r$-th GNN layer.  Then, \citet{black2023understanding} proved that if $\operatorname{R}(u,v)$ decreases, then the upper bound increases, allowing for an improved information flow between $u$ and $v$. More precisely, using $\phi_t$ and $\psi_t$ to denote, respectively, learnable update and message functions at layer $t$, they obtained:

\begin{theorem}[Theorem 3.3 \cite{black2023understanding}]\label{theo:black} Let $G$ be a non-bipartite graph, and let $u, v \in V$. Suppose 
$\|\nabla \phi_l\| \leq \alpha$ and $\max\{\|\nabla \psi_l\|, 1\} \leq \beta$. 
Define $d_{\min} = \min\{d_u, d_v\}$ and $d_{\max} = \max\{d_u, d_v\}$. 
Let $\max\{|\mu_2|,...,|\mu_n|\} \le \mu$. Then \looseness=-1
  $$\left\|\frac{\partial h_u^{(r)}}{\partial x_v}\right\| \leq (2\alpha\beta)^r \frac{d_{max}}{2}\left(\dfrac{2}{d_{min}}\left(r +1+ \frac{\mu^{r+1}}{1-\mu}\right) - \operatorname{R}(u,v) \right).$$ 
\end{theorem}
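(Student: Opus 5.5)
The argument has two parts. First bound the Jacobian by a sum of powers of the normalized adjacency. Then bound that sum spectrally and relate it to $\operatorname{R}(u,v)$.

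\textbf{Step 1: reduce the Jacobian to matrix powers.} Consider the standard message-passing update $h_u^{(t+1)} = \phi_t\bigl(h_u^{(t)}, \sum_{w} \hat{A}_{uw}\psi_t(h_w^{(t)})\bigr)$. Differentiate by the chain rule and bound each factor with $\|\nabla\phi_t\|\le\alpha$ and $\|\nabla\psi_t\|\le\beta$. Because of the self-term, each layer multiplies the sensitivity by at most $\alpha\beta(I+\hat{A})$, entrywise in the absolute sense. Induction on $r$, following \citet{toppingunderstanding}, gives
\[
\left\|\frac{\partial h_u^{(r)}}{\partial x_v}\right\| \le (\alpha\beta)^r\bigl((I+\hat{A})^r\bigr)_{uv} \le (2\alpha\beta)^r \sum_{k=0}^{r}\bigl(\hat{A}^k\bigr)_{uv}.
\]
The last inequality uses $(I+\hat A)^r=\sum_k\binom{r}{k}\hat A^k$ and $\binom{r}{k}\le 2^r$. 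Since $\hat{A}$ has nonnegative entries, every term is nonnegative, so no sign issues arise.

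\textbf{Step 2: spectral expansion.} Write $\hat{A}=\sum_i \mu_i \varphi_i\varphi_i^\top$ with $\mu_1=1$ and $\varphi_1 = D^{1/2}\1/\sqrt{2m}$. It is convenient to control the diagonal-minus-off-diagonal combination
\[
S_k := \frac{(\hat A^k)_{uu}}{d_u}+\frac{(\hat A^k)_{vv}}{d_v}-\frac{2(\hat A^k)_{uv}}{\sqrt{d_ud_v}}.
\]
In $S_k$ the $\mu_1$ term cancels, leaving $S_k=\sum_{i\ge2}\mu_i^k\bigl(\varphi_i(u)/\sqrt{d_u}-\varphi_i(v)/\sqrt{d_v}\bigr)^2$. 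The pseudo-inverse satisfies $L_G^+ = D^{-1/2}\Delta_G^{+}D^{-1/2}$ on the relevant subspace, up to a projection that vanishes on $1_u-1_v$. Hence $\operatorname{R}(u,v)=\sum_{i\ge2}\frac{1}{1-\mu_i}\bigl(\cdot\bigr)^2$, and expanding $\frac{1}{1-\mu_i}=\sum_{k\ge0}\mu_i^k$ gives $\operatorname{R}(u,v)=\sum_{k\ge0}S_k$. This series converges precisely because $G$ is non-bipartite: then $|\mu_i|\le\mu<1$ for $i\ge 2$.

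\textbf{Step 3: assemble.} Split $\operatorname{R}(u,v)=\sum_{k\le r}S_k+\sum_{k>r}S_k$. For the tail, $\sum_{i\ge2}\bigl(\varphi_i(u)/\sqrt{d_u}-\varphi_i(v)/\sqrt{d_v}\bigr)^2\le 1/d_u+1/d_v\le 2/d_{\min}$ by orthonormality, so $\sum_{k>r}|S_k|\le \frac{2}{d_{\min}}\frac{\mu^{r+1}}{1-\mu}$. For the head, $(\hat A^k)_{uu}\le 1$ since $\|\hat A\|\le1$, so the diagonal parts contribute at most $\frac{2}{d_{\min}}(r+1)$. Rearranging,
\[
\sum_{k=0}^r \frac{2(\hat{A}^k)_{uv}}{\sqrt{d_ud_v}} \le \frac{2}{d_{\min}}\Bigl(r+1+\frac{\mu^{r+1}}{1-\mu}\Bigr)-\operatorname{R}(u,v).
\]
Multiplying by $\sqrt{d_ud_v}/2\le d_{\max}/2$ and combining with Step 1 yields the claim.

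\textbf{Main obstacle.} The delicate point is the bookkeeping between the normalized and unnormalized pieces. One must check that $\operatorname{R}$ computed with $L_G^+$ really equals the $\Delta_G^+$ series, since $D^{-1/2}$ does not commute with the projection onto $\1^\perp$. I would handle this by writing $1_u-1_v=D^{1/2}y$ with $y=D^{-1/2}(1_u-1_v)$. The key observation is that $(1_u-1_v)^\top\1=0$ makes $y$ orthogonal to $\varphi_1$, so the $\mu_1$ component drops out exactly.
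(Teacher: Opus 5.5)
Your proof is correct and follows the same route the paper relies on. The paper cites the argument of \citet{black2023understanding} and mirrors it in its proofs of \cref{theo:Jacobian&resistance} and \cref{theo:bipart_bound}: the Jacobian bound $(2\alpha\beta)^r\sum_{k\le r}(\hat A^k)_{uv}$ from \cref{lemma:jacobian_and_graphadjacency}, the series $\operatorname{R}(u,v)=\sum_{k\ge 0}S_k$ from \cref{lemma:resistance_with_adj}, and a split at $r$ using $(\hat A^k)_{uu}\le 1$ together with the orthonormality (Parseval) tail bound. The one step to state explicitly is that replacing $\sqrt{d_ud_v}$ by $d_{\max}$ is valid because the bracketed quantity is nonnegative. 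It is, since it dominates the nonnegative sum $\sum_{k\le r}2(\hat A^k)_{uv}/\sqrt{d_ud_v}$.
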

\vspace{-0.2cm}
\citet{black2023understanding} rewiring technique adds new edges between certain nodes, aiming to decrease the total effective resistance --- the sum of the effective resistances between all pairs of nodes. This brings significant improvements in empirical results when compared with GNNs with no rewiring. 

Rewiring is a standard method to address over-squashing by improving the connectivity of a graph and potentially easing structural bottlenecks. Its general goal is to create new connections that enable a more efficient and balanced flow of information within the network. We focus on the comparison with \citet{black2023understanding} because they also work with effective resistance but other strategies can be applied: for instance, spectral gap \citep{karhadkarfosr,banerjee2022oversquashing}, or different notions of curvature \citep{fesser2024mitigating,giraldo2023trade}.\looseness=-1 

\section{Sheaf Neural Networks via Message Passing}
We analyze over-squashing through the lens of the Jacobian, in the same fashion as \cref{theo:black}. Therefore, we need to describe the NSD model in the message passing framework. 

Recall that for a graph whose nodes $u$ have neighborhoods $N_u \!=\! \{v \!\in \! V \,|\,(v,u)\!\in \!E\}$ and features $\textbf{h}_u^{(0)}\!\! = \!\textbf{x}_u$, a message passing layer is defined as
$\textbf{h}^{(t+1)}_v \!= \!\phi_t\!\!\left(\textbf{h}^{(t)}_v\!\!, \bigoplus_{u\in N_v}\!\!\psi_t\!\left(\!\textbf{h}^{(t)}_u,\textbf{h}^{(t)}_v\!\right)\!\!\right)$, 
where $\psi_t$ is the \textbf{message function}, $\phi_t$ is the \textbf{update function}, and $\oplus$ is a permutation-invariant aggregation function.\looseness=-1

The NSD model described in \cref{NSD_model}
is a message-passing GNN where the message function is updated for each layer since it is given by $\psi_t(\textbf{h}_u,\textbf{h}_v) = D_v^{-\frac{1}{2}}\rmapf{v}{e}^{\top}\left(\rmapf{v}{e} D_v^{-\frac{1}{2}}\mathbf{h}_v-\rmapf{u}{e} D_u^{-\frac{1}{2}}\mathbf{h}_u\right)$, if we ignore the weights, and the aggregation is the sum. This simple observation provides significantly more efficient implementation and accommodates directed graph inputs, which the original code from \citet{bodnar2022neural} does not. Moreover, if $\gF$ is a flat vector bundle, then  $$D_v^{-\frac{1}{2}}\gF_v^{\top}\gF_u D_u^{-\frac{1}{2}} = \frac{1}{\sqrt{d_ud_v}}\gF_v^{\top}\gF_u = \hat{A}_{uv}\gF_v^{\top}\gF_u.$$ 
Thus, we can write the FlatNSD model, which is NSD restricted to flat vector bundles, as \looseness=-1
\begin{equation}\label{eq:MP}
   \textbf{h}^{(t+1)}_v = \phi_t\left(\textbf{h}^{(t)}_v, \sum\limits_{u\in N_v}\hat{A}_{uv}\psi_t\left(\textbf{h}^{(t)}_u\right)\right), 
\end{equation}
where $\psi_t(\textbf{h}_u) =  -\gF_u\textbf{h}_u$, up to weight matrices, $\hat{A}_{uv}$ is the normalized adjacency matrix of the graph, and $\gF_v^{\top}$ is applied by $\phi_t$, since $\gF_v$ is linear. Therefore, the FlatNSD model is a Message Passing Neural Network (MPNN) that fits in the framework explored by \citet{black2023understanding}, allowing us to apply some of their results. In the following sections, we show that restricting attention to flat vector bundles yields additional computational and theoretical benefits. \textcolor{black}{In \cref{app:sheaf_models}, we provide a summary with our main contributions, distinguishing what is novel from prior work. We also provide a comparison of FlatNSD with existing sheaf architectures to better highlight its role as a tool to empirically verify our theoretical claims rather than a novel model.}

\section{Sheaf effective resistance}\label{sec:sheaves_eff}

We introduce sheaf effective resistance (sheaf ER) and theoretically connect it with over-squashing. \looseness=-1

\begin{definition}
\label{def:sheaf_res}
    The sheaf effective resistance between nodes $u$ and $v$ is defined by \vspace{-0.1cm} $$\operatorname{R}_{\mathcal{F}}(u,v) = (\mathbbm{1}_u - \mathbbm{1}_v)^{\top}L_{\mathcal{F}}^+(\mathbbm{1}_u - \mathbbm{1}_v),$$
   where $\mathbbm{1}_v$ is a  vector of dimension $nd$ where the $d$ entries corresponding to node $v$ is $1$. $L_{\mathcal{F}}^+$ is the pseudo-inverse of the sheaf Laplacian.
\end{definition}
\vspace{-0.1cm} 
If $\mathcal{F}$ is the trivial sheaf, then $d=1$ and sheaf effective resistance coincides with effective resistance.\looseness=-1

\begin{example}\label{Ex:sheaves_reduce_resistance}
    Consider the graph with only two vertices $u$ and $v$, and an edge $e$ between them. \vspace{-0.2cm} 
    \begin{enumerate}
        \item If $\mathcal{F}$ is the trivial sheaf, then $R_{\mathcal{F}}(u,v) = R(u,v) = 1.$ \vspace{-0.35cm} 
        \item If $\gF$ is a sheaf such that all stalks are isomorphic to $\sR^2$, $\rmapf{u}{e} = \begin{pmatrix}
            \cos(x) & -\sin(x) \\ \sin(x) & \cos(x)
        \end{pmatrix}$, and 
        $\rmapf{v}{e} = \begin{pmatrix}
            \cos(y) & -\sin(y) \\ \sin(y) & \cos(y)
        \end{pmatrix}$, then $R_{\gF}(u,v) = \cos(x-y) + 1$.
    \end{enumerate}
\end{example}
In \cref{app:proofs}, we provide more details to perform the calculations above. Observe that \textbf{by simply changing the sheaf, without changing the graph, we can reduce or increase the effective resistance} depending only on the restriction maps. Larger worked examples can be found in \cref{app:examples} \looseness=-1

From now on, we focus on flat vector bundles, deriving formulas for their effective resistance and proving the relationship with over-squashing sensitivity. Importantly, this restriction is highly practical: prior work shows vector bundles optimally balance scalability and performance \citep{bodnar2022neural}, with the \textit{flat} constraint further enhancing efficiency \citep{bamberger2024bundle}. Additionally, our preliminary experiments with general sheaves revealed significant numerical instability. Thus, flat vector bundles emerge as the most robust and practical choice for sheaf-based networks.

First, observe that for flat vector bundles, the sheaf Laplacian can be written as $L_{\gF} = \rmO^{\top}(L_G \otimes I_d)\rmO$, where $\rmO = \operatorname{blockdiag}(\gF_1, \gF_2, \dots, \gF_n) \in \sR^{nd \times nd}$ \citep{bamberger2024bundle}. In the following, we consider a flat vector bundle $\gF$ over $G$ and $L_G$ the graph Laplacian. 
\begin{proposition}
\label{prop:sheaf_lap_pinv}
    The pseudo-inverse of $L_{\gF}$ is $L_{\gF}^+ = \rmO^{\top}(L_G^+ \otimes I_d)\rmO$.
\end{proposition}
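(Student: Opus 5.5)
The plan is to verify directly that the candidate $M := \rmO^{\top}(L_G^+ \otimes I_d)\rmO$ satisfies the four Moore--Penrose conditions for $L_{\gF} = \rmO^{\top}(L_G \otimes I_d)\rmO$. Uniqueness of the pseudo-inverse then gives $L_{\gF}^+ = M$. The key structural fact is that $\rmO = \operatorname{blockdiag}(\gF_1,\dots,\gF_n)$ is orthogonal, because each $\gF_v$ is an orthogonal $d\times d$ matrix. Hence $\rmO\rmO^{\top} = \rmO^{\top}\rmO = I_{nd}$.

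First, recall the standard facts about $L_G^+$. It is the Moore--Penrose inverse of the symmetric matrix $L_G$, so
\[
L_G L_G^+ L_G = L_G,\qquad L_G^+ L_G L_G^+ = L_G^+,
\]
and $L_G L_G^+$ and $L_G^+ L_G$ are symmetric. Next, use the mixed-product property of the Kronecker product, $(A\otimes I_d)(B\otimes I_d) = (AB)\otimes I_d$, together with $(A\otimes I_d)^{\top} = A^{\top}\otimes I_d$. Together these show that $L_G^+\otimes I_d$ is the pseudo-inverse of $L_G\otimes I_d$: each Penrose identity for $L_G$ tensored with $I_d$ gives the corresponding identity for the Kronecker product.

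Then conjugate by $\rmO$. For instance,
\[
L_{\gF} M L_{\gF} = \rmO^{\top}(L_G\otimes I_d)\rmO\rmO^{\top}(L_G^+\otimes I_d)\rmO\rmO^{\top}(L_G\otimes I_d)\rmO .
\]
The inner factors $\rmO\rmO^{\top}$ cancel, leaving $\rmO^{\top}\big((L_G L_G^+ L_G)\otimes I_d\big)\rmO = L_{\gF}$. The identity $M L_{\gF} M = M$ follows in exactly the same way. For the symmetry conditions, $L_{\gF}M = \rmO^{\top}\big((L_G L_G^+)\otimes I_d\big)\rmO$, which is symmetric because $L_G L_G^+$ is symmetric; the same argument applies to $M L_{\gF}$.

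None of these steps is a real obstacle. The only points to state explicitly are that orthogonality of $\rmO$ is essential, since for a general invertible block matrix the conjugation would not commute with pseudo-inversion, and that the pseudo-inverse is compatible with the Kronecker product. Since the Moore--Penrose inverse is unique, the four identities establish $L_{\gF}^+ = \rmO^{\top}(L_G^+\otimes I_d)\rmO$.
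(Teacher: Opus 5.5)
Your proof is correct and takes essentially the same route as the paper's. The paper cites the identities $(A\otimes B)^+ = A^+\otimes B^+$ and $(UAV)^+ = V^{\top}A^+U^{\top}$ for orthogonal $U,V$, and your verification of the four Penrose conditions, resting on the same fact that $\rmO$ is orthogonal, simply proves these two facts inline for this case.
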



Crucially, this provides the following formula to compute $R_{\gF}(u,v)$.

\begin{proposition}
\label{lemma:sheaf_res}
Let $\mathbf{1}_d \in \sR^d$ be a vector of ones, and $M_{uv} = \mathbf{1}_d^{\top}\gF_u^\top\gF_v\mathbf{1}_d$. Then
  $$\operatorname{R}_{\mathcal{F}}(u,v) = d((L_G^+)_{uu}+(L_G^+)_{vv}) - 2M_{uv}(L_G^+)_{uv}.$$
\end{proposition}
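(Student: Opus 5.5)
The plan is to substitute \cref{prop:sheaf_lap_pinv} directly into \cref{def:sheaf_res} and expand the quadratic form blockwise. Write $\mathbbm{1}_u = \mathbf{e}_u \otimes \mathbf{1}_d$, where $\mathbf{e}_u\in\sR^n$ is the standard basis vector of node $u$. Then
$$\operatorname{R}_{\gF}(u,v) = (\mathbbm{1}_u-\mathbbm{1}_v)^\top \rmO^\top (L_G^+\otimes I_d)\rmO(\mathbbm{1}_u-\mathbbm{1}_v).$$
Since $\rmO=\operatorname{blockdiag}(\gF_1,\dots,\gF_n)$, we get $\rmO\mathbbm{1}_u = \mathbf{e}_u\otimes \gF_u\mathbf{1}_d$, and similarly for $v$. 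Write $a_u := \gF_u\mathbf{1}_d$, so the vector becomes $\mathbf{e}_u\otimes a_u - \mathbf{e}_v\otimes a_v$.

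Next, apply the mixed-product rule $(\mathbf{e}_x\otimes a)^\top(L_G^+\otimes I_d)(\mathbf{e}_y\otimes b) = (L_G^+)_{xy}\, a^\top b$. Expanding the four terms gives
$$(L_G^+)_{uu}\,a_u^\top a_u + (L_G^+)_{vv}\,a_v^\top a_v - (L_G^+)_{uv}\,a_u^\top a_v - (L_G^+)_{vu}\,a_v^\top a_u .$$
Three facts then finish the computation:
\begin{enumerate}
\item Orthogonality of each $\gF_w$ gives $a_w^\top a_w = \mathbf{1}_d^\top\gF_w^\top\gF_w\mathbf{1}_d = \mathbf{1}_d^\top\mathbf{1}_d = d$, so the diagonal terms are $d\,(L_G^+)_{uu}$ and $d\,(L_G^+)_{vv}$.
\item $L_G^+$ is symmetric, because $L_G$ is symmetric.
\item $a_u^\top a_v = \mathbf{1}_d^\top \gF_u^\top\gF_v\mathbf{1}_d = M_{uv}$ is a scalar, so $a_v^\top a_u = M_{uv}$ as well.
\end{enumerate}
The two cross terms therefore combine into $-2M_{uv}(L_G^+)_{uv}$, which gives the stated formula.

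There is no real obstacle here. The only points needing care are bookkeeping:
\begin{itemize}
\item The block structure of $\rmO$ must act on the stacked vector $\mathbbm{1}_u$ in the ordering consistent with the Kronecker convention $L_G\otimes I_d$, where node blocks are outer and stalk coordinates inner. This is the same convention under which $L_{\gF}=\rmO^\top(L_G\otimes I_d)\rmO$ holds.
\item We should check whether the paper's convention for $\rmO$ uses $\gF_u$ or $\gF_u^\top$ in the blocks. The formula is insensitive to this up to replacing $M_{uv}$ by the analogous expression. Under the stated convention, $\rmO\mathbbm{1}_u$ picks up $\gF_u\mathbf{1}_d$, which yields exactly $M_{uv}=\mathbf{1}_d^\top\gF_u^\top\gF_v\mathbf{1}_d$.
\end{itemize}
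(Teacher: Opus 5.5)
Your proposal is correct and follows the paper's proof step for step. The paper also writes $\mathbbm{1}_u = \rve_u\otimes\mathbf{1}_d$ and uses $\rmO=\sum_i(\rve_i\rve_i^\top)\otimes\gF_i$, which resolves your convention check, to get $\rmO\mathbbm{1}_u=\rve_u\otimes(\gF_u\mathbf{1}_d)$; it then expands the quadratic form in $L_G^+\otimes I_d$ and uses $\|\gF_u\mathbf{1}_d\|_2^2=d$ and the symmetry of $L_G^+$ to merge the cross terms into $-2M_{uv}(L_G^+)_{uv}$.
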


If $\gF$ is the trivial sheaf, we obtain $\operatorname{R}(u,v) = (L_G^+)_{uu}+(L_G^+)_{vv} - 2(L_G^+)_{uv}$. So we have generalized a classical result. Importantly, this also allows us to write $\operatorname{R}_{\mathcal{F}}(u,v)$ in terms of $\operatorname{R}(u,v)$:
\begin{equation}\label{eq:resistances}
   \operatorname{R}_{\mathcal{F}}(u,v) = d\operatorname{R}(u,v) + 2(d - M_{uv})(L_G^+)_{uv}.
\end{equation}
Note that computing $R_{\gF}$ requires only a single pseudo-inverse calculation, as the sheaf effective resistance for any flat vector bundle $\gF$ depends on the pseudo-inverse of the \textbf{graph} Laplacian, which remains unchanged since we do not rewire the graph.

\Cref{theo:Jacobian&resistance} below formalizes the relationship between sheaf effective resistance and over-squashing.

\begin{theorem}\label{theo:Jacobian&resistance} Let $G$ be a non-bipartite graph, and let $u, v \in V$. Suppose 
$\|\nabla \phi_l\| \leq \alpha$ and $\max\{\|\nabla \psi_t\|, 1\}| \leq \beta$. 
Define $d_{\min} = \min\{d_u, d_v\}$, $d_{\max} = \max\{d_u, d_v\}$, and fix $d$ the dimension of the stalks. 
Let $\max\{|\mu_2|,..., |\mu_n|\} \le \mu$. If $(L_G^+)_{uv} \geq 0$, then 
\vspace{-4pt}
\begin{equation*}
   \left\|\frac{\partial h_u^{(r)}}{\partial x_v}\right\| \leq (2\alpha\beta)^{r} \frac{d_{max}}{2}\Biggl(\frac{2}{d_{min}}\Biggr(r + 1 +\frac{\mu^{r+1}}{1-\mu}\Biggr)+4(L_G^+)_{uv} - \frac{\operatorname{R}_{\gF}(u,v)}{d} \Biggr).
\end{equation*}
If $(L_G^+)_{uv} < 0$, then
\begin{equation*}
     \left\|\frac{\partial h_u^{(r)}}{\partial x_v}\right\| \leq (2\alpha\beta)^{r} \frac{d_{max}}{2}\Biggl(\frac{2}{d_{min}}\Biggr(r + 1 +\frac{\mu^{r+1}}{1-\mu}\Biggr) - \frac{\operatorname{R}_{\gF}(u,v)}{d} \Biggr).
\end{equation*}

\end{theorem}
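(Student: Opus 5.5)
We obtain this by combining \cref{theo:black} with the identity \eqref{eq:resistances}. By the discussion in the previous section, FlatNSD, and more generally any SNN of the form \cref{eq:MP}, is an MPNN in the framework of \citet{black2023understanding}. The aggregation uses the normalized adjacency $\hat{A}$ of $G$. The sheaf enters only through the orthogonal maps $\gF_u$, which we absorb into the message and update functions $\psi_t$ and $\phi_t$. Since each $\gF_u$ is orthogonal, $\|\gF_u\|=1$, so absorbing them does not enlarge the gradient bounds $\alpha$ and $\beta$. Hence the hypotheses of \cref{theo:black} hold verbatim and give
\[
\left\|\frac{\partial h_u^{(r)}}{\partial x_v}\right\| \leq (2\alpha\beta)^r \frac{d_{max}}{2}\left(\frac{2}{d_{min}}\left(r+1+\frac{\mu^{r+1}}{1-\mu}\right) - \operatorname{R}(u,v)\right).
\]
The first step is to check carefully that this absorption is legitimate, i.e.\ that the Jacobian chain-rule argument of \citet{black2023understanding} only uses $\hat{A}$ together with the norms of $\nabla\phi_t$ and $\nabla\psi_t$.

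The second step replaces $\operatorname{R}(u,v)$ by the sheaf resistance. Dividing \eqref{eq:resistances} by $d$ gives
\[
\operatorname{R}(u,v) = \frac{\operatorname{R}_{\gF}(u,v)}{d} - \frac{2(d-M_{uv})}{d}(L_G^+)_{uv},
\]
so
\[
-\operatorname{R}(u,v) = -\frac{\operatorname{R}_{\gF}(u,v)}{d} + \frac{2(d-M_{uv})}{d}(L_G^+)_{uv}.
\]
It remains to bound the coefficient $c = 2(d-M_{uv})/d$. Write $M_{uv} = \langle \gF_u\mathbf{1}_d, \gF_v\mathbf{1}_d\rangle$. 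Both vectors have norm $\sqrt d$ because the maps are orthogonal, so Cauchy--Schwarz gives $|M_{uv}|\le d$ and therefore $0\le c\le 4$.

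The last step is a case split on the sign of $(L_G^+)_{uv}$. If $(L_G^+)_{uv}\ge 0$, then $c\,(L_G^+)_{uv}\le 4(L_G^+)_{uv}$, which yields the first inequality. If $(L_G^+)_{uv}<0$, then $c\ge 0$ gives $c\,(L_G^+)_{uv}\le 0$, so this term can be dropped, which yields the second inequality. Since the prefactor $(2\alpha\beta)^r d_{max}/2$ is positive, substituting these bounds into the bracket preserves the inequality.

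The only genuinely delicate point is the first step, verifying that the sheaf-twisted messages fit \cref{theo:black} without altering the constants. That relies on orthogonality of the $\gF_u$ and on the aggregation matrix being exactly $\hat{A}$. Everything after that is algebra on \cref{lemma:sheaf_res}.
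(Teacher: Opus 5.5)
Your proof is correct and follows essentially the same route as the paper. Both arguments rest on the Jacobian bound of \citet{black2023understanding}, the identity \eqref{eq:resistances}, and the fact that $|M_{uv}|\le d$, which gives $0\le 2(d-M_{uv})/d\le 4$ before splitting on the sign of $(L_G^+)_{uv}$. The only difference is packaging: you apply \cref{theo:black} as a black box and substitute afterwards, whereas the paper inserts \eqref{eq:resistances} into \cref{lemma:resistance_with_adj} and re-runs the argument through \cref{lemma:jacobian_and_graphadjacency} and the tail bound from Black et al.'s proof. These are equivalent because the substitution is an exact identity.
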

\vspace{-4pt}
\cref{theo:Jacobian&resistance} shows that the propagation of messages can be controlled by the sheaf effective resistance. If the sheaf is fixed (the absence of a sheaf is equivalent to fixing the trivial sheaf), we must alter the graph to decrease the effective resistance. However, with varying sheaves, we can keep the original graph and decrease the sheaf effective resistance. Moreover, this provides a perspective change: \textbf{effective resistance measures communication capacity of the sheaf associated to the graph rather than to the graph itself.} \looseness=-1

To align theory with experiments, we calculate total sheaf effective resistance $\operatorname{R}_{\gF}$, i.e., the sum of the sheaf effective resistance $\operatorname{R}_{\gF}(u,v)$ for all pairs of nodes.   In the following, $\operatorname{R}$ denotes the total (graph) effective resistance.
\begin{theorem}\label{theo:totalsheafresistance_flatbundle}
    Let $\langle \cdot, \cdot \rangle_F$ denote the Frobenius inner product and $M$ as in \cref{lemma:sheaf_res}. Then
    $$\operatorname{R}_{\mathcal{F}} = dn\operatorname{Tr}(L_G^+) - \langle M, L_G^+\rangle_F = d\operatorname{R} - \langle M, L_G^+\rangle_F.$$
\end{theorem}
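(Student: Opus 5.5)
The plan is to sum the pointwise formula of \cref{lemma:sheaf_res} over all unordered pairs $\{u,v\}$, $u\neq v$, and then rearrange the result into a trace term and a Frobenius term. The convention for total resistance matters here. I take it to be the sum over unordered pairs, which is the convention of \citet{black2023understanding} and the one under which $\operatorname{R} = n\operatorname{Tr}(L_G^+)$. Write $L^+ := L_G^+$ throughout.

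\textbf{Step 1 (symmetry of $M$).} Since $L^+$ is symmetric and $M_{vu} = \mathbf{1}_d^\top \gF_v^\top \gF_u \mathbf{1}_d = (\mathbf{1}_d^\top \gF_u^\top\gF_v\mathbf{1}_d)^\top = M_{uv}$, both matrices are symmetric. Therefore
\begin{equation*}
\sum_{u<v} \operatorname{R}_{\gF}(u,v) = \tfrac12\sum_{u\neq v}\operatorname{R}_{\gF}(u,v).
\end{equation*}

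\textbf{Step 2 (the diagonal terms).} The term $d(L^+_{uu}+L^+_{vv})$ contributes $\tfrac{d}{2}\cdot 2(n-1)\operatorname{Tr}(L^+) = d(n-1)\operatorname{Tr}(L^+)$. This holds because each index appears in exactly $n-1$ ordered pairs in each of the two slots.

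\textbf{Step 3 (the cross terms).} The term $-2M_{uv}L^+_{uv}$ contributes $-\sum_{u\neq v} M_{uv}L^+_{uv}$. This is where flatness and orthogonality enter: $M_{uu} = \mathbf{1}_d^\top \gF_u^\top\gF_u\mathbf{1}_d = \mathbf{1}_d^\top\mathbf{1}_d = d$. Completing the sum with its diagonal therefore gives
\begin{equation*}
-\sum_{u\neq v} M_{uv}L^+_{uv} = -\langle M, L^+\rangle_F + d\operatorname{Tr}(L^+).
\end{equation*}
Adding Steps 2 and 3 yields $dn\operatorname{Tr}(L^+) - \langle M, L^+\rangle_F$, which is the first equality.

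\textbf{Step 4 (the second equality).} This is the classical identity $\operatorname{R} = n\operatorname{Tr}(L^+)$. I would rederive it by the same computation with the trivial sheaf, where $d=1$ and $M=\mathbf{1}\mathbf{1}^\top$. Then $\langle M, L^+\rangle_F = \mathbf{1}^\top L^+\mathbf{1} = 0$, since $G$ is connected, $\ker L_G = \operatorname{span}(\mathbf{1})$, and hence $L^+\mathbf{1}=0$. This gives $\operatorname{R} = n\operatorname{Tr}(L^+)$, and multiplying by $d$ finishes the proof.

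The only real subtlety is bookkeeping. One must fix the ordered versus unordered convention, since the ordered convention would double both sides consistently but would break the stated equality with $d\operatorname{R}$ unless $\operatorname{R}$ is defined the same way. One must also remember to absorb the diagonal of $M$ via orthogonality, $M_{uu}=d$. That diagonal correction is exactly what turns $d(n-1)$ into $dn$.
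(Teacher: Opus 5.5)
Your proof is correct and is essentially the paper's: sum the pairwise formula of \cref{lemma:sheaf_res} over $u<v$, use the symmetry of $M$ and $L_G^+$, and absorb the diagonal via $M_{uu}=d$ to turn $d(n-1)\operatorname{Tr}(L_G^+)$ into $dn\operatorname{Tr}(L_G^+)$; your Step 4 also checks $\operatorname{R}=n\operatorname{Tr}(L_G^+)$ via $L_G^+\mathbf{1}=0$, where the paper simply cites it as classical. One small correction to your closing remark: the right-hand side $dn\operatorname{Tr}(L_G^+)-\langle M,L_G^+\rangle_F$ does not depend on the pair convention, so an ordered-pair sum would make the first equality fail too (by a factor of $2$), which is why the unordered convention you used is forced.
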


Importantly, if $\gF$ is the trivial sheaf, we obtain $\operatorname{R} = n\operatorname{Tr}(L_G^+)$. This is another classical result, so our approach is well-founded. We use the above in our implementation to analyze the total sheaf effective resistance in our experiments, using $M = SS^{\top}$, where $S_u = \mathbf{1}_d^{\top}\gF_u^{\top}$. This allows us to calculate the Frobenius inner product using the cyclic property of the trace while avoiding building $M$. \textcolor{black}{Furthermore, we characterize when flat bundles strictly decrease sheaf effective resistance.
\begin{proposition}\label{prop:existence}
    Let $G$ be a connected graph with at least two nodes and $\gF$ a flat $O(d)$-bundle over $G$. Then $\frac{R_{\gF}}{d} \leq R$, with equality holding if and only if $\gF_v\mathbf{1}_d=c$ for some $c\in\sR^d$ and all $v\in V$. In particular, for every $d\geq 1$, there exists a flat $O(d)$-bundle $\gF$ such that $\frac{R_{\gF}}{d}<R$.
\end{proposition}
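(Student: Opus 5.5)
From \cref{theo:totalsheafresistance_flatbundle} we have $R_{\gF}/d = R - \tfrac{1}{d}\langle M, L_G^+\rangle_F$. So the inequality $R_{\gF}/d \le R$ is equivalent to $\langle M, L_G^+\rangle_F \ge 0$, and equality holds exactly when this inner product vanishes.

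We can rewrite the inner product as a quadratic form. With $S\in\sR^{n\times d}$ the matrix whose $u$-th row is $S_u=\mathbf{1}_d^\top\gF_u^\top=(\gF_u\mathbf{1}_d)^\top$, we have $M=SS^\top$, so
\[
\langle M, L_G^+\rangle_F=\Tr(S^\top L_G^+ S)=\sum_{j=1}^d s_j^\top L_G^+ s_j ,
\]
where $s_j\in\sR^n$ is the $j$-th column of $S$. Since $L_G$ is positive semidefinite, so is $L_G^+$, and hence every term is $\ge 0$. This gives the inequality.

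For the equality case we use that $G$ is connected. Then $\ker L_G^+=\ker L_G=\operatorname{span}(\mathbf{1}_n)$, and for a PSD matrix $s^\top L_G^+ s=0$ holds iff $s\in\ker L_G^+$. So equality holds iff every column $s_j$ is constant across nodes, i.e. $s_j=c_j\mathbf{1}_n$. This is exactly the condition that all rows agree, namely $\gF_v\mathbf{1}_d=c$ for a fixed $c=(c_1,\dots,c_d)^\top$ and all $v$, which proves both directions.

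For existence it then suffices to build a flat $O(d)$-bundle with $\gF_u\mathbf{1}_d\ne\gF_v\mathbf{1}_d$ for some pair $u,v$; this is possible since $n\ge 2$. Take $\gF_u=I_d$ for one node $u$ and $\gF_w=-I_d$ for all other nodes $w$; both are orthogonal. Then $\gF_u\mathbf{1}_d=\mathbf{1}_d\neq-\mathbf{1}_d$, which covers every $d\ge1$, including $d=1$ where $O(1)=\{\pm1\}$.

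None of these steps should be a real obstacle. The one point to handle carefully is the kernel identification $\ker L_G^+=\operatorname{span}(\mathbf{1}_n)$, which requires connectedness. We also rely on the identity $M=SS^\top$ from \cref{lemma:sheaf_res}, which holds because $M_{uv}=(\gF_u\mathbf{1}_d)^\top(\gF_v\mathbf{1}_d)$.
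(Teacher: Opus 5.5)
Your proof is correct and follows the same route as the paper. It uses the total-resistance identity $R_{\gF}=dR-\langle M,L_G^+\rangle_F$, the factorization $M=SS^\top$ giving $\Tr(S^\top L_G^+S)\ge 0$, and $\ker L_G^+=\operatorname{span}(\mathbf{1}_n)$ for connected $G$; your column-wise treatment of the equality case is more explicit than the paper's. The only difference is the existence example: the paper takes $\gF_v=\operatorname{diag}(\varepsilon_v,1,\dots,1)$ over a nontrivial partition and computes $\varepsilon^\top L_G^+\varepsilon>0$ directly, whereas you take $\pm I_d$ and invoke the equality characterization, which works equally well.
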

}

We proceed with a corollary of \cref{theo:Jacobian&resistance} regarding the total sheaf effective resistance.
\begin{corollary}\label{cor:totalsheaf}
     Let $G$ be a non-bipartite graph, and let $u, v \in V$. Suppose 
$\|\nabla \phi_l\| \leq \alpha$ and $\max\{\|\nabla \psi_t\|, 1\}| \leq \beta$. 
Let $d_{\min} = \min\{d_u, d_v\}$, $d_{\max} = \max\{d_u, d_v\}$, and $d$ be the minimum stalk dimension. \looseness=-1 
Let $\max\{|\mu_2|,..., |\mu_n|\} \le \mu$. Then
\vspace{-4pt}
\begin{equation*}
\begin{split}
\mathlarger{\mathlarger{\sum}}\limits_{u\neq v} \left\|\frac{\partial h_u^{(r)}}{\partial x_v}\right\| &\leq (2\alpha\beta)^{r} \frac{d_{max}}{2}\Biggl(\dfrac{n(n-1)}{d_{min}}\Biggr(r + 1 +\frac{\mu^{r+1}}{1-\mu}\Biggr)- \frac{\operatorname{R}_{\gF}}{d}\Biggr).
\end{split}
\end{equation*}
\end{corollary}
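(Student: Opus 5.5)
We derive the corollary by summing the pointwise bound of \cref{theo:Jacobian&resistance} over all ordered pairs $u\neq v$ and then rewriting the resulting sum of sheaf effective resistances with \cref{theo:totalsheafresistance_flatbundle}. The cleanest route avoids the case split on the sign of $(L_G^+)_{uv}$. Instead of using the two displayed bounds separately, we go back to the underlying per-pair inequality. This is the bound of \cref{theo:black}, which holds for FlatNSD because \cref{eq:MP} places the model in the MPNN framework of \citet{black2023understanding}. We then substitute \cref{eq:resistances}, which after dividing by $d$ reads
\[
\operatorname{R}(u,v) = \frac{\operatorname{R}_{\gF}(u,v)}{d} - \frac{2(d-M_{uv})}{d}(L_G^+)_{uv}.
\]
This gives, for each pair,
\[
\left\|\frac{\partial h_u^{(r)}}{\partial x_v}\right\| \leq (2\alpha\beta)^{r} \frac{d_{max}}{2}\left(\frac{2}{d_{min}}\left(r+1+\frac{\mu^{r+1}}{1-\mu}\right) - \frac{\operatorname{R}_{\gF}(u,v)}{d} + \frac{2(d-M_{uv})}{d}(L_G^+)_{uv}\right).
\]

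Next we sum over ordered pairs $u\neq v$. There are $n(n-1)$ such pairs, so the constant term becomes $\frac{2n(n-1)}{d_{min}}(\cdots)$. Up to the paper's convention of counting each unordered pair once (which halves this and accounts for the stated factor $n(n-1)/d_{min}$), we fix the counting convention so that the resistance term sums to exactly $\operatorname{R}_{\gF}/d$ by \cref{theo:totalsheafresistance_flatbundle}. We also take $d_{max}$ and $d_{min}$ as global extremes over the pairs, so that they can be pulled out of the sum. With $d$ the minimum stalk dimension, the bound $1/d$ is uniform.

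The remaining correction is $\sum_{u\neq v}(d-M_{uv})(L_G^+)_{uv}$, and we must show it is nonpositive, or absorb it. Write $S_u=\gF_u\mathbf{1}_d$, so that $M_{uv}=\langle S_u,S_v\rangle$ and $\|S_u\|^2=d$. Then $d-M_{uv}=\tfrac12\|S_u-S_v\|^2\ge 0$. It follows that
\[
\sum_{u,v}(d-M_{uv})(L_G^+)_{uv} = d\,\mathbf{1}^\top L_G^+\mathbf{1} - \langle M, L_G^+\rangle_F = -\langle M, L_G^+\rangle_F,
\]
because $L_G^+\mathbf{1}=0$. Moreover $\langle M,L_G^+\rangle_F=\Tr(S^\top L_G^+ S)\ge 0$, since $L_G^+$ is positive semidefinite. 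The diagonal terms vanish because $M_{uu}=d$. So the full off-diagonal correction equals $-\langle M,L_G^+\rangle_F\le 0$, and it can be dropped, which yields the stated inequality.

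The main obstacle is the bookkeeping: matching the pair-counting convention between the sum of the constant term and the definition of total resistance, and justifying the uniform replacement of $d_u,d_v$ by global $d_{max},d_{min}$. The PSD argument above that disposes of the $(L_G^+)_{uv}$ correction is the key step that makes the corollary hold without the case distinction.
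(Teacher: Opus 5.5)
Your argument is correct, and it takes a genuinely different route from the paper's. The paper sums the two sign-dependent bounds of \cref{theo:Jacobian&resistance} over all $u\neq v$. For the $+4(L_G^+)_{uv}$ version it uses that the rows of $L_G^+$ sum to zero to get $4\sum_{u\neq v}(L_G^+)_{uv}=-4\Tr(L_G^+)\le 0$, weakens $2\operatorname{R}_{\gF}/d$ to $\operatorname{R}_{\gF}/d$, and then repeats this for the other case. That needs only $|M_{uv}|\le d$, but it implicitly applies a single case to every pair. Off-diagonal entries of $L_G^+$ can have both signs: in the path of \cref{app:examples}, $(L_G^+)_{aw}=1$ and $(L_G^+)_{ab}=-8/7$. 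Summing the bounds that actually apply then leaves an extra $4\sum_{(L_G^+)_{uv}\ge 0}(L_G^+)_{uv}\ge 0$ that neither of the paper's computations controls.

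By returning to \cref{theo:black} with the exact identity \cref{eq:resistances}, you never throw information away pair by pair. The correction sums to $-\tfrac{1}{d}\langle M,L_G^+\rangle_F$ over unordered pairs, and this is $\le 0$ by positive semidefiniteness of $L_G^+$. In effect your corollary is the summed graph bound of \citet{black2023understanding} combined with $\operatorname{R}=(\operatorname{R}_{\gF}+\langle M,L_G^+\rangle_F)/d\ge\operatorname{R}_{\gF}/d$, which is \cref{prop:existence}. This works for every sign pattern of $L_G^+$ and makes clear that the sheaf enters only through that one inequality.

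Two bookkeeping points should be made explicit. First, your unordered-pair convention must govern the left-hand side as well. The per-pair bound is symmetric in $u,v$, so summing over unordered pairs bounds $\sum_{u<v}\|\partial h_u^{(r)}/\partial x_v\|$ by exactly the stated right-hand side. For the literal ordered sum over $u\neq v$, the same argument gives twice it. The paper's own computation mixes the two conventions: ordered for the $-2\operatorname{R}_{\gF}/d$ and $-4\Tr(L_G^+)$ terms, unordered for the constant $n(n-1)/d_{\min}$.

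Second, you replace the pair-dependent $\max\{d_u,d_v\}$ by a global maximum. You must do this before summing, so that the correction keeps uniform weights. This step requires each per-pair bracket to be nonnegative. That holds because, in the derivation behind \cref{theo:black}, the bracket dominates $\frac{2}{\sqrt{d_ud_v}}\sum_{j=0}^{r}(\hat{A}^j)_{uv}\ge 0$.
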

\vspace{-4pt}
Importantly, FlatNSD does not require computing the sheaf ER or any pseudo-inverse. Rather, we use the sheaf ER as an analytical tool to understand the performance of our FlatNSD model, since $\operatorname{R}_{\gF}(u,v)$ is related to $\left\|\nicefrac{\partial h_u^{(r)}}{\partial x_v}\right\|$, which measures sensitivity to over-squashing. \looseness=-1

\textcolor{black}{
In the previous results, we assumed non-bipartite graphs, similar to \cite{black2023understanding}. We now provide extensions relating graph ER to the Jacobian of MP-based GNNs as in \cref{eq:MP} to bipartite graphs, with natural extensions to sheaf ER. We begin by constructing a formula for the Laplacian pseudo-inverse on bipartite graphs.
\begin{lemma}\label{lemma:bipart_pseudo}
    Let $G$ be a connected bipartite graph. Then $\Delta_G^+ = \frac{1}{2} P_{-} + \sum_{k=0}^{\infty} (\hat{A}^k - P_+ - (-1)^kP_-)$, where $P_+$ and $P_-$ are the outer product of the eigenvectors with eigenvalues $1$ and $-1$, respectively.
\end{lemma}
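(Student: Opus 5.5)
The plan is to diagonalize $\hat{A}$ and compare both sides of the identity eigenvalue by eigenvalue. Since $\hat{A} = D^{-1/2}AD^{-1/2}$ is real symmetric, it has an orthonormal eigenbasis $\{\phi_i\}$ with eigenvalues $\lambda_i \in [-1,1]$. We write $\hat{A} = \sum_i \lambda_i \phi_i\phi_i^{\top}$, and hence $\Delta_G = I - \hat{A} = \sum_i (1-\lambda_i)\phi_i\phi_i^{\top}$.

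The first step records the standard spectral facts for a connected bipartite graph.
\begin{itemize}
\item The eigenvalue $1$ is simple, with eigenvector proportional to $D^{1/2}\mathbf{1}$. This gives $P_+$, the projection onto $\ker \Delta_G$.
\item The eigenvalue $-1$ is also simple, with eigenvector $D^{1/2}s$, where $s$ is the $\pm 1$ sign vector of the bipartition. This gives $P_-$.
\item Every other eigenvalue satisfies $|\lambda_i|<1$.
\end{itemize}
Simplicity is not essential to the argument: it suffices to read $P_\pm$ as the orthogonal projections onto the $\pm 1$ eigenspaces, i.e.\ $P_\pm = \sum_{\lambda_i=\pm1}\phi_i\phi_i^{\top}$.

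The second step writes the pseudo-inverse explicitly. For a symmetric matrix, $\Delta_G^+$ inverts the nonzero eigenvalues and kills the kernel, so
\begin{equation*}
\Delta_G^+ = \sum_{\lambda_i\neq 1} \frac{1}{1-\lambda_i}\phi_i\phi_i^{\top} = \frac{1}{2}P_- + \sum_{|\lambda_i|<1}\frac{1}{1-\lambda_i}\phi_i\phi_i^{\top},
\end{equation*}
where the $\frac12 P_-$ term comes from the eigenvalue $1-(-1)=2$.

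The third step handles the series. Using $\hat{A}^k = \sum_i \lambda_i^k\phi_i\phi_i^{\top}$, with the convention $\hat{A}^0=I$, each term becomes
\begin{equation*}
\hat{A}^k - P_+ - (-1)^kP_- = \sum_{|\lambda_i|<1}\lambda_i^k\phi_i\phi_i^{\top}.
\end{equation*}
This holds because the $\lambda=1$ part of $\hat{A}^k$ is exactly $P_+$ and the $\lambda=-1$ part is exactly $(-1)^kP_-$. Set $\rho = \max_{|\lambda_i|<1}|\lambda_i| < 1$. Then the partial sums converge absolutely in operator norm, since the $k$-th term has norm at most $\rho^k$ and there are finitely many eigenvalues. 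The geometric series gives $\sum_{k\ge0}\lambda_i^k = (1-\lambda_i)^{-1}$ for each remaining eigenvalue. Summing therefore yields $\sum_{|\lambda_i|<1}(1-\lambda_i)^{-1}\phi_i\phi_i^{\top}$, and adding $\frac12 P_-$ recovers the expression for $\Delta_G^+$ from the second step.

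The only delicate point is the spectral bookkeeping in the first step. One must check that all eigenvalues of modulus one are exactly those absorbed into $P_\pm$, since otherwise the series diverges. This follows from $\|\hat{A}\|\le 1$ together with Perron–Frobenius for connected graphs: $-1$ is an eigenvalue if and only if the graph is bipartite, and both $\pm1$ have multiplicity one when $G$ is connected.
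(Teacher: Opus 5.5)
Your proposal is correct and follows essentially the same route as the paper. Both arguments write $\hat{A} = P_+ - P_- + \sum_i \mu_i P_i$ and invert $\Delta_G = 2P_- + \sum_i (1-\mu_i)P_i$ off its kernel. Both then expand each $1/(1-\mu_i)$ as a geometric series and match it termwise against $\hat{A}^k - P_+ - (-1)^k P_-$.

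The differences are only in emphasis. You justify $|\mu_i|<1$ for the remaining eigenvalues via $\|\hat{A}\|\le 1$ and Perron--Frobenius, and you check convergence; the paper simply asserts $|\mu_i|<1$. In return, the paper explicitly verifies that $D^{1/2}\mathbf{1}$ and $D^{1/2}\varepsilon$ are the $\pm 1$ eigenvectors, which you quote as standard facts.
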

The next step is to find a formula for the effective resistance of bipartite graphs. 
\begin{proposition}\label{prop:bipart_res}
    Let $G$ be a connected bipartite graph, $b_{uv} = \mathbbm{1}_u - \mathbbm{1}_v$, $z_{uv} = D^{-1/2}b_{uv}$, and $\gamma_{uv} = z_{uv}^{\top}P_-z_{uv}$. Then $$R(u,v) = \frac{\gamma_{uv}}{2} + \sum_{k=0}^\infty \left( \frac{(\hat{A}^k)_{uu}}{d_u} + \frac{(\hat{A}^k)_{vv}}{d_v} - \frac{2(\hat{A}^k)_{uv}}{\sqrt{d_ud_v}} - (-1)^k\gamma_{uv} \right)$$
\end{proposition}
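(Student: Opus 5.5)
We prove \cref{prop:bipart_res} by reducing effective resistance on $G$ to a quadratic form in the pseudo-inverse of the normalized Laplacian $\Delta_G$, and then substituting the series from \cref{lemma:bipart_pseudo}. The reduction is the standard identity
\[
R(u,v) = b_{uv}^\top L_G^+ b_{uv} = z_{uv}^\top \Delta_G^+ z_{uv}, \qquad z_{uv}=D^{-1/2}b_{uv}.
\]
Its justification takes some care, because $L_G = D^{1/2}\Delta_G D^{1/2}$ does not simply give $L_G^+ = D^{-1/2}\Delta_G^+ D^{-1/2}$. The pseudo-inverse does not factor through a non-orthogonal conjugation.

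\textbf{Step 1: the reduction.} I would argue via the linear system instead of the pseudo-inverse factorization.
\begin{itemize}
\item Since $G$ is connected, $\ker L_G = \operatorname{span}(\mathbf{1})$ and $b_{uv}\perp \mathbf{1}$. Hence $x = L_G^+ b_{uv}$ solves $L_G x = b_{uv}$, and $R(u,v) = b_{uv}^\top x$ for any solution $x$, since solutions differ by multiples of $\mathbf{1}$.
\item Set $y = D^{1/2}x$. Then $\Delta_G y = D^{-1/2}L_G x = z_{uv}$.
\item The kernel of $\Delta_G$ is spanned by $D^{1/2}\mathbf{1}$, and $z_{uv}^\top D^{1/2}\mathbf{1} = b_{uv}^\top \mathbf{1} = 0$. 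So $z_{uv}$ lies in the range of $\Delta_G$.
\item Therefore $y = \Delta_G^+ z_{uv} + c\,D^{1/2}\mathbf{1}$ for some scalar $c$, and
\[
R(u,v) = b_{uv}^\top D^{-1/2} y = z_{uv}^\top \Delta_G^+ z_{uv} + c\,z_{uv}^\top D^{1/2}\mathbf{1} = z_{uv}^\top \Delta_G^+ z_{uv}.
\]
\end{itemize}

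\textbf{Step 2: substitute the series.} By \cref{lemma:bipart_pseudo},
\[
R(u,v) = \tfrac12 z_{uv}^\top P_- z_{uv} + \sum_{k\ge 0} z_{uv}^\top\bigl(\hat A^k - P_+ - (-1)^k P_-\bigr) z_{uv}.
\]
We evaluate each piece.
\begin{itemize}
\item The first term is $\gamma_{uv}/2$ by definition.
\item $P_+$ is the projector onto $D^{1/2}\mathbf{1}/\|D^{1/2}\mathbf{1}\|$, so $z_{uv}^\top P_+ z_{uv} = 0$ by the orthogonality noted in Step 1.
\item Expanding $z_{uv} = \mathbbm{1}_u/\sqrt{d_u} - \mathbbm{1}_v/\sqrt{d_v}$ and using the symmetry of $\hat A^k$ gives
\[
z_{uv}^\top \hat A^k z_{uv} = \frac{(\hat A^k)_{uu}}{d_u} + \frac{(\hat A^k)_{vv}}{d_v} - \frac{2(\hat A^k)_{uv}}{\sqrt{d_ud_v}}.
\]
\item The $P_-$ term contributes $(-1)^k\gamma_{uv}$.
\end{itemize}
Collecting these yields the stated formula.

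\textbf{Main obstacle: pulling the quadratic form inside the sum.} This requires the series in \cref{lemma:bipart_pseudo} to converge, and I would take that convergence as given by the lemma. To make it explicit, I would diagonalize $\hat A$. Write $\hat A = \sum_i \lambda_i \phi_i\phi_i^\top$ with $\lambda_1 = 1$ and $\lambda_n = -1$; for a connected bipartite graph these are simple eigenvalues. Then
\[
\hat A^k - P_+ - (-1)^k P_- = \sum_{|\lambda_i|<1} \lambda_i^k \phi_i\phi_i^\top,
\]
which decays geometrically. So the scalar series converges absolutely, and exchanging the sum with the finite-dimensional quadratic form is justified.
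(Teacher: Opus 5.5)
Your proposal is correct and follows the same route as the paper. Both write $R(u,v)=z_{uv}^{\top}\Delta_G^{+}z_{uv}$, substitute the series from \cref{lemma:bipart_pseudo}, drop the $P_+$ term because $z_{uv}\perp D^{1/2}\mathbf{1}$, and expand $z_{uv}^{\top}\hat{A}^k z_{uv}$ entrywise. The paper only asserts the reduction and the termwise exchange, so your linear-system justification (rightly noting $L_G^+\neq D^{-1/2}\Delta_G^+D^{-1/2}$ in general) and your spectral argument for absolute convergence add rigor without changing the approach.
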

Now, we proceed to relate the above formula for the effective resistance with the Jacobian bound.
\begin{theorem}\label{theo:bipart_bound}
    Let $G$ be a connected bipartite graph. Let $u, v \!\in\! V = U \!\sqcup\! W$, $\|\nabla \phi_l\| \!\leq\! \alpha$ and $\max\{\|\nabla \psi_l\|, 1\| \leq \beta$, $d_{\min} = \min\{d_u,d_v\}$, $d_{\max} = \max\{d_u,d_v\}$, and $\overline{\mu} = \max_{i\neq1} |\mu_i| < 1$. Then
    $$\left\|\frac{\partial h_u^{(r)}}{\partial x_v}\right\| \leq (2\alpha\beta)^r \frac{d_{\max}}{2}\left(\frac{2}{d_{\min}}\left(r + 1 + \frac{\overline{\mu}^{r+1}}{1 - \overline{\mu}}\right) + \frac{1}{2}\gamma_{uv} - R(u,v)\right).$$
\end{theorem}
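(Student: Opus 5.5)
We follow the proof of \cref{theo:black} in \citet{black2023understanding}, changing only the spectral step where non-bipartiteness was used, and then plug in \cref{prop:bipart_res} in place of the non-bipartite resistance formula.

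\textbf{Step 1: Jacobian bound by powers of the normalized adjacency.} For the message-passing model in \cref{eq:MP}, the usual chain-rule/induction argument (as in \citet{toppingunderstanding,black2023understanding}) bounds the Jacobian by sums of powers of $\hat A$. Each layer contributes a factor $\alpha\beta$ and a matrix $(I+\hat A)$. This gives
$$\left\|\frac{\partial h_u^{(r)}}{\partial x_v}\right\| \le (\alpha\beta)^r \big((I+\hat A)^r\big)_{uv} = (\alpha\beta)^r\sum_{k=0}^r \binom{r}{k}(\hat A^k)_{uv} \le (2\alpha\beta)^r \sum_{k=0}^r \frac{|(\hat A^k)_{uv}|}{1}.$$
In practice we keep the sum $\sum_{k\le r}(\hat A^k)_{uv}$, as in Black et al. This step is independent of bipartiteness and is taken verbatim.

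\textbf{Step 2: Isolate the off-diagonal term via \cref{prop:bipart_res}.} Rearranging the formula of \cref{prop:bipart_res} gives
$$\sum_{k=0}^{\infty}\frac{2(\hat A^k)_{uv}}{\sqrt{d_ud_v}} = \frac{\gamma_{uv}}{2} - R(u,v) + \sum_{k=0}^\infty\Big(\frac{(\hat A^k)_{uu}}{d_u}+\frac{(\hat A^k)_{vv}}{d_v}-(-1)^k\gamma_{uv}\Big).$$
We split each series at $k=r$.

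For the head ($k\le r$), we use $|(\hat A^k)_{uu}|\le 1$. This bounds $\sum_{k\le r}\big((\hat A^k)_{uu}/d_u+(\hat A^k)_{vv}/d_v\big)$ by $\frac{2}{d_{\min}}(r+1)$. The alternating $\gamma$ terms and the $P_\pm$ contributions have to be grouped with the diagonal entries so that they cancel or are absorbed.

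For the tail ($k>r$), we group terms as in \cref{lemma:bipart_pseudo}, i.e.\ as $z^\top(\hat A^k - P_+ - (-1)^kP_-)z$ with $z$ a unit-scaled indicator. On the complement of the $\pm1$ eigenspaces the spectral radius is at most $\overline\mu<1$. The tail is therefore bounded by $\frac{2}{d_{\min}}\sum_{k>r}\overline\mu^k = \frac{2}{d_{\min}}\frac{\overline\mu^{r+1}}{1-\overline\mu}$.

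\textbf{Step 3: Combine.} Multiplying through by $\sqrt{d_ud_v}/2\le d_{\max}/2$ bounds $\sum_{k\le r}(\hat A^k)_{uv}$ by
$$\frac{d_{\max}}{2}\Big(\frac{2}{d_{\min}}\Big(r+1+\frac{\overline\mu^{r+1}}{1-\overline\mu}\Big)+\frac{\gamma_{uv}}{2}-R(u,v)\Big).$$
Inserting this into Step 1 gives the claimed inequality.

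\textbf{Main obstacle.} The delicate part is the bookkeeping in Step 2. The $P_+$ component appears identically in $uu$, $vv$ and $uv$ and cancels in $z_{uv}$, since $z_{uv}^\top P_+z_{uv}=0$ because $P_+\propto D^{1/2}\mathbf 1\mathbf 1^\top D^{1/2}$. The $P_-$ component does not cancel: it is the term $(-1)^k\gamma_{uv}$, which is not summable on its own. It therefore has to stay paired with the $\hat A^k$ terms inside both the head and the tail.

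I would handle this by writing everything as $z_{uv}^\top(\hat A^k - P_+ - (-1)^kP_-)z_{uv}$ for $k>r$. For $k\le r$, I would bound $z_{uv}^\top\hat A^k z_{uv}$ diagonal parts directly and show that the leftover head sum $\sum_{k\le r}(-1)^k\gamma_{uv}$, which lies in $\{0,\gamma_{uv}\}$ and has sign $\ge0$, enters with the correct sign or is dominated by the $\frac{2}{d_{\min}}(r+1)$ term. Cross terms between $\hat A^k$ and $P_-$ need the sign pattern $(P_-)_{uv}=\pm\sqrt{d_ud_v}/(2m)$ depending on whether $u,v$ lie on the same side of $U\sqcup W$. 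I expect this to require a short case split on that.
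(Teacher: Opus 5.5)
Your proposal is correct and follows the same route as the paper:
\begin{itemize}
\item Lemma 3.2 of \citet{black2023understanding};
\item the identity of \cref{prop:bipart_res}, split at $k=r$;
\item the tail written as $z_{uv}^{\top}(\hat{A}^k-P_+-(-1)^kP_-)z_{uv}=\sum_{i=2}^{n-1}\mu_i^k(q_i^{\top}z_{uv})^2$ and bounded by $\|z_{uv}\|^2\,\overline{\mu}^{r+1}/(1-\overline{\mu})$;
\item the head bounded via $(\hat{A}^k)_{uu}\le 1$, with the leftover $-\gamma_{uv}\sum_{k\le r}(-1)^k\le 0$ dropped so that only $\tfrac12\gamma_{uv}$ survives.
\end{itemize}

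Two bookkeeping remarks. First, your Step 2 display should not separate the infinite series, since each diverges on its own ($(\hat{A}^k)_{uv}$ does not tend to $0$). Split the combined convergent series at $k=r$ first, and only then separate the finite head. Second, no case split on $(P_-)_{uv}$ is needed, because $\hat{A}^k-P_+-(-1)^kP_-=\sum_{i=2}^{n-1}\mu_i^kP_i$ is diagonal in the eigenbasis and produces no cross terms.
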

The corollary for the total effective resistance follows from this theorem, where the additional term accounting for bipartiteness becomes $\frac{4|U||W|}{\operatorname{vol}(G)}$ (see \cref{remark:bipart_corollary}). \\
\begin{remark}[Geometric interpretation under isospectrality]
    A natural question arises from the decomposition $L_{\gF}=\rmO^{\top}(L_G\otimes I_d)\rmO$: how can FlatNSD mitigate over-squashing if its diffusion operator has essentially the same spectral properties as the graph Laplacian? The key lies in the rotations performed by the restriction maps. Define $\xi=\frac{\mathbf{1}_d}{\sqrt d}$ and $g_v=\gF_v\xi\in\sS^{d-1}$. Then
    \begin{equation*}
        M_{uv} = \mathbf{1}_d^{\top}\gF_u^{\top}\gF_v\mathbf{1}_d = d\xi^{\top}\gF_u^{\top}\gF_v\xi = dg_u^{\top}g_v = d \cos \theta_{uv},
    \end{equation*}
    where $\theta_{uv}$ is the angle between $g_u$ and $g_v$. Thus, \cref{lemma:sheaf_res} can be rewritten as
    $\operatorname{R}_{\mathcal F}(u,v)
    =d\big((L_G^+)_{uu}+(L_G^+)_{vv}-2\cos(\theta_{uv})(L_G^+)_{uv}\big).$
    When $g_u=g_v$, we recover the effective resistance of the trivial $d$-dimensional flat bundle. If \(g_u\perp g_v\), the cross-term disappears, reducing effective resistance relative to the aligned configuration when \((L_G^+)_{uv}<0\). In this case, \(g_u=-g_v\) gives the largest reduction. When \((L_G^+)_{uv}>0\), alignment minimizes the effective resistance. \looseness=-1
\end{remark}
}  
\vspace{-2pt}
\begin{mybox}{}
\vspace{-5pt}
    \textbf{Conclusion.} Sheaf effective resistance upper bounds over-squashing, as measured by the Jacobian sensitivity. This suggests that learned sheaves can enhance representational capacity compared to the trivial sheaves provided by the data.
    \vspace{-6pt}
\end{mybox}


\section{Synthetic Experiments}\label{sec:experiments}

In this section, we test FlatNSD on three different synthetic datasets designed to address over-squashing. Our goal is to endorse that a simple SNN such as FlatNSD can alleviate it. \textcolor{black}{We provide experiments on the short-range over-squashing regime \citep{mishayev2025short} in \cref{app:short_range_ovsq}}.

\begin{wrapfigure}[15]{r}{0.5\textwidth}
\vspace{-5pt}
\includegraphics[width=.98\linewidth]{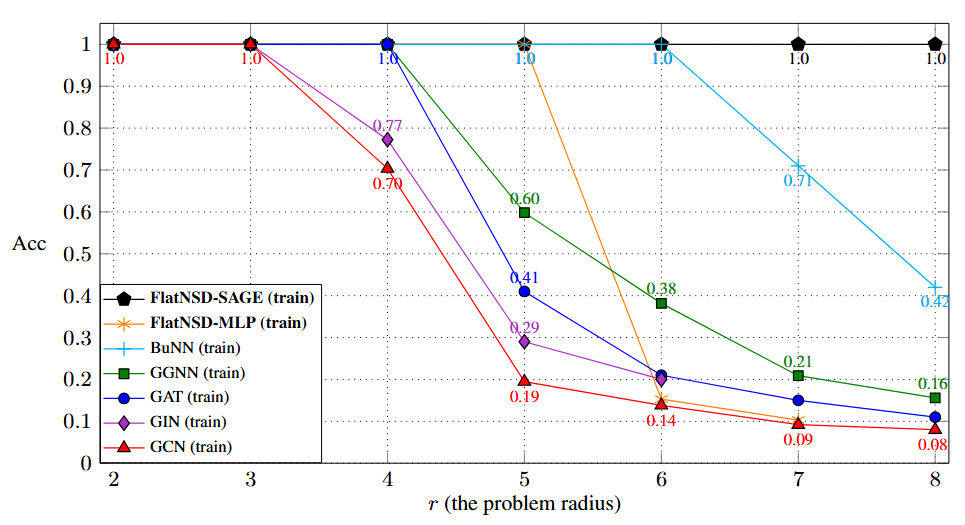}
\vspace*{-2mm}
\caption{Training accuracy for the TreeNeighborsMatch task. Results for GCN, GIN, GAT, and GGNN are from \cite{alon2020bottleneck}. Results for BuNN are from \cite{bamberger2024bundle}. Only FlatNSD-SAGE achieves perfect accuracy for all $r$. \looseness=-1}
\label{fig:tree-by-depth}
\vspace{-25pt}
\end{wrapfigure}
\textbf{TreeNeighborsMatch.} To test FlatNSD ability to deal with over-squashing, we reproduce the TreeNeighborsMatch problem proposed by \citet{alon2020bottleneck}, using the same framework. The TreeNeighborsMatch is a synthetic dataset aimed at evaluating a GNN’s ability to handle over-squashing, assessed through training performance. The underlying graph is a binary tree of depth $r$, and the task requires propagating information from the leaves to the root. Both the root and each leaf are attached to a set of \emph{blue neighbors}. The target label is determined by the unique leaf whose number of blue neighbors matches that of the root. We provide the parameters used for this task in \cref{tab:sweep-configs}.  
In \cref{fig:tree-by-depth}, we evaluate FlatNSD under two parameterizations of the restriction maps: an MLP (as in NSD) and SAGE (as in BuNN and CSNN \cite{ribeiro2025cooperative}). FlatNSD-SAGE outperforms all classical baselines (GCN, GIN, GAT, GGNN) as well as BuNN, matching the perfect accuracy of CSNN, a concurrent sheaf-based model. As a point of comparison, NSD with orthogonal maps attains $0.91$ accuracy at $r=3$ and $0.05$ at $r=4$. \textcolor{black}{To better understand how and why FlatNSD improves performance on this task, we provide an additional experiment with a controlled intervention on this task in \cref{app:intervention}}\looseness=-1

\begin{mybox}{}
\vspace{-6pt}
\textbf{Conclusion.} Restriction map parametrization plays a major role in NSD's performance and its ability to mitigate over-squashing. In particular, using flat bundles and structure-aware parameterizations (e.g., via GNNs) significantly boosts NSD's performance in long-range dependency tasks. \looseness=-1
\vspace{-7pt}
\end{mybox}
\begin{wrapfigure}[16]{r}{0.5\textwidth}
\vspace{-19pt}
\includegraphics[width=.98\linewidth]{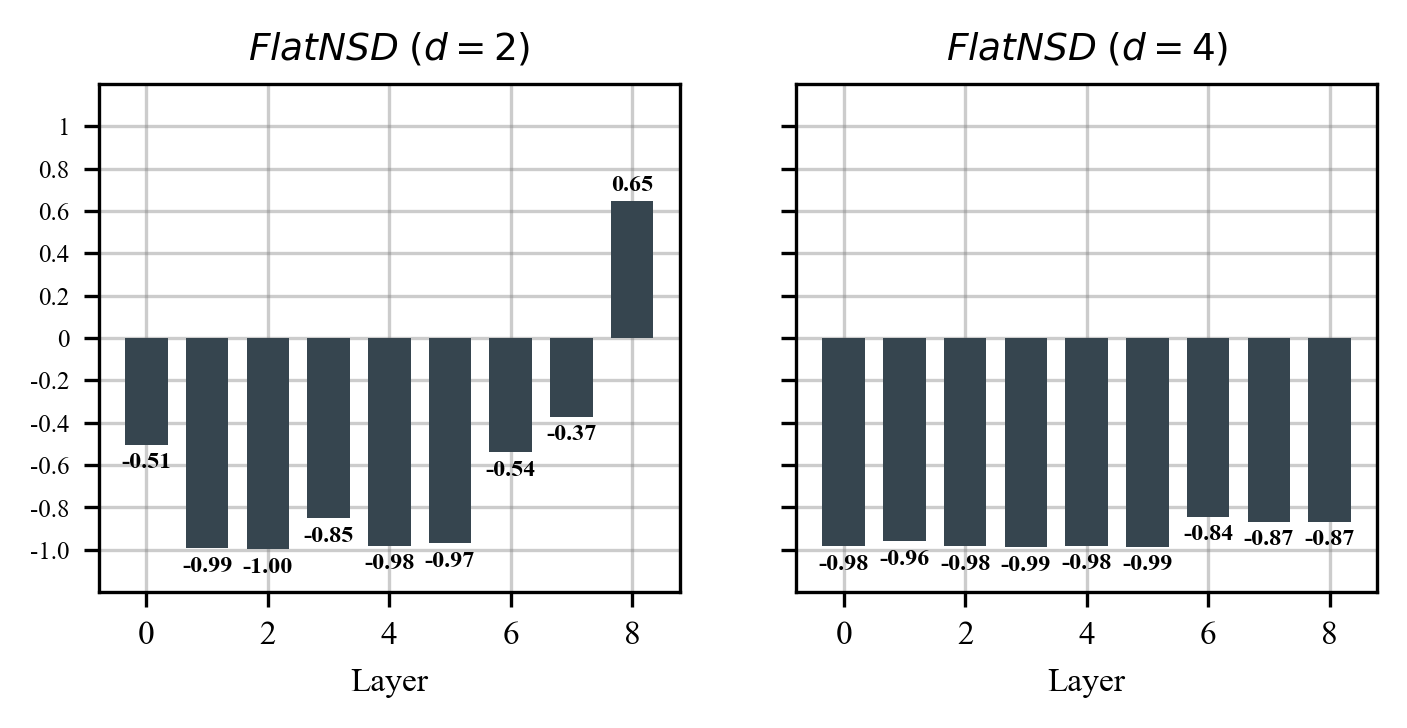}
\vspace*{-2.8mm}
    \caption{Correlation between Train Accuracy and Sheaf ER on FlatNSD-SAGE with $r=8$ for $d = 2$ and $d = 4$. The network struggles to find an optimal sheaf when $d=2$, leading to higher variance for the ER throughout the layers. For $d=4$, 
    the effective resistance is more uniform throughout the layers, while also achieving faster convergence.}
    \label{fig:sprm-tnm}
    \vspace{-25pt}
\end{wrapfigure}

\textbf{Effective Resistance.} By \cref{theo:Jacobian&resistance}, lower sheaf ER reduces sensitivity to over-squashing. Consequently, we expect a strong inverse relationship between accuracy and total ER in the TreeNeighborsMatch task, measured via Spearman correlation (\cref{fig:sprm-tnm}). For $r=8$, FlatNSD-SAGE reaches 100\% accuracy using both $d=2$ and $d=4$. However, $d=4$ converges much faster (25 steps vs. 143 for $d=2$) and maintains a strong inverse correlation across all layers. The weaker correlation for $d=2$ reflects its slower convergence, indicating the model struggles to approximate an optimal sheaf. Full results for all $r$ are detailed in \cref{app:corr}, together with Spearman correlation between Jacobian norm and total sheaf ER.

\begin{wraptable}[13]{r}{0.5\textwidth} 
\vspace{-1em} 
\centering
\scriptsize
\setlength{\tabcolsep}{3.5pt}
\captionsetup{width=6cm}
\caption{Performance (MAE) of FlatNSD. 
Darker green indicates lower error.\looseness=-1}
\label{tab:test_results_synt_compact}
\vspace{-7pt}
\resizebox{0.5\textwidth}{!}{
    \begin{tabular}{lccc}
    \toprule
    \textbf{Model} &
    \texttt{diam} $\,\downarrow$ &
    \texttt{ecc} $\,\downarrow$ &
    \texttt{sssp} $\,\downarrow$ \\
    \midrule
    A-DGN     
    & \cellcolor{bestgreen!70}$1.151_{\scriptstyle\,\pm\,0.038}$ 
    & \cellcolor{bestgreen!50}$4.981_{\scriptstyle\,\pm\,0.037}$ 
    & \cellcolor{bestgreen!50}$1.176_{\scriptstyle\,\pm\,0.140}$ \\

    DRew      
    & \cellcolor{bestgreen!60}$1.243_{\scriptstyle\,\pm\,0.047}$ 
    & \cellcolor{bestgreen!90}$4.651_{\scriptstyle\,\pm\,0.020}$ 
    & \cellcolor{bestgreen!40}$1.279_{\scriptstyle\,\pm\,0.011}$ \\

    GraphCON  
    & \cellcolor{bestgreen!10}$2.969_{\scriptstyle\,\pm\,0.189}$ 
    & \cellcolor{bestgreen!10}$5.474_{\scriptstyle\,\pm\,0.001}$  
    & \cellcolor{bestgreen!5}$5.734_{\scriptstyle\,\pm\,0.011}$ \\

    GCN       
    & \cellcolor{bestgreen!5}$3.832_{\scriptstyle\,\pm\,0.262}$  
    & \cellcolor{bestgreen!20}$5.233_{\scriptstyle\,\pm\,0.034}$ 
    & \cellcolor{bestgreen!20}$2.102_{\scriptstyle\,\pm\,0.094}$ \\

    GCNII     
    & \cellcolor{bestgreen!30}$2.005_{\scriptstyle\,\pm\,0.093}$ 
    & \cellcolor{bestgreen!10}$5.241_{\scriptstyle\,\pm\,0.030}$ 
    & \cellcolor{bestgreen!10}$2.128_{\scriptstyle\,\pm\,0.429}$ \\

    GIN       
    & \cellcolor{bestgreen!40}$1.630_{\scriptstyle\,\pm\,0.161}$ 
    & \cellcolor{bestgreen!60}$4.869_{\scriptstyle\,\pm\,0.092}$ 
    & \cellcolor{bestgreen!5}$2.234_{\scriptstyle\,\pm\,0.271}$ \\

    GPS       
    & \cellcolor{bestgreen!20}$2.160_{\scriptstyle\,\pm\,0.098}$ 
    & \cellcolor{bestgreen!80}${4.758}_{\scriptstyle\,\pm\,0.021}$ 
    & \cellcolor{bestgreen!70}$0.472_{\scriptstyle\,\pm\,0.050}$ \\

    {GRIT} 
    & \cellcolor{bestgreen!90}$1.014_{\scriptstyle\,\pm\,0.046}$ 
    & \cellcolor{bestgreen!30}${5.091}_{\scriptstyle\,\pm\,0.158}$ 
    & \cellcolor{bestgreen!90}$0.121_{\scriptstyle\,\pm\,0.013}$ \\

    PH-DGN    
    & \cellcolor{bestgreen!50}$1.627_{\scriptstyle\,\pm\,0.398}$ 
    & \cellcolor{bestgreen!40}$5.068_{\scriptstyle\,\pm\,0.126}$ 
    & \cellcolor{bestgreen!30}$1.323_{\scriptstyle\,\pm\,0.485}$ \\

    SWAN      
    & \cellcolor{bestgreen!90}$1.121_{\scriptstyle\,\pm\,0.070}$ 
    & \cellcolor{bestgreen!70}$4.840_{\scriptstyle\,\pm\,0.045}$ 
    & \cellcolor{bestgreen!60}$0.896_{\scriptstyle\,\pm\,0.232}$ \\

    CSNN	  
    & \cellcolor{bestgreen!100}$\mathbf{1.001}_{\scriptstyle\,\pm\,0.068}$ 
    & \cellcolor{bestgreen!90}$4.527_{\scriptstyle\,\pm\,0.022}$ 
    & \cellcolor{bestgreen!100}$\mathbf{0.044}_{\scriptstyle\,\pm\,0.036}$ \\

    FlatNSD$_{\text{SAGE}}$ 
    & \cellcolor{bestgreen!80}$1.132_{\scriptstyle\,\pm\,0.095}$ 
    & \cellcolor{bestgreen!100}$\textbf{3.138}_{\scriptstyle\,\pm\,0.052}$ 
    & \cellcolor{bestgreen!80}$0.198_{\scriptstyle\,\pm\,0.082}$ \\
    \bottomrule
    \end{tabular}
}
\end{wraptable}

We further substantiate that FlatNSD mitigates over-squashing by testing its performance on  ECHO-Synth \citep{miglior2025can} and Graph Transfer \citep{pmlr-v202-di-giovanni23a}, both capturing long-range interactions.

\textbf{ECHO-Synth.} The ECHO-Synth dataset challenges GNNs with long-range tasks across six topological families (diameters 17–40). Since this setting induces different types of structural compression, it is ideal for evaluating FlatNSD's capacity to alleviate over-squashing. Following \cite{miglior2025can}, we benchmark FlatNSD against their original models\textcolor{black}{, with the addition of CSNN \citep{ribeiro2025cooperative} to assess the capacity of a recent, more engineered sheaf model}. We use the same experimental setup and hyperparameter selection routine (details in \Cref{app:exp_setup}).

\textbf{Results.} \Cref{tab:test_results_synt_compact} reports MAE across the three property prediction tasks. FlatNSD$_{\text{SAGE}}$ demonstrates reliable effectiveness in handling long-range dependencies, establishing an impressive performance on the \texttt{ecc} task, outperforming the next best baseline (CSNN) by over 30\%. Furthermore, it consistently ranks among the top performers on the remaining tasks, securing the third-best result on \texttt{sssp} and remaining highly competitive on \texttt{diam}. These results confirm FlatNSD's capacity to effectively alleviate over-squashing across diverse structural bottlenecks. \textcolor{black}{Furthermore, CSNN is also effective in these tasks, achieving the lowest MAE on \texttt{diam} and \texttt{sssp}, highlighting the effectiveness of a more intricate sheaf-based model regarding over-squashing mitigation.}

\begin{wrapfigure}[17]{r}{0.4\textwidth}
\vspace{-12pt}
\includegraphics[width=1\linewidth]{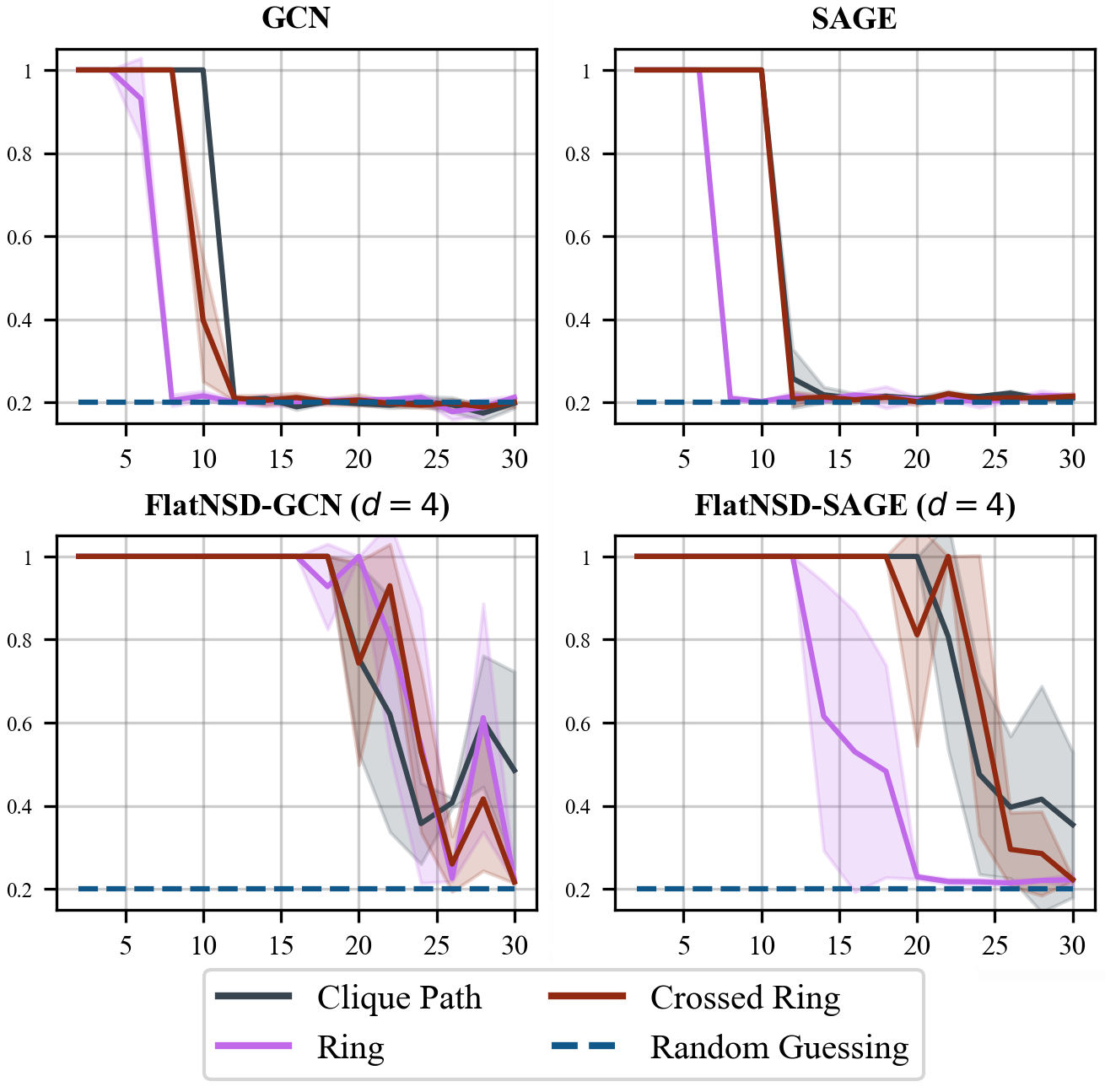}
\vspace*{-6mm}
    \caption{FlatNSD delays accuracy drop for all three topologies, demonstrating its ability of alleviating over-squashing. }
    \label{tab:graph_transferGCNandSAGE}
    \vspace{-25pt}
\end{wrapfigure}

\textbf{Graph Transfer.} We consider three graph topologies (CrossedRing, Ring, and CliquePath), each parameterized by the shortest-path distance $r$ between a designated source–target pair. The target node is assigned a one-hot label, and every other node receives a constant unitary feature. The model is trained to reproduce the target's label at the source. Since information must travel a distance $r$ through the graph, this task isolates over-squashing as the dominant failure mode, with accuracy expected to decrease as $r$ grows.

\textbf{Results.} \cref{tab:graph_transferGCNandSAGE} shows that with FlatNSD accuracy decreases after longer distances from source to target, in comparison with GCN and SAGE, for all three topologies. In \cref{app:FlatNSD_graph_transfer}, we show that FlatNSD also outperforms GIN and GAT, and discuss that for $d=4$ the performance is better than for $d=2$, suggesting the importance of stalk dimension on tasks sensitive to over-squashing, in alignment with our results for the TreeNeighborsMatch (\cref{fig:tree-by-depth}).

\vspace{-5.5pt}
\begin{mybox}{}
\vspace{-6.5pt}
\textbf{Summary.} FlatNSD was designed as a simple message-passing sheaf neural network that theoretically and empirically mitigates over-squashing across a diverse range of graph topologies. Rather than pursuing state-of-the-art performance, our overarching goal is to highlight SNNs as a fundamentally robust and promising direction for future architecture design.  \looseness=-1
\vspace{-7.5pt}
\end{mybox}

\section{Real-world experiments}

\begin{wraptable}[12]{r}{0.55\textwidth} 
\begin{minipage}{\linewidth}
    \centering
    \scriptsize
\scriptsize
\setlength{\tabcolsep}{3.5pt}
\vspace{-12pt}
\caption{Performance comparison on datasets from \cite{platonov2023critical}. AUROC is reported for minesweeper, tolokers and questions, accuracy is reported for the remaining datasets.}
\vspace{-6pt}
\label{table:nodeclass1}
\resizebox{\textwidth}{!}{
\begin{tabular}{lccccc}
\toprule
Model & roman-empire & amazon-ratings & minesweeper & tolokers & questions \\
Hom level & {0.05} & {0.38}  & {0.68} & {0.59} & {0.84} \\
\midrule
GCN & 73.69 ± 0.74 & 48.70 ± 0.63 & 89.75 ± 0.52 & 83.64 ± 0.67 & 76.09 ± 1.27 \\
SAGE & 85.74 ± 0.67 & \cellcolor{bestgreen!50}53.63 ± 0.39 & 93.51 ± 0.57 & 82.43 ± 0.44 & 76.44 ± 0.62 \\
GAT & 80.87 ± 0.30 & 49.09 ± 0.63 & 92.01 ± 0.68 & 83.70 ± 0.47 & 77.43 ± 1.20 \\
GAT-sep & 88.75 ± 0.41 & \cellcolor{bestgreen!25}52.70 ± 0.62 & 93.91 ± 0.35 & 83.78 ± 0.43 & 76.79 ± 0.71 \\
GT & 86.51 ± 0.73 & 51.17 ± 0.66 & 91.85 ± 0.76 & 83.23 ± 0.64 & 77.95 ± 0.68 \\
GT-sep & 87.32 ± 0.39 & 52.18 ± 0.80 & 92.29 ± 0.47 & 82.52 ± 0.92 & \cellcolor{bestgreen!25}78.05 ± 0.93 \\
\midrule
BuNN & \cellcolor{bestgreen!25}91.75 ± 0.39 & \cellcolor{bestgreen!95}\textbf{53.74 ± 0.51} & \cellcolor{bestgreen!50}98.99 ± 0.16 & \cellcolor{bestgreen!25}84.78 ± 0.80 & \cellcolor{bestgreen!50}78.75 ± 1.09 \\
CSNN & \cellcolor{bestgreen!95}\textbf{92.63 ± 0.50} & 52.07 ± 1.00 & \cellcolor{bestgreen!95}\textbf{99.07 ± 0.25} & \cellcolor{bestgreen!95}\textbf{85.45 ± 0.53} & \cellcolor{bestgreen!95}\textbf{79.31 ± 1.22} \\
\midrule
\textbf{FlatNSD}$_{\text{SAGE}}$ & \cellcolor{bestgreen!50}92.26 ± 0.46 & 50.51 ± 0.70 & \cellcolor{bestgreen!25}98.45 ± 0.34 & \cellcolor{bestgreen!50}84.90 ± 0.83 & 76.44 ± 1.25 \\
\bottomrule
\end{tabular}
}
\end{minipage}
\end{wraptable}

\textbf{Node Classification.} To further show that sheaf models are a robust alternative, we also evaluate FlatNSD on the five heterophilic graphs proposed by \citet{platonov2023critical}.  For binary classification
datasets, we report AUROC, while for multiclass datasets we report accuracy.

Following \citet{platonov2023critical}, who showed that classical GNNs often match or outperform architectures specifically designed for heterophilic graphs, we adopt their reference baselines: GCN \citep{kipf2016semi}, GraphSAGE \citep{hamilton2017inductive}, GAT \citep{velivckovic2017graph}, and GT \citep{shi2020masked}, together with the GAT-sep and GT-sep variants, which concatenate a node's representation with the mean of its neighbors rather than summing them \citep{zhu2020beyond}. We also compare FlatNSD against the recently proposed sheaf models BuNN \citep{bamberger2024bundle}, and CSNN \citep{ribeiro2025cooperative}. All three sheaf models considered use SAGE to learn the restriction maps.  \looseness=-1

\textbf{Result.} As shown in \cref{table:nodeclass1}, sheaf-based models are uniformly robust on heterophilic datasets. While more sophisticated variants surpass FlatNSD — BuNN by departing from the message-passing paradigm, and CSNN by coupling two cooperative sheaves to refine node communication within message passing — FlatNSD remains within close range of both. This supports our broader claim: the message-passing paradigm itself, when equipped with appropriate restriction maps, is already sufficient to alleviate over-squashing. 

\textbf{Graph classification.} We validate our theoretical claims through the Spearman correlation of sheaf effective resistance and accuracy on graph classification graphs Enzymes, IMDB, Mutag, and Proteins from the TUDatasets \cite{morris2020tudataset}. Our objective is to elucidate how cellular sheaves can alleviate over-squashing without changing the topology of the graph, but rather the structure of its underlying sheaf. In \cref{tab:performance_comparison_small_nodeclassification}  we conduct the same analysis for small graphs with a node classification task for Cora, Citeseer \cite{yang2016revisiting}, Texas, Cornell, Wisconsin \cite{pei2020geom}, and Chameleon \cite{rozemberczki2021multi}.\looseness=-1

\begin{wrapfigure}[37]{r}{0.6\textwidth} 
\begin{minipage}{\linewidth}
    \centering
\centering
\vspace{-12pt}
\scriptsize 
\setlength{\tabcolsep}{3.5pt} 
\caption{Performance of FlatNSD and GCN, GIN, and SAGE with different rewiring methods for both node and graph classification tasks. The method with the highest accuracy is highlighted in \textbf{bold}. DR requires graphs to have at least 3 nodes, but some graphs from Enzymes have only 2.}
\vspace{-7pt}
\label{tab:performance_comparison_gcn}
\begin{tabular}{lccccc}
\toprule
Method & Enzymes & IMDB & Mutag & Proteins & Rel Avg Imp \\
\midrule
GCN+NONE  & 25.5 ± 1.3 & 49.3 ± 1.0 & 68.8 ± 2.1 & 70.6 ± 1.0 & - \\
GIN+NONE  & 33.5 ± 1.3 & 67.7 ± 1.4 & 76.1 ± 3.1 & 69.5 ± 1.4 & - \\
SAGE+NONE & 31.5 ± 1.3 & 46.9 ± 0.8 & 72.2 ± 1.9 & 70.9 ± 0.9 & - \\
\midrule
GCN+SDRF  & 26.1 ± 1.1 & 49.1 ± 0.9 & 70.5 ± 2.1 & 71.4 ± 0.8 & +1.39\% \\
GIN+SDRF  & 32.4 ± 1.3 & 69.4 ± 1.4 & 79.5 ± 2.6 & 71.4 ± 0.8 & +1.61\% \\
SAGE+SDRF & 32.3 ± 1.5 & 47.1 ± 0.8 & 78.2 ± 1.7 & 69.9 ± 0.8 & +2.47\% \\
\midrule
GCN+FOSR  & 27.4 ± 1.1 & 49.6 ± 0.8 & 75.6 ± 1.7 & 72.3 ± 0.9 & +5.09\% \\
GIN+FOSR  & 28.8 ± 1.0 & \cellcolor{bestgreen!50}70.6 ± 1.3 & 74.8 ± 1.5 & 73.7 ± 0.8 & -1.35\% \\
SAGE+FOSR & 32.3 ± 1.5 & 47.1 ± 0.8 & 73.6 ± 2.0 & 69.9 ± 0.8 & +0.87\% \\
\midrule
GCN+BORF  & 24.7 ± 1.0 & 50.1 ± 0.9 & 75.8 ± 1.9 & 71.0 ± 0.8 & +2.31\% \\
GIN+BORF  & 31.4 ± 1.5 & \cellcolor{bestgreen!25}70.5 ± 1.3 & 78.2 ± 1.6 & 71.9 ± 1.3 & +1.02\% \\
SAGE+BORF & 26.5 ± 1.1 & 47.4 ± 0.7 & 69.8 ± 2.0 & 73.5 ± 0.8 & -3.62\% \\
\midrule
GCN+DR    & - & 47.0 ± 0.7 & 80.1 ± 1.8 & 72.2 ± 0.8 & +4.68\% \\
GIN+DR    & - & 64.8 ± 0.8 & 74.5 ± 2.8 & \cellcolor{bestgreen!95}\textbf{74.3 ± 0.8} & +0.17\% \\
SAGE+DR   & - & 47.3 ± 0.7 & 72.7 ± 2.0 & 70.5 ± 0.8 & +0.33\% \\
\midrule
GCN+GTR   & 27.4 ± 1.1 & 49.5 ± 1.0 & 78.9 ± 1.8 & 72.4 ± 1.2 & +6.27\% \\
GIN+GTR   & 28.4 ± 1.8 & 70.1 ± 1.2 & 78.5 ± 3.5 & 73.3 ± 0.9 & -0.76\% \\
SAGE+GTR  & 32.2 ± 1.1 & 47.4 ± 0.7 & 80.2 ± 1.6 & \cellcolor{bestgreen!50}74.2 ± 0.7 & +4.76\% \\
\midrule
GCN+LASER & 27.6 ± 1.3 & 50.3 ± 1.3 & 78.8 ± 1.6 & 71.8 ± 1.6 & +6.62\% \\
GIN+LASER & 35.3 ± 1.3 & 68.6 ± 1.2 & 76.1 ± 2.4 & 72.1 ± 0.7 & +2.61\% \\
SAGE+LASER& 31.9 ± 1.4 & 47.6 ± 0.7 & 75.5 ± 1.7 & 71.1 ± 0.8 & +1.90\% \\
\midrule
GCN+GOKU  & 27.6 ± 1.2 & 49.8 ± 0.7 & \cellcolor{bestgreen!50}81.0 ± 2.0 & 71.9 ± 0.8 & +7.21\% \\
GIN+GOKU  & 33.8 ± 1.2 & \cellcolor{bestgreen!95}\textbf{71.3 ± 0.9} & 78.4 ± 2.5 & \cellcolor{bestgreen!25}73.9 ± 1.0 & +3.89\% \\
SAGE+GOKU & 32.9 ± 1.2 & 47.2 ± 0.8 & 78.2 ± 1.7 & 72.8 ± 0.9 & +4.02\% \\
\midrule
FlatNSD$_{\text{GCN}}$ & \cellcolor{bestgreen!50}45.1 ± 1.2 & 65.7 ± 0.9 & 77.6 ± 1.8 & 72.2 ± 0.9 & \cellcolor{bestgreen!95}\textbf{+31.30\%} \\
FlatNSD$_{\text{GIN}}$ & \cellcolor{bestgreen!95}\textbf{47.4 ± 1.3} & 68.5 ± 0.9 & \cellcolor{bestgreen!95}\textbf{86.2 ± 1.6} & 71.5 ± 0.8 & \cellcolor{bestgreen!25}+14.71\% \\
FlatNSD$_{\text{SAGE}}$& \cellcolor{bestgreen!25}43.8 ± 1.5 & 66.5 ± 0.9 & \cellcolor{bestgreen!25}80.7 ± 1.7 & 70.6 ± 0.9 & \cellcolor{bestgreen!50}+23.05\% \\
\bottomrule
\end{tabular}
\end{minipage}

\vspace{-1pt} 

\begin{minipage}{\linewidth}
    \centering
\includegraphics[width=1\linewidth]{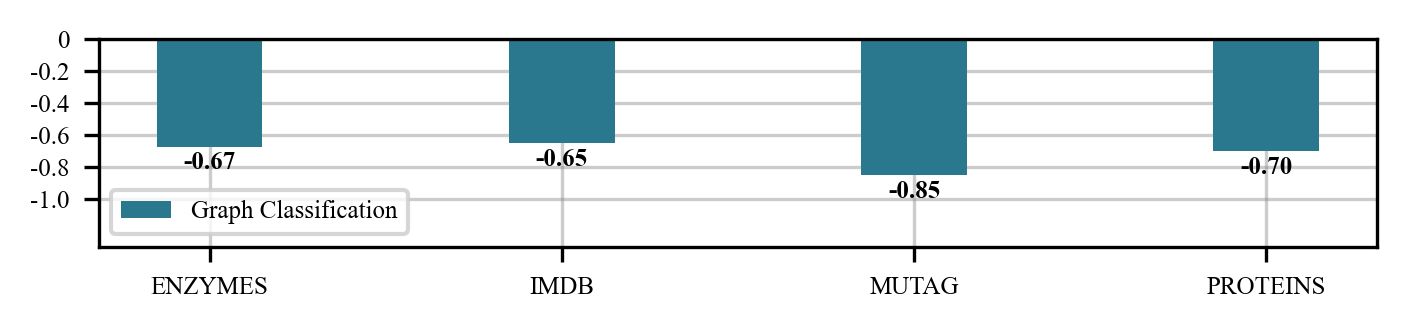}
\vspace{-20pt}
    \caption{Spearman Correlation between Sheaf Effective Resistance and Train Accuracy on FlatNSD-SAGE.} 
\label{fig:spearman_real_world}
\end{minipage}
\end{wrapfigure}

\textbf{Experimental setup.} 
Aiming  to prioritize fairness and understandability instead of obtaining the highest accuracy, we follow \citet{nguyen2023revisiting,liang2025mitigating}, that tune only rewiring-related hyperparameters, and freeze GNN-specific parameters and tune only the sheaf-related ones. We compare FlatNSD against GCN, GIN, SAGE, and  seven rewiring methods using these three GNNs as backbone models: SDRF \cite{toppingunderstanding},  FOSR \cite{karhadkarfosr}, BORF \cite{nguyen2023revisiting}, DR \cite{attali2024delaunay}, GTR \cite{black2023understanding}, LASER \cite{barberolocality}, and GOKU \cite{liang2025mitigating}. \Cref{app:exp_setup} provides details on our experimental setup. \looseness=-1

\textbf{Effective Resistance.} We use Spearman rank correlation coefficient to measure the correlation between accuracy and effective resistance. Since real-world data faces multiple challenges beyond requiring over-squashing mitigation, we do not expect the inverse relation to be as strong as in the controlled synthetic experiment reported in \cref{fig:sprm-tnm}. 
Nevertheless, the effective resistance is considerably correlated with training accuracy in these datasets, in alignment with the theory we developed. \looseness=-1

\textbf{Results.} \Cref{tab:performance_comparison_gcn} demonstrates that FlatNSD is highly robust: while not always the top performer on individual datasets, it presents a substantially higher relative average improvement. Crucially, FlatNSD robustness may be explained by its  
general efficacy outside of purely over-squashing-dominated regimes. We highlight this perspective, recalling the discussion on \cref{app:oversq+oversm} about the existence of sheaves that mitigate over-smoothing and over-squashing simultaneously. \looseness=-1

\section{Related work}
\textbf{Sheaf Neural Networks.} Building on the contributions of \citet{bodnar2022neural}, who demonstrated the advantages of the NSD model in handling heterophilic graphs and mitigating over-smoothing, we investigated the over-squashing phenomenon within SNNs. 
\citet{bamberger2024bundle} introduced flat vector bundles together with BuNN, an architecture that departs from the message-passing paradigm to address over-squashing. Our results suggest that the message-passing paradigm itself, when equipped with flat bundles, is already a strong contender: FlatNSD performs competitively with BuNN on the TreeNeighborsMatch task (\cref{fig:tree-by-depth}) and on node classification with larger graphs (\cref{table:nodeclass1}). Notably, much of the variation across sheaf-based models seems to be explained by the choice of restriction-map parameterization rather than by the underlying architecture (\cref{fig:tree-by-depth}). Direct head-to-head comparisons remain limited as the BuNN code was not publicly available at the time of submission. Crucially,  \citet{ribeiro2025cooperative} introduced sheaves on directed graphs to allow a selective behavior of each node, creating the CSNN model, which also handles over-squashing.  Other recently proposed sheaf-based models \citep{fiorini2025sheaves, mule2025directional} also formulate directed sheaves but do not discuss data compression. 

\textbf{Graph Rewiring.} Graph rewiring is a popular strategy  to address over-squashing. Part of its success is theoretically explained by \citet{pmlr-v202-di-giovanni23a} and empirically validated by different authors, using different heuristics \cite{black2023understanding, toppingunderstanding, karhadkarfosr, barberolocality,liang2025mitigating}. Moreover, relational GNNs are tools to improve rewiring methods that we exclude from our experiments, similarly to \citet{nguyen2023revisiting, liang2025mitigating}. This choice intends to isolate the contribution of FlatNSD and to consider graph rewiring methods and SNNs without further performance-oriented modifications.


\textbf{Over-squashing.} There is a variety of strategies to overcome over-squashing and the related problem of long-range propagation. They include but do not restrict to:  DE-GNN approaches \citep{gravina2025oversquashing, gravinaanti}, exploiting port-Hamiltonian dynamics \citep{heilig2025port}, adapting messages in the information flow \citep{errica2025adaptive, pmlr-v235-finkelshtein24a}, and Graph Transformers \citep{shirzad2023exphormer, rampavsek2022recipe, kreuzer2021rethinking}. Our experiments establish SNNs as a principled member of this toolkit for mitigating over-squashing.

\section{Conclusions}\label{sec:conclusion}

In this work, we established a theoretical link between Sheaf Neural Networks and over-squashing mitigation through the novel concept of Sheaf Effective Resistance. We demonstrated that FlatNSD can implicitly optimize restriction maps to facilitate information flow, all without altering the underlying graph topology. Our empirical analysis reveals that FlatNSD is a robust model, validating the efficacy of SNNs across diverse structural regimes.

\textbf{Future works.} Over-squashing is a well-known issue in GNNs, explored theoretically and empirically through various lenses. Many approaches rely on concepts deeply tied to effective resistance, such as the spectral gap, the Cheeger constant, and commute time. Notably, \citet{toppingunderstanding} introduce an edge-based metric called Balanced Forman curvature, linking negatively curved edges to over-squashing and proving that a positive lower bound on curvature controls the spectral gap. We envision that defining a notion of curvature for graph sheaves could be intrinsically related to the sheaf spectral gap. This perspective further positions sheaves as a powerful tool to alleviate over-squashing: bottlenecks traditionally viewed as intrinsic to the graph structure might be better understood—and resolved—as properties of the associated sheaf.

\textbf{Limitations.} We recall that our goal with this work was to analyze how FlatNSD can deal with over-squashing. To this end, we keep the model as simple as possible. Our experiments and theoretical results contribute to support that sheaves, through the lens of sheaf effective resistance, provide a framework genuinely capable of mitigating over-squashing. Nevertheless, the FlatNSD model can be strengthened. For instance, \citet{bamberger2024bundle}  considered a parallel dimension for multiple sheaves instead of a single sheaf per layer, which led to improved long-range interactions; \citet{fiorini2025sheaves} generalizes the sheaf Laplacian even further with sheaves designed to explicitly
account for directed graphs; \citet{ribeiro2025cooperative} introduces a sheaf based model where each node can learn whether it propagates or receives messages from/to its neighbors in each layer. 


\textbf{Computational Complexity.} Sheaf Neural Networks main drawback is its computational cost, but we address that the FlatNSD model we used in this work presents significant improvements in comparison to the original NSD model (\cref{app:complexity}), showing positive perspectives in this realm.   Nevertheless, in \cref{sec:runtime} we provide the runtime of FlatNSD and GCN with different rewiring methods, where we concluded that FlatNSD is considerable faster than graph rewiring methods for the tested node classification tasks while keeps the same order of magnitude for the tested graph classification tasks. This shows that FlatNSD is a versatile and balanced model.

\textbf{Societal Impact.} We  foresee no immediate societal or ethical concerns arising from this work.\looseness=-1 

\section*{Acknowledgments} 

We acknowledge the support of the Fundação Carlos Chagas Filho de Amparo à Pesquisa do Estado do Rio de Janeiro (FAPERJ) (SEI-260003/020348/2025, SEI-260003/020694/2025) and the Conselho Nacional de Desenvolvimento Científico e Tecnológico (CNPq) (404336/2023-0, 305692/2025-9, 445170/2024-7).

\bibliographystyle{plainnat}
\bibliography{conference}

\appendix

\section{Additional Examples}\label{app:examples}

To give a better intuition of how the choice of sheaves affects effective resistance, we provide two larger examples, building on the graph effective resistance examples from \cite{black2023understanding}.
\vspace{-8pt}
\begin{figure}[h]
    \centering
   \includegraphics[width=0.8\linewidth]{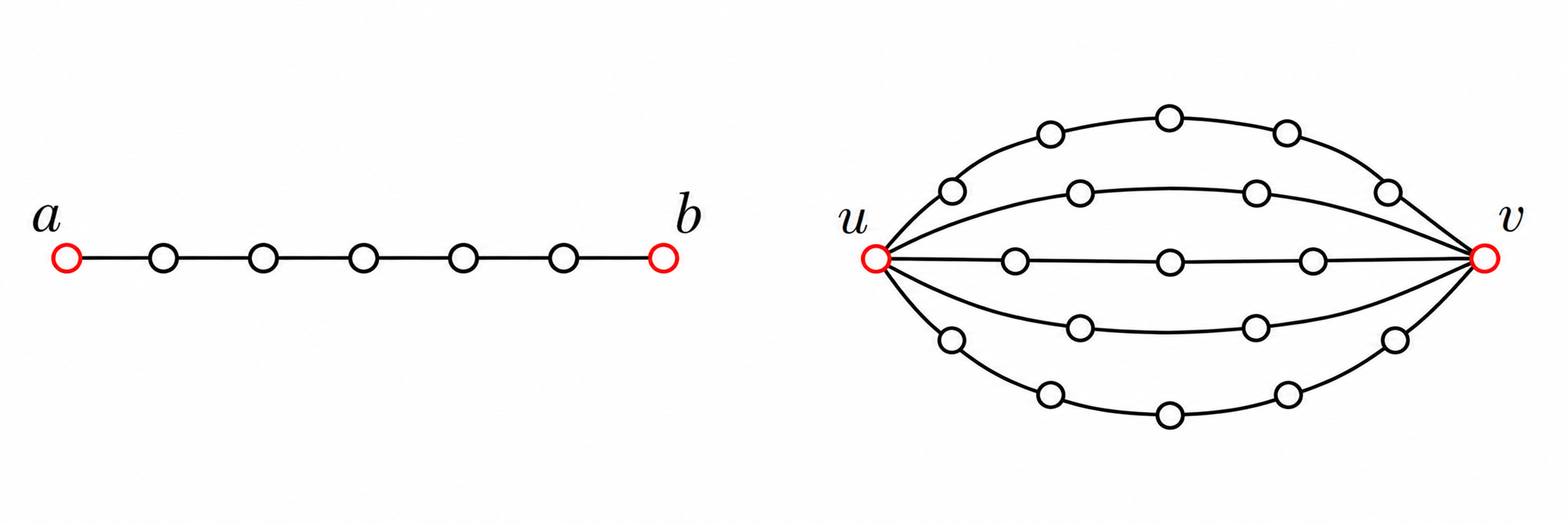}
       \vspace{-20pt}
    \caption{The two examples from \cite{black2023understanding}. For these graphs, we have $R(a,b) = 6$ (left) and $R(a,b) = \frac{10}{9}$ (right)}
    \label{fig:placeholder}
\end{figure}

Consider first the two graphs above: nodes $a,b$ connected by a path of length $6$, and nodes $a,b$ connected by four disjoint paths of lengths $5,4,4,5$. Their graph effective resistances are $R(a,b)=6$ and $R(a,b)=\frac{10}{9}$, respectively. From their Laplacian pseudo-inverses, we obtain $(L_G^+)_{ab}=-\frac{8}{7}$ and $(L_G^+)_{ab}=-\frac{185}{1152}$.

Let $Q(k)=\begin{pmatrix}\cos k & -\sin k\\ \sin k & \cos k\end{pmatrix}$, and define $\gF_a=Q(x)$, $\gF_b=Q(y)$, with identity maps on all remaining nodes. Using \cref{lemma:sheaf_res}, we obtain $$R_{\gF}(a,b)=\frac{52+32\cos(y-x)}{7} \quad \text{and} \quad R_{\gF}(a,b)=\frac{455+185\cos(y-x)}{288},$$ for the first and second graphs, respectively. Hence, for $y-x\in\{0,\frac{\pi}{2},\pi\}$, we have $$R_{\gF}(a,b)\in\left\{12,\frac{52}{7},\frac{20}{7}\right\} \quad \text{and} \quad R_{\gF}(a,b)\in\left\{\frac{20}{9},\frac{455}{288},\frac{15}{16}\right\},$$ for first and second graphs, respectively. Since $(L_G^+)_{ab}<0$ in both cases, rotations can only lower the endpoint resistance relative to the aligned configuration.

Importantly, however, the effect of the sheaf need not be uniform throughout the graph. Let $w$ be the first internal node adjacent to $a$ in the path graph. In this case, $(L_G^+)_{aw}=1$ and $R(a,w)=1$, yielding $R_{\gF}(a,w)=6-4\cos x$. Thus, every non-zero rotation at $a$ increases the resistance between $a$ and its neighbor. For example, taking $\gF_a=-I$ and all other maps equal to the identity gives $R_{\gF}(a,b)=\frac{20}{7}$ and $R_{\gF}(a,w)=10$, in contrast to $R_{\gF}(a,b)=12$ and $R_{\gF}(a,w)=2$ for the trivial sheaf. Therefore, a nontrivial sheaf can simultaneously lower the resistance between a distant pair while increasing it between nearby nodes. This is consistent with the role of the restriction maps: information can be transmitted efficiently from $a$ to $b$ even when it cannot be directly decoded by intermediate nodes along the communication path.

\section{Additional Experiments}

\subsection{Rewiring against FlatNSD on node classification}

We additionally test the performance of FlatNSD against rewiring methods on the node classification benchmarks used in \citet{nguyen2023revisiting,liang2025mitigating}. It achieves the best performance in 4 out of 6 datasets, and a generally higher overall improvement of the backbone GNN compared to baselines. We notice that Cora and Citeseer are homophilic graphs, an environment where it is already known that sheaf-based models do not flourish \cite{bodnar2022neural}. In contrast, graph rewiring techniques often struggle to improve their backbone GNNs on node classification tasks. This relates to the fact that graph rewiring methods primarily target over-squashing, whereas FlatNSD is more versatile.\looseness=-1

\begin{table*}[!htb]
\centering
\scriptsize 
\setlength{\tabcolsep}{6.0pt} 
\caption{Performance of FlatNSD and GCN, GIN, and SAGE with different rewiring methods for both node and graph classification tasks. The method with the higher accuracy is highlighted in \textbf{bold}. DR requires graphs to have at least 3 nodes, but some graphs from Enzymes have only 2.}
\vspace{-2pt}
\label{tab:performance_comparison_small_nodeclassification}
\begin{tabular}{lccccccc}
\toprule
Method & Cora & Citeseer & Texas & Cornell & Wisconsin & Chameleon & Rel Avg Improv \\
\midrule
GCN+NONE  & 86.7 ± 0.3 & 72.3 ± 0.3 & 44.2 ± 1.5 & 41.5 ± 1.8 & 44.6 ± 1.4 & 59.2 ± 0.6 & - \\
GIN+NONE  & 77.5 ± 1.2 & 59.3 ± 1.3 & 46.4 ± 4.5 & 40.2 ± 4.8 & 42.1 ± 3.6 & 56.6 ± 0.8 & - \\
SAGE+NONE & 87.1 ± 0.1 & 75.0 ± 0.1 & 82.4 ± 0.3 & 73.2 ± 0.3 & 77.3 ± 0.9 & 66.1 ± 0.9 & - \\
\midrule
GCN+SDRF  & 86.3 ± 0.3 & 72.6 ± 0.3 & 43.9 ± 1.6 & 42.2 ± 1.5 & 46.2 ± 1.2 & 59.4 ± 0.5 & +0.81\% \\
GIN+SDRF  & 77.2 ± 0.8 & 59.9 ± 1.2 & 44.5 ± 3.9 & 38.2 ± 4.1 & 43.1 ± 1.6 & 58.1 ± 1.4 & -0.57\% \\
SAGE+SDRF & 87.1 ± 0.7 & 74.1 ± 0.7 & 81.9 ± 2.9 & 72.4 ± 2.5 & 76.5 ± 1.8 & 65.2 ± 1.2 & -0.88\% \\
\midrule
GCN+FOSR  & 85.9 ± 0.3 & 72.3 ± 0.3 & 46.0 ± 1.6 & 40.2 ± 1.6 & 48.3 ± 1.3 & 59.3 ± 0.6 & +1.41\% \\
GIN+FOSR  & 78.2 ± 0.8 & 61.4 ± 1.5 & 43.7 ± 4.1 & 39.4 ± 3.8 & 43.7 ± 2.8 & 59.3 ± 0.6 & +0.87\% \\
SAGE+FOSR & 87.5 ± 0.6 & 74.7 ± 0.8 & 81.1 ± 2.0 & 71.1 ± 2.2 & 75.4 ± 1.1 & 66.2 ± 0.8 & -1.12\% \\
\midrule
GCN+BORF  & \textbf{87.5 ± 0.2} & 73.8 ± 0.2 & 49.4 ± 1.2 & 50.8 ± 1.1 & 50.3 ± 0.9 & 61.5 ± 0.4 & +8.97\% \\
GIN+BORF  & 77.6 ± 0.9 & 60.8 ± 0.2 & 49.9 ± 3.4 & 39.9 ± 3.9 & 46.7 ± 2.2 & 59.2 ± 0.4 & +4.16\% \\
SAGE+BORF & 87.3 ± 0.6 & 73.5 ± 0.6 & 73.5 ± 2.3 & 73.0 ± 3.1 & 78.2 ± 3.7 & 64.9 ± 0.7 & -2.25\% \\
\midrule
GCN+DR    & 78.4 ± 1.2 & 69.5 ± 1.6 & 67.8 ± 2.5 & 57.8 ± 1.9 & 62.8 ± 2.1 & 58.6 ± 0.8 & +19.84\% \\
GIN+DR    & 64.6 ± 1.6 & 50.8 ± 2.0 & 57.3 ± 2.8 & 50.1 ± 2.7 & 50.1 ± 3.0 & 60.2 ± 0.8 & +7.08\% \\
SAGE+DR   & 87.5 ± 0.4 & 74.2 ± 0.7 & 81.1 ± 3.7 & 76.2 ± 2.4 & 77.1 ± 2.9 & \textbf{66.8 ± 1.1} & +0.45\% \\
\midrule
GCN+GTR   & 87.3 ± 0.4 & 72.4 ± 0.3 & 45.9 ± 1.9 & 50.8 ± 1.6 & 46.7 ± 1.5 & 57.6 ± 0.8 & +4.85\% \\
GIN+GTR   & 78.6 ± 1.3 & 62.6 ± 1.9 & 49.5 ± 2.9 & 39.4 ± 2.3 & 44.2 ± 2.4 & 57.1 ± 1.2 & +2.92\% \\
SAGE+GTR  & 87.5 ± 0.6 & 74.3 ± 0.7 & 76.8 ± 4.3 & 71.1 ± 3.0 & 78.0 ± 2.5 & 65.6 ± 1.1 & -1.66\% \\
\midrule
GCN+LASER & 86.9 ± 1.1 & 72.6 ± 0.6 & 45.9 ± 2.6 & 42.7 ± 2.6 & 46.0 ± 2.6 & 43.5 ± 1.0 & -2.67\% \\
GIN+LASER & 79.1 ± 1.0 & 66.5 ± 1.3 & 46.5 ± 4.5 & 44.5 ± 3.8 & 46.1 ± 4.6 & 59.8 ± 2.2 & +6.71\% \\
SAGE+LASER& 84.9 ± 0.5 & 74.5 ± 0.4 & 81.3 ± 3.1 & 76.5 ± 3.1 & 79.8 ± 2.5 & 54.7 ± 0.9 & -2.34\% \\
\midrule
GCN+GOKU  & 86.8 ± 0.3 & 73.6 ± 0.2 & 72.4 ± 2.2 & 69.4 ± 2.1 & 68.8 ± 1.4 & 63.2 ± 0.4 & +32.33\% \\
GIN+GOKU  & 78.4 ± 0.5 & 63.6 ± 1.3 & 60.2 ± 2.6 & 49.5 ± 3.5 & 57.6 ± 3.1 & 62.1 ± 0.6 & +17.97\% \\
SAGE+GOKU & 87.1 ± 0.5 & 74.1 ± 0.3 & 81.4 ± 2.1 & 71.9 ± 4.1 & 80.2 ± 2.1 & 66.1 ± 0.6 & -0.07\% \\
\midrule
FlatNSD$_{\text{GCN}}$ & 76.7 ± 0.4 & 68.5 ± 0.3 & 86.3 ± 1.1 & 79.4 ± 1.3 & \textbf{86.2 ± 0.8} & 51.6 ± 0.5 & +41.70\% \\
FlatNSD$_{\text{GIN}}$ & 76.6 ± 0.3 & 70.0 ± 0.3 & 84.4 ± 1.3 & 78.1 ± 1.3 & \textbf{86.2 ± 1.0} & 52.5 ± 0.4 & +48.43\% \\
FlatNSD$_{\text{SAGE}}$ & 83.4 ± 0.3 & \textbf{76.0 ± 0.3} & \textbf{86.9 ± 1.0} & \textbf{80.5 ± 1.2} & 85.4 ± 0.9 & 52.5 ± 0.4 & +0.40\% \\
\bottomrule
\end{tabular}
\end{table*}
\vspace{-16pt}
\begin{figure}[!htb]
    \centering  \includegraphics[width=0.9\linewidth]{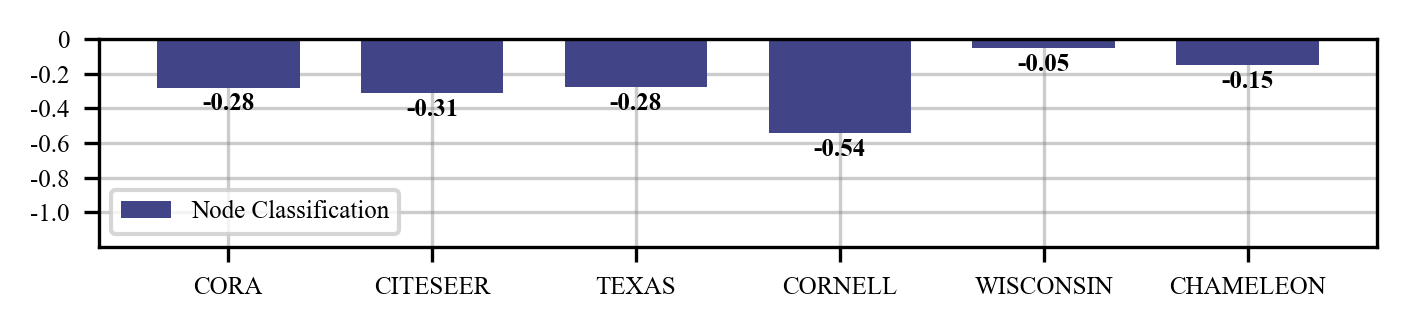}
    \vspace{-8pt}
    \caption{The figure shows the Spearman Correlation between Sheaf Effective Resistance and Train Accuracy on FlatNSD-SAGE. The correlation is weaker on these node classification tasks, aligning with the common understanding they are not a suitable setting for studying over-squashing.}    \label{fig:spearman_node_classification}
\end{figure}

\textbf{Results.}      
\Cref{tab:performance_comparison_small_nodeclassification} shows that FlatNSD is a robust model across both node and graph classification tasks (\cref{tab:performance_comparison_gcn}). 
The exceptions are Cora, Citeseer, and Chameleon. Cora and Citeseer are homophilic graphs, a regime where sheaf-based models are known to underperform their classical counterparts \citep{bodnar2022neural}; the drop relative to vanilla GCN on these datasets is therefore expected rather than surprising. Chameleon is known to contain duplicate nodes \cite{platonov2023critical}, which interfere with model performance. Overall, on the considered heterophilic datasets, FlatNSD  is competitive with or superior to graph rewiring.

\textbf{Effective Resistance.} The Spearman correlation shows a weak correlation for  these node classification.  This aligns with the view that the graph classification tasks performed in \cref{tab:performance_comparison_gcn} is more suitable for studying over-squashing: it requires aggregating node information into a graph-level embedding, thereby amplifying information compression.

Accordingly, \citet{saber2025empirical} suggest that rewiring strategies primarily pay off when over-squashing is severe and global, but the standard node classification benchmarks used here likely suffer only from mild compression.
Therefore, although this might attenuate the effectiveness of FlatNSD-SAGE regarding over-squashing mitigation on this set of problems, the model still achieves good performance in terms of raw accuracy, demonstrating its general efficacy outside of purely over-squashing-dominated regimes. We highlight this perspective, recalling the discussion on \cref{app:oversq+oversm} about the existence of sheaves that mitigate over-smoothing and over-squashing simultaneously. \looseness=-1

\subsection{Over-squashing and Over-smoothing}\label{app:oversq+oversm}

Most graph rewiring techniques address over-squashing by reducing effective resistance or, equivalently, enlarging the spectral gap and the Cheeger constant \cite{black2023understanding}. However, modifying the edge set of a graph also typically causes standard GNNs to yield increasingly similar node representations as the number of layers grows. In other words, mitigating over-squashing is often accompanied by a worsening of the over-smoothing effect. 

To overcome this, R-GNNs are used instead of vanilla GNNs \cite{karhadkarfosr,jamadandi2024spectral}. Rougly speaking, R-GNNs models can keep information from the original graph and use it through learned weights to keep balance between over-squashing and over-smoothing.


On the other hand, it is well known that NSD can circumvent over-smoothing. The question is: \emph{are sheaves able to simultaneously mitigate over-squashing and over-smoothing?}  \looseness=-1 

For simplicity, we consider a connected graph $G = (V,E)$ with $n$ vertices and two classes $A,B \subseteq V$. Let $\alpha_e$ for all $e \in E$ be arbitrary non-negative scalars. According to \citet[Proposition 10]{bodnar2022neural}, a sheaf whose stalks have dimension one equipped with invertible restriction defined by\looseness=-1  
\begin{equation}\label{eq:sheaf_ovsq&ovsm}
    \rmapf{v}{e} =  - \alpha_e \mbox{ if } v \in A \mbox{ and } \rmapf{u}{e} = \alpha_e \mbox{ if  } u \in B
\end{equation}
induces a diffusion that can linearly separate the classes of $G$ for almost all initial conditions. Therefore, even if we increase the number of layers, in such circumstances, the feature nodes will not converge to the same value.  \looseness=-1
 
We show that equipping a graph with the sheaves above can also reduce the total effective resistance, consequently reducing network sensitivity to over-squashing by \cref{theo:Jacobian&resistance}. \looseness=-1
\begin{theorem}\label{theo:resistance.vs.ovsm}
    Let $G$ be a connected graph with $n$ vertices and two classes $A,B \subseteq V$. If $\gF$ is a flat vector bundle as in \cref{eq:sheaf_ovsq&ovsm}, then $\operatorname{R}_{\gF} \leq \operatorname{R}$.
\end{theorem}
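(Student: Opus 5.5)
The plan is to reduce the statement to the closed form of \cref{theo:totalsheafresistance_flatbundle} and then to a positive semidefiniteness argument. First, I make the sheaf in \cref{eq:sheaf_ovsq&ovsm} explicit as a flat vector bundle.

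\textbf{Step 1: the sheaf is a sign bundle.} Stalks are one-dimensional, so $d=1$. For $\gF$ to be a flat vector bundle, the restriction maps must be orthogonal $1\times 1$ matrices, i.e.\ $\pm 1$, and must depend only on the node. So I read \cref{eq:sheaf_ovsq&ovsm} in its flat form, which forces $|\alpha_e|=1$:
\[
\gF_v = s_v, \qquad s_v=-1 \text{ for } v\in A, \qquad s_v = 1 \text{ for } v\in B .
\]
Let $s\in\{\pm1\}^n$ be the resulting sign vector.

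\textbf{Step 2: identify $M$.} With $\mathbf{1}_1 = 1$, the matrix of \cref{lemma:sheaf_res} has entries $M_{uv}=\gF_u^{\top}\gF_v = s_us_v$. Hence $M = ss^{\top}$.

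\textbf{Step 3: apply the total-resistance formula.} \cref{theo:totalsheafresistance_flatbundle} with $d=1$ gives
\[
\operatorname{R}_{\gF} = \operatorname{R} - \langle ss^{\top}, L_G^+\rangle_F = \operatorname{R} - s^{\top}L_G^+ s .
\]
As a consistency check, the trivial sheaf has $s=\mathbf{1}$. Since $L_G^+\mathbf{1}=0$, the correction term vanishes and we recover $\operatorname{R}$.

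\textbf{Step 4: conclude by positivity.} $L_G$ is positive semidefinite, so $L_G^+$ is positive semidefinite as well: it has the same eigenvectors and reciprocal nonzero eigenvalues. Therefore $s^{\top}L_G^+s\ge 0$, and so $\operatorname{R}_{\gF}\le \operatorname{R}$. Alternatively, this follows directly from \cref{prop:existence} with $d=1$.

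The argument also shows when the inequality is strict. If both classes $A$ and $B$ are nonempty, then $s$ is not a multiple of $\mathbf{1}$. Since $G$ is connected, $\ker L_G^+ = \operatorname{span}(\mathbf{1})$, so $s^{\top}L_G^+s>0$ and the inequality is strict.

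\textbf{Main obstacle.} The only delicate point is Step 1: justifying that the sheaf of \cref{eq:sheaf_ovsq&ovsm}, which a priori has arbitrary edge weights $\alpha_e$, fits the flat-bundle framework in which \cref{theo:totalsheafresistance_flatbundle} holds. I would handle this by stating explicitly that flatness and orthogonality force $\alpha_e=\pm1$ with node-determined signs. Everything after that is the PSD computation above.
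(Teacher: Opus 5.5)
Your proposal is correct and follows the paper's proof essentially step for step. Flatness plus orthogonality in $d=1$ force the restriction maps to be the class signs; since $\alpha_e\ge 0$ is assumed, this gives $\alpha_e=1$ exactly rather than $\pm1$. Hence $M=ss^{\top}$, and \cref{theo:totalsheafresistance_flatbundle} reduces the claim to $s^{\top}L_G^+s\ge 0$, which the paper checks via the spectral expansion $\sum_{k\ge 2}(q_k^{\top}s)^2/\lambda_k$ instead of citing positive semidefiniteness of $L_G^+$ directly. Your strictness remark (if $A$ and $B$ are both nonempty, then $s\notin\operatorname{span}(\mathbf{1})=\ker L_G^+$) is also correct and goes slightly beyond what the theorem states.
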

\vspace{-4pt}
\begin{proof}
    Since $d=1$, we only have to prove that $\langle M, L_G^+\rangle_F \geq 0$. 

   The flat and the orthogonal conditions force that $\alpha_e  = 1$ for every $e \in E$. So $\gF_v = -1$ if $v \in A$, $\gF_u = 1$, if $u \in B$, and then $M_{uv} = \pm 1$. Let $\beta = (\beta_1 , \dots, \beta_n)$, where $\beta_i = \gF_i$, for all $i \in V.$ Then $M = \beta\beta^\top$. Consequently,

   $$\langle M, L^+_G\rangle = \operatorname{Tr}(M^\top L^+_G) = \operatorname{Tr}(\beta\beta^\top L^+_G ) = \operatorname{Tr}(\beta^\top L^+_G\beta) = \beta^\top L^+_G\beta,$$

   where we use the trace property that $\operatorname{Tr}(AB) = \operatorname{Tr}(BA)$, with $A = \beta$ and $B = \beta^{\top} L^+_G$.

   Next, we consider the spectral decomposition $ L_G =  Q \Lambda Q^\top$ where $\Lambda = \operatorname{diag(0,\lambda_2,\dots,\lambda_n)}$ with $0 = \lambda_1 \leq \lambda_2 \leq \dots \leq \lambda_n$ eigenvalues of $L_G$. Then $\Lambda^+ = \operatorname{diag(0,\frac{1}{\lambda_2},\dots,\frac{1}{\lambda_n})}$ and $L^+_G =Q \Lambda^+ Q^\top$.

   Therefore, $$\langle M, L^+_G\rangle = \beta^\top L^+_G\beta = \sum\limits_{k=2}^{n}\frac{(q_k^\top \beta)^2}{\lambda_k} \geq 0.$$
\end{proof}

Therefore, if the number of classes of $G$ is $C = 2$, there is a sheaf that handles both over-smoothing and over-squashing. For graphs with $C > 2$ classes, \citet{bodnar2022neural} argued that sheaves whose stalks have dimension $d \geq C$ can mitigate over-smoothing. Our experiments in \cref{tab:performance_comparison_gcn} 
suggest that a higher dimensional version of the above theorem holds, since FlatNSD presents good performance on real-world datasets for node and graph classification tasks, where both over-smoothing and over-squashing may co-exist. 
\begin{mybox}{}
\vspace{-5pt}
    \textbf{Conclusion.} Sheaves can alleviate over-smoothing leveraging the existence of a non-trivial equilibrium space of the diffusion process \citep{bodnar2022neural}. Within the same class of sheaves, we can also mitigate over-squashing by lowering the total sheaf effective resistance.
    \vspace{-5pt}
\end{mybox}

To substantiate the theoretical insight, we include experiments demonstrating that FlatNSD can handle over-smoothing and over-squashing simultaneously.

\begin{wrapfigure}[13]{r}{0.4\textwidth}
\vspace{-9pt}
 \includegraphics[width=\linewidth]{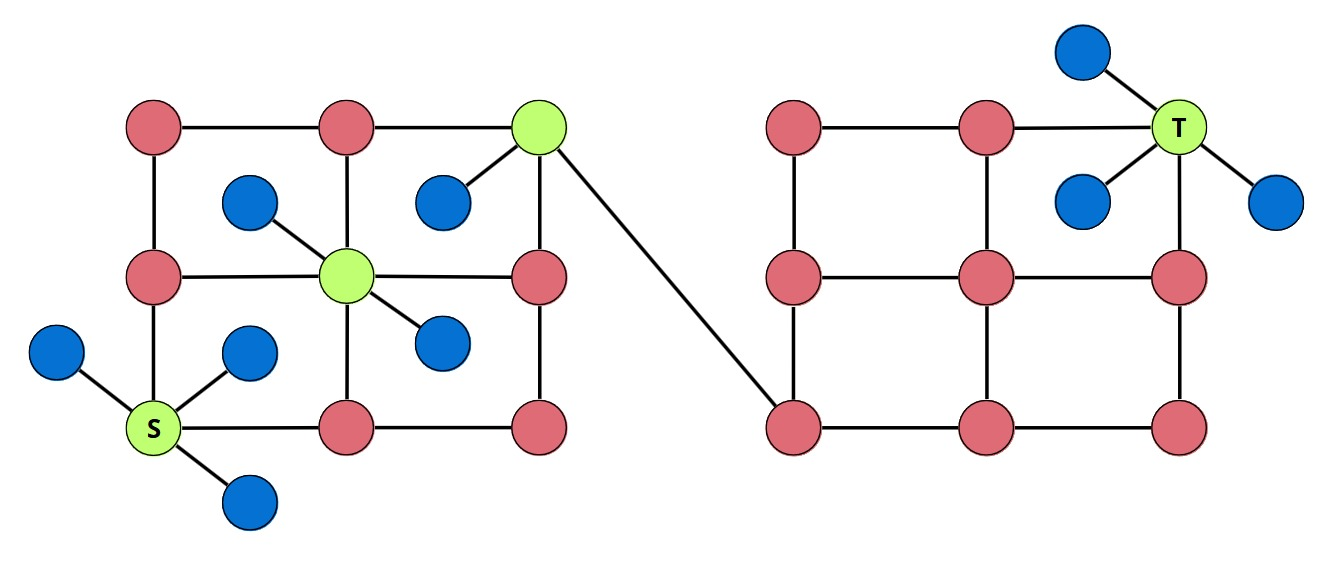}
    \caption{Example of graph in the GridTransfer task. Two grids are linked by an edge connecting two nodes in a diagonal. The task is to transfer information from one of the nodes in the diagonal of the node in the path to the most-distant node in the other grid.}
    \label{fig:placeholder}
\end{wrapfigure}

As explained by \citet{alon2020bottleneck}, there is no reason to expect over-smoothing to occur in the TreeNeighborsMatch problem. To the best of our knowledge, there is no synthetic benchmark specifically designed to evaluate the ability of GNNs to cope with both over-squashing and over-smoothing. We therefore propose GridTransfer as a task to study this setting.

\textbf{GridTransfer.} We construct a graph by taking two $n \times n$ grids and adding a single edge that connects one node on the outermost diagonal of the first grid to one node on the outermost diagonal of the second grid. The task is to transfer information from a \emph{target node} in the first grid to a node in the second grid located at the opposite corner relative to the endpoint of the inter-grid connection.

The nodes along the main diagonal of each grid act as \emph{green nodes}, as in the \textsc{NeighborsMatch} task, and each green node is connected to a number of \emph{blue neighbors}. The label is determined by matching the number of blue neighbors: the green node whose number of blue neighbors equals that of the target node contains its true label.

We adopt the same model configurations as in \textsc{TreeNeighborsMatch} and choose the network depth as $4(n-1)+1$, which corresponds to the worst-case receptive-field requirement for transferring information across the two-grid construction. Since this depth increases linearly with $n$, larger grids induce much deeper networks, where over-smoothing is expected to be more pronounced. Results are shown in \cref{fig:grid-transfer}. In comparison with \textsc{TreeNeighborsMatch}, we observe that FlatNSD-SAGE remains robust, indicating that over-smoothing has limited impact on its performance in this setting.

\begin{wrapfigure}[12]{r}{0.5\textwidth}
\vspace{-10pt}
    \centering
    \includegraphics[width=0.85\linewidth]{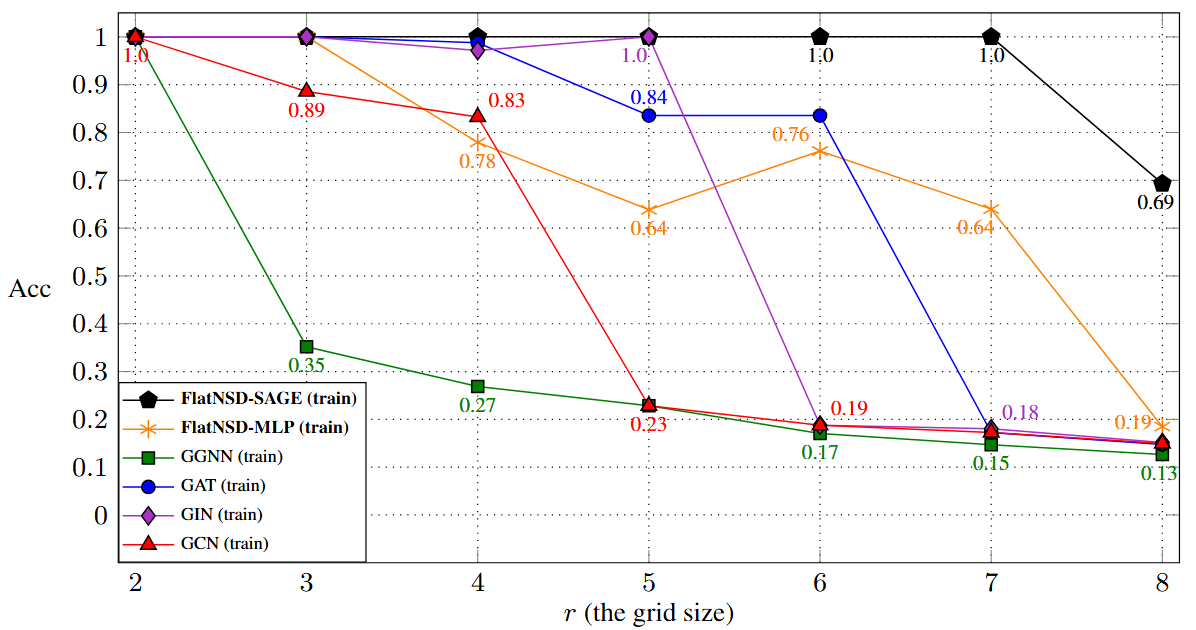}
    \caption{Accuracy of the models on the training set for the GridTransfer task. FlatNSD-SAGE is the only model to achieve  accuracy $1$ until $r=7$.}
    \label{fig:grid-transfer}
\end{wrapfigure}

\textbf{Dirichlet Energy.} \citet{cai2020note} used Dirichlet energy, given by the quadratic form $ E_{G}(\mathbf{X}) \coloneqq \operatorname{tr}(\mathbf{X}^{\top} \Delta_{G}\mathbf{X})$, where $\Delta_G$ is the normalized Laplacian of the graph $G$, as a measure of ``smoothness'' of node embeddings. 

From the perspective of Laplacian smoothing, deeper GCN propagation increasingly favors low-frequency signals, which generally lowers the Dirichlet energy and may cause node embeddings to become nearly constant on each connected component. Hence, mitigating over-smoothing should manifest as maintaining higher Dirichlet energy across layers, or equivalently, a less pronounced decrease.

In \cref{fig:DirichletEnergy}, we observe that FlatNSD with restriction maps learned with a GCN increases the energy for Enzymes, IMDB, MUTAG, and Proteins. 
We find consistent empirical evidence that the learned restriction maps are associated with decreasing sheaf effective resistance during training in regimes sensitive to over-squashing. While this does not establish causality, it supports the theoretical link between resistance and information propagation and provides a diagnostic for when FlatNSD alleviates over-squashing -- whilst still alleviating over-smoothing through the increase of the Dirichlet energy.

\begin{figure}[h]
    \centering
    \includegraphics[width=0.65\linewidth]{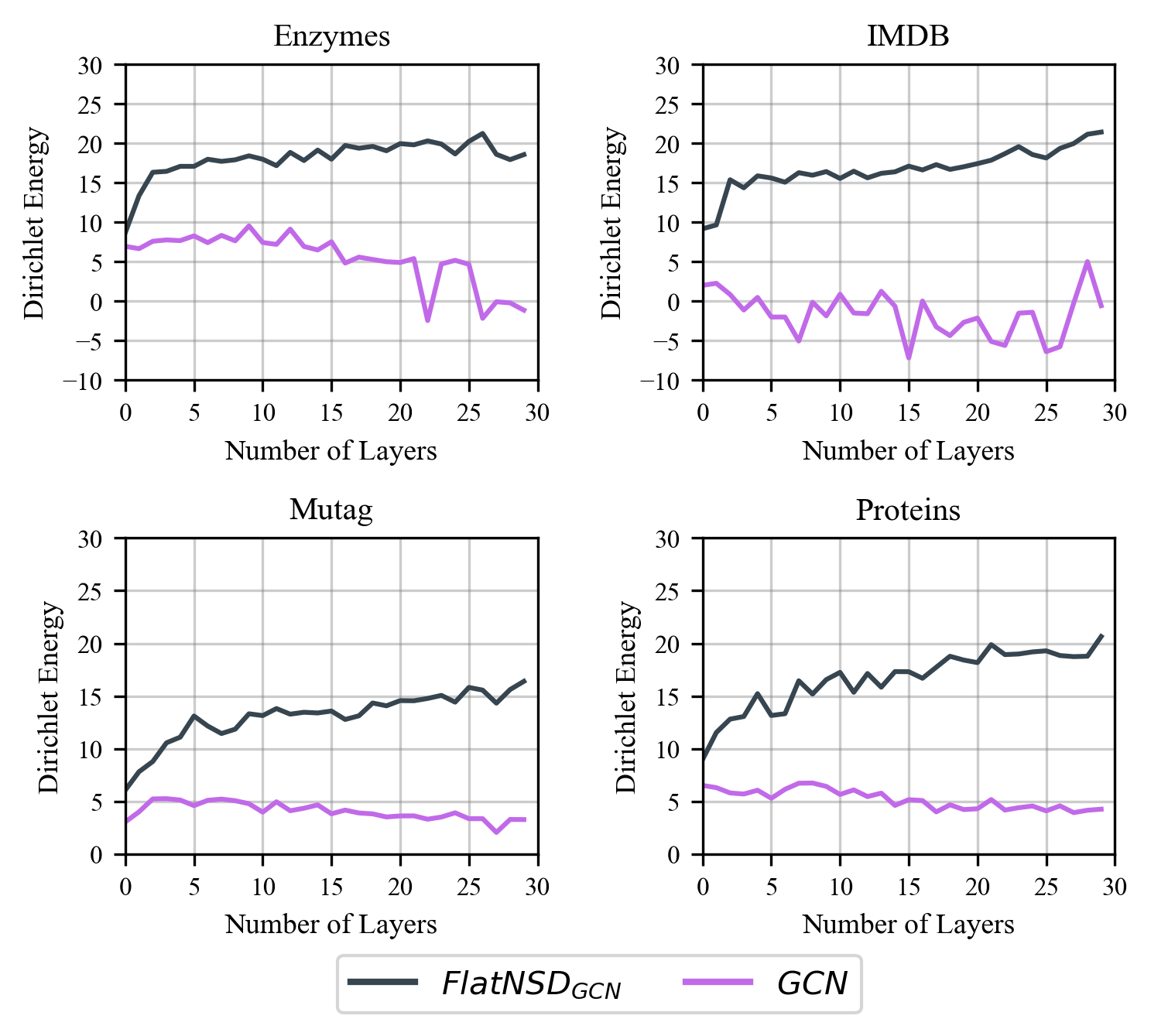}
    \caption{Log-Dirichlet Energy on the graph classification tasks of GCN and FlatNSD$_{\text{GCN}}$. The high Dirichlet Energy presented by FlatNSD-GCN against the low Dirichlet Energy presented by GCN is sign that FlatNSD alleviates over-smoothing even in datasets more strongly dominated by over-squashing.}
    \label{fig:DirichletEnergy}
\end{figure}

\subsection{Evaluating FlatNSD on Graph Transfer}\label{app:FlatNSD_graph_transfer}

The Graph Transfer task proposed by \citet{pmlr-v202-di-giovanni23a} as a benchmark for over-squashing, evaluates a model's ability to propagate information from a target node to a source node. The task is defined on three graph topologies: \textsc{Ring}, 
\textsc{CrossedRing}, and \textsc{CliquePath}, where $r$ denotes the distance between the source and the target node.

\begin{figure}[h]
    \centering
    \vspace{-10pt}
    \includegraphics[width=\linewidth]{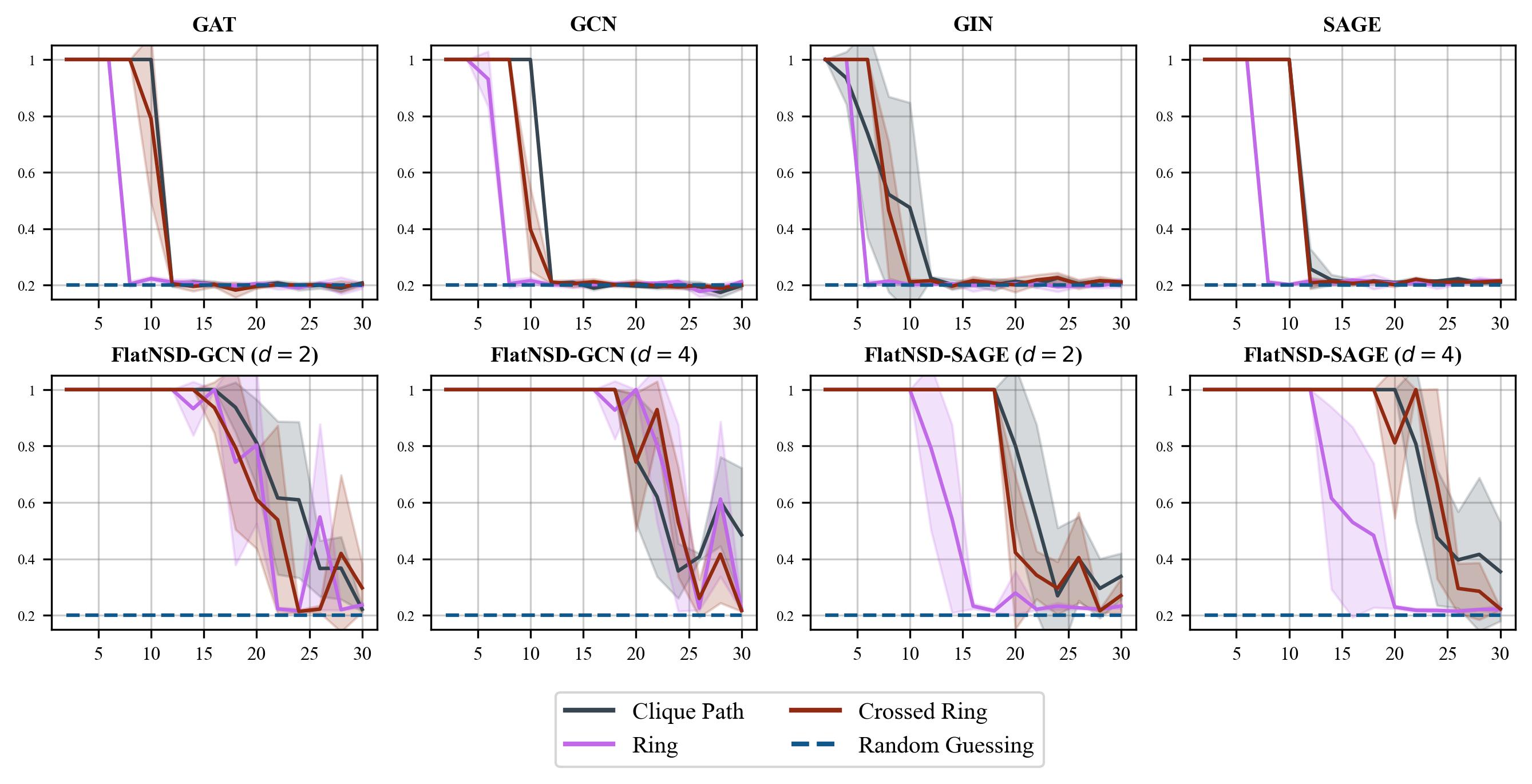}
    \vspace{-10pt}
    \caption{FlatNSD accuracy decreases only after longer distances from source to target, in comparison with GAT, GCN, GIN, and SAGE. Moreover, $d=4$ presents better performance than $d=2$, suggesting the importance of $d$ for tasks sensitive to over-squashing.}
    \label{fig:graphTransfer}
    \vspace{-10pt}
\end{figure}

\textsc{Ring} graphs are simple cycles of size $n$ in which  the source and target nodes lie at distance $\lfloor n/2 \rfloor$. \textsc{CrossedRing} graphs are cycles of the same size augmented with chords/crosses 
between auxiliary nodes; the added edges do not shorten the source–target distance, which remains $\lfloor n/2 \rfloor$. \textsc{CliquePath} graphs consist of a  $\lfloor n/2 \rfloor$-clique connected to a path of length $\lfloor n/2 \rfloor$, with the source placed inside the clique and the target at the far end of the path; the source–target distance is $\lfloor n/2 \rfloor + 1$. The three topologies are 
designed to isolate the effect of structural connectivity at a fixed distance: 
although $r$ is matched across topologies, over-squashing is most
severe for \textsc{CliquePath} and least severe for \textsc{CrossedRing}.


\textbf{Results.} For all three topologies, FlatNSD can postpone the accuracy drop, i.e., in FlatNSD accuracy decreases only after bigger distances from source to target, in comparison with all non-sheaf based models (GAT, GCN, GIN, SAGE), providing further evidence that FlatNSD mitigates over-squashing. Moreover, $d=4$ presents better performance than $d=2$, suggesting the importance of stalk dimension $d$ for tasks sensitive to over-squashing, in alignment with our results for the TreeNeighborsMatch task.

\subsection{Spearman Correlation on the TreeNeighborsMatch task}\label{app:corr}

Recall that Spearman correlation assesses the monotonic relationship between two variables and yields a value in 
$[-1,1]$.

\textbf{Accuracy.} We compute Spearman rank correlation per layer between training accuracy in the TreeNeighborsMatch and the sheaf effective resistance across all radii $r$  with $d=4$. While the model does not consistently recover the resistance-minimizing sheaf at each layer, it reliably does so in the initial layers.
\begin{figure}[h]
    \centering
    \vspace{-10pt}
    \includegraphics[width=\linewidth]{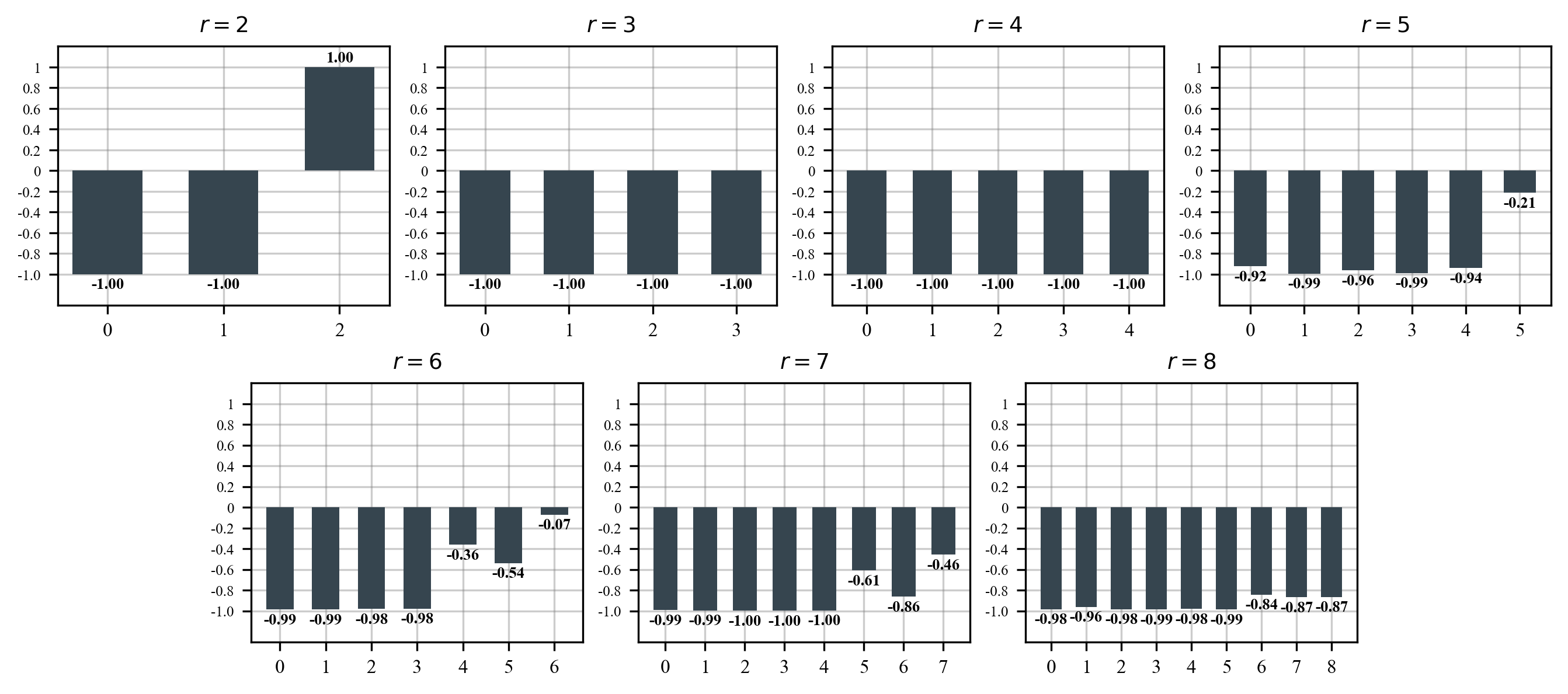}
    \vspace{-10pt}
    \caption{Spearman Correlation between total sheaf effective resistance and accuracy in TreeNeighborsMatch  on each layer across the problem radius $2 \leq r \leq 8$ (left to right).}
    \label{fig:spermanALLr}
    \vspace{-10pt}
\end{figure}
The effective resistance reported in Figure \ref{fig:sprm-tnm} and Figure \ref{fig:spermanALLr} is the mean of total sheaf effective resistance throughout the batches in each epoch, for each layer. The correlation is then calculated over the pairs $(mean(R_{\gF}^{batches}), train~acc)$.

\textbf{Jacobian norm.} Still in the TreeNeighborsMatch, we compute Spearman rank correlation  between the Jacobian norm and the sheaf effective resistance. As predicted by the theory, we obtained a strong negative correlation between them. Moreover, the plots showing correlation with the Jacobian (\cref{fig:jacobian_vs_resistance}) and the plots showing correlation with accuracy (\cref{fig:spermanALLr}) exhibit the same behavior. This result further supports the hypothesis that a reduction in sheaf effective resistance enhances the Jacobian sensitivity, thereby assisting in the mitigation of over-squashing. 
\begin{figure}[h]
    \centering
    \vspace{-7pt}
    \includegraphics[width=0.83\linewidth]{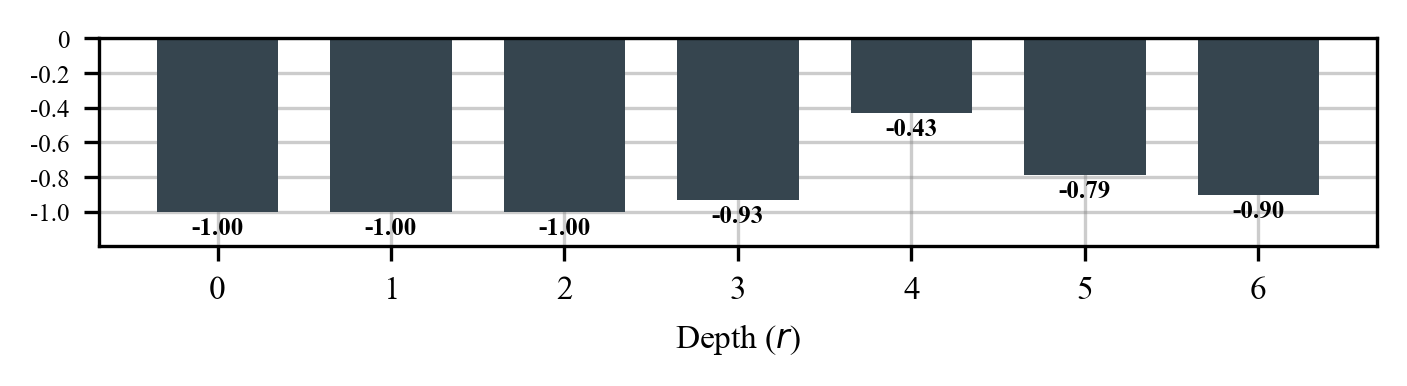}
    \caption{Spearman Correlation between total sheaf effective resistance and Jacobian norm for the TreeNeighborsMatch task on each layer across the problem radius $2 \leq r \leq 8$ (left to right).}
    \label{fig:jacobian_vs_resistance}
    \vspace{-15pt}
\end{figure}

\subsection{Short-range Over-squashing}\label{app:short_range_ovsq}

\cite{mishayev2025short} proposes a short-range over-squashing scenario, instantiated through the two-radius problem. In their experiments, SetTransformer achieves 100\% accuracy across all tested graph sizes, while message-passing GNNs struggle to fit the data, with GAT being the best-performing architecture among GCN, GIN, and SAGE. We evaluate FlatNSD on the same task, fixing $d=2$ and the remaining parameters as in \cite{mishayev2025short}. Notably, FlatNSD with a SetTransformer backbone and hidden dimension 32 achieves 100\% accuracy across all graph sizes, with fast convergence (under 35 epochs).

To better isolate the contribution of the sheaf, we also evaluate FlatNSD with a SAGE backbone. Its performance starts to decay on the two-radius problem with 50 nodes, but remains above that of SAGE alone. This highlights the importance of choosing an appropriate parametrization for the backbone GNN, revealing a similar dichotomy to that observed between the parametrizations used in the TreeNeighborsMatch experiment.

\subsection{Intervention on the TreeNeighborsMatch problem}\label{app:intervention}

To examine the distinction between correlation and mechanism more directly, we perform a controlled intervention on TreeNeighborsMatch with fixed stalk dimension. We compare a trivial flat bundle, with $\gF_v=I_d$ at every node, against an oracle endpoint intervention in which all maps remain fixed to the identity except those at the root and the task-relevant leaf.

The trivial bundle attains the maximum possible total resistance. Under this configuration, the model fails to fit the training set, remaining near $0.045$ training accuracy. By contrast, allowing nontrivial maps only at the two task-relevant endpoints substantially lowers their pairwise sheaf effective resistance, and the model converges almost immediately, after only two iterations. Since the graph topology and all remaining restriction maps are unchanged, this controlled intervention shows that modifying the flat bundle can remove a communication obstruction that persists under the maximum-resistance trivial bundle. This provides more direct evidence for the role of sheaf effective resistance in the mechanism underlying information propagation, beyond the correlation observed during learning.

\section{Contributions and the role of FlatNSD}\label{app:sheaf_models}

The main contributions of this work are theoretical rather than architectural. In particular, we derive a closed-form expression for pairwise sheaf effective resistance for flat bundles (\cref{lemma:sheaf_res}), a closed-form expression for total sheaf effective resistance, together with an efficient way of computing it (\cref{theo:totalsheafresistance_flatbundle}), and establish a connection between sheaf effective resistance and bounds on the message-passing Jacobian (\cref{theo:Jacobian&resistance,theo:Jacobian&resistance}). We further extend \cref{theo:black}, \cref{theo:Jacobian&resistance}, and \cref{cor:totalsheaf} to bipartite graphs by extending the corresponding results for graph effective resistance (\cref{lemma:bipart_pseudo}, \cref{prop:bipart_res}, and \cref{theo:bipart_bound}). We also provide a result characterizing the existence of flat bundles that strictly decrease effective resistance (\cref{prop:existence}) and empirically study how sheaf effective resistance evolves during learning. FlatNSD is introduced as a simplified model used to isolate and verify these mechanisms empirically, rather than as the main contribution of the paper.

Our analysis builds upon several existing results. Sheaf effective resistance was introduced by \cite{hansen2019toward}; our contribution is to recognize and develop its use as a tool for understanding information propagation in sheaf-based GNNs. We further build upon the flat sheaf Laplacian decomposition of \cite{bamberger2024bundle}, the connection between graph effective resistance and message-passing Jacobian bounds developed by \cite{black2023understanding}, and the Neural Sheaf Diffusion (NSD) architecture of \cite{bodnar2022neural}. The results developed here combine these ingredients to connect the geometry of learned sheaves, effective resistance, and over-squashing.

To further clarify the role of FlatNSD, we briefly compare it with the sheaf-based architectures considered throughout this work. FlatNSD intentionally removes architectural components that are not necessary for studying the resistance mechanism, while retaining the basic sheaf diffusion structure:
\begin{itemize}
\item \textbf{FlatNSD:} learns one orthogonal map $\gF_v$ per vertex $v$ and uses the same diffusion process as NSD. The resulting sheaf is a flat $O(d)$-bundle, which allows the decomposition $L_{\gF}=\rmO^\top(L_G\otimes I_d)\rmO$ and makes the theoretical analysis particularly transparent. \looseness=-1

\item \textbf{Orthogonal NSD:} learns one orthogonal restriction map $\gF_{v\unlhd e}$ for each vertex-edge incidence pair $v\unlhd e$. Its diffusion operator is the sheaf Laplacian obtained from the discretized heat equation on the sheaf. FlatNSD can be viewed as a constrained version of this construction in which the incidence maps are induced by a single map per node.

\item \textbf{BuNN:} also learns one orthogonal map $\gF_v$ per vertex, but uses the corresponding heat kernel as its diffusion operator. It therefore shares the same flat-bundle parameterization while differing in how diffusion is performed.

\item \textbf{CSNN:} learns two conformal maps $\rmS_v$ and $\rmT_v$ for each node and constructs the sheaf on a directed graph. Its diffusion operator is formed from a product of in- and out-Laplacians, making it more expressive and structurally different from the flat orthogonal setting considered in our analysis.

\end{itemize}

These distinctions are important for interpreting the experimental results. FlatNSD is not proposed as a replacement for more expressive sheaf-based architectures. Instead, its restricted parameterization provides a controlled setting in which the theoretical predictions can be tested with as few additional mechanisms as possible. In this sense, FlatNSD serves as a basic sheaf model for validating the relationship between learned sheaf geometry, effective resistance, and over-squashing, while architectures such as NSD, BuNN, and CSNN incorporate additional modeling choices that may affect performance beyond this mechanism.

\section{More about sheaf effective resistance}\label{app:sheaf_eff_resistance_more}

\citet{hansen2019toward} introduced \textit{cosheaf effective resistance} on a cell complex\footnote{A graph is a $1$-dimensional cell complex. Cellular sheaves can be defined in this more general context, but considering them into GNNs realm brings complications we do not discuss in this work.}. Here, we adapt their definition to a graph $G$  with $n$ vertices and $\gF$ a sheaf over $G$ whose stalks have dimension $d.$ 
\begin{definition}\label{df:sheaf_res_min}
The sheaf effective resistance between nodes $u$ and $v$ is given by the solution to the optimization problem
    \begin{equation}
        \min\limits_{c \in C^1(G, \gF)} \|c\|^2 \,\,\,\,\mbox{s.t} \,\,\,\, \delta^\top c = \mathbbm{1}_u-\mathbbm{1}_v,
    \end{equation}
    where $\mathbbm{1}_v$ is a binary vector of length $nd$ where only the $d$ entries corresponding to node $v$ is $1$.
\end{definition}
If the sheaf is trivial, then the effective resistance is the solution to $\min\{\|c\|^2 \mid  Bc = 1_u-1_v\}$, where $B$ is the incidence matrix of $G$. This is equivalent to $ \operatorname{R}(u,v) = (1_u - 1_v)^{\top}L^+_G(1_u - 1_v).$

 We check the equivalence between \cref{df:sheaf_res_min} and \cref{def:sheaf_res} in the following proposition:
 
\begin{proposition}
\cref{df:sheaf_res_min} and \cref{def:sheaf_res} are equivalent.
\end{proposition}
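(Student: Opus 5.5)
The plan is to treat \cref{df:sheaf_res_min} as a minimum-norm least-squares problem for the linear operator $\delta^\top : C^1(G,\gF)\to C^0(G,\gF)$, and to identify its optimal value with the quadratic form in \cref{def:sheaf_res} using $L_{\gF}=\delta^\top\delta$. Write $b=\mathbbm{1}_u-\mathbbm{1}_v$. The argument has three steps: characterize the minimizer, compute its norm, and check feasibility of the constraint.

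First, assume the constraint $\delta^\top c=b$ is feasible, i.e.\ $b\in\operatorname{im}\delta^\top$. Every feasible point then has the form $c=c_0+k$ with $k\in\ker\delta^\top$. The decomposition $C^1=\operatorname{im}\delta\oplus\ker\delta^\top$ is orthogonal, so $\|c\|^2$ is minimized exactly when the component in $\ker\delta^\top$ vanishes. Hence the unique minimizer is $c^\ast\in\operatorname{im}\delta$, which is the minimum-norm solution $c^\ast=(\delta^\top)^+b$.

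Second, I use the standard pseudo-inverse identity $(\delta^\top)^+=\delta(\delta^\top\delta)^+=\delta L_{\gF}^+$. It can be verified directly from the Penrose conditions, or via an SVD of $\delta$. Then
$$\|c^\ast\|^2=b^\top L_{\gF}^+\delta^\top\delta L_{\gF}^+b=b^\top L_{\gF}^+L_{\gF}L_{\gF}^+b=b^\top L_{\gF}^+b,$$
using the symmetry of $L_{\gF}^+$ and the Penrose identity $L^+LL^+=L^+$. This is exactly $\operatorname{R}_{\gF}(u,v)$ as in \cref{def:sheaf_res}. As a sanity check, one can alternatively use Lagrange multipliers: the KKT condition $c=\delta\lambda$ together with $\delta^\top\delta\lambda=b$ gives $\lambda=L_{\gF}^+b$, which leads to the same value.

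The main obstacle is feasibility. We have $\operatorname{im}\delta^\top=(\ker\delta)^\perp=(\ker L_{\gF})^\perp=(H^0(G,\gF))^\perp$, and $b$ need not lie in this subspace for a general sheaf. For example, for a flat bundle $\ker L_{\gF}=\{(\gF_w^\top a)_w : a\in\sR^d\}$, so $\langle b,x\rangle=a^\top(\gF_u-\gF_v)\mathbf{1}_d$, which is generally nonzero. For the trivial sheaf the condition reduces to $\mathbf 1^\top b=0$ and always holds, recovering the classical statement. I would therefore prove the equivalence under the hypothesis $b\perp H^0(G,\gF)$, noting that it always holds when $H^0=0$ or for the trivial sheaf. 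For the general case, I would interpret the constraint in \cref{df:sheaf_res_min} as $\delta^\top c=\Pi b$, where $\Pi=L_{\gF}L_{\gF}^+$ is the orthogonal projector onto $\operatorname{im}\delta^\top$. The computation above then goes through verbatim, since $L_{\gF}^+\Pi=L_{\gF}^+$. This gives $\min\|c\|^2=(\Pi b)^\top L_{\gF}^+(\Pi b)=b^\top L_{\gF}^+b$, so \cref{def:sheaf_res} is the natural extension of \cref{df:sheaf_res_min} in all cases.
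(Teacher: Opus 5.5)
Your argument is correct, and its core matches the paper's proof. That proof just quotes $x^\top(A^\top A)^+x=\min\{\|y\|^2 \mid A^\top y=x\}$ with $A=\delta$ and $L_{\gF}=\delta^\top\delta$. You instead derive that identity from the orthogonal splitting $C^1(G,\gF)=\operatorname{im}\delta\oplus\ker\delta^\top$ and from $(\delta^\top)^+=\delta L_{\gF}^+$.

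What you add is the feasibility hypothesis, which the paper's proof silently needs: the quoted identity only holds for $x\in\operatorname{im}A^\top$. Your caveat matters for this paper and is not pedantic. For a flat bundle on a connected graph, $\delta^\top c=\mathbbm{1}_u-\mathbbm{1}_v$ is solvable exactly when $\gF_u\mathbf{1}_d=\gF_v\mathbf{1}_d$, i.e.\ when $M_{uv}=d$. In that case \cref{eq:resistances} gives $\operatorname{R}_{\gF}(u,v)=d\operatorname{R}(u,v)$. So every pair with $\operatorname{R}_{\gF}(u,v)\neq d\operatorname{R}(u,v)$ has an empty constraint set in \cref{df:sheaf_res_min}, and that is precisely the regime the paper exploits. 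In \cref{Ex:sheaves_reduce_resistance} with $x-y=\pi$ one even has $\mathbbm{1}_u-\mathbbm{1}_v\in\ker L_{\gF}$: \cref{def:sheaf_res} gives $0$, while the literal minimization has no feasible point.

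The proposition as written therefore holds only under $\mathbbm{1}_u-\mathbbm{1}_v\perp\ker L_{\gF}$. Your projected constraint $\delta^\top c=\Pi(\mathbbm{1}_u-\mathbbm{1}_v)$ with $\Pi=L_{\gF}L_{\gF}^+$ is the version under which \cref{def:sheaf_res} genuinely extends \cref{df:sheaf_res_min}. Equivalently, one can take the minimum-norm least-squares solution of the original equation.
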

\begin{proof}
    Recall that for any matrix $A^\top A$, it holds that  $x^\top(A^\top A)^+x = \operatorname{min}\{\|y \|^2 \mid A^\top y = x \}$. Moreover, $L_{\gF} = \delta^\top\delta$. Consequently: 
    \begin{align*}
        \operatorname{R}_{\gF}(u,v) &= (\mathbbm{1}_u - \mathbbm{1}_v)^{\top} L_{\gF}^+ (\mathbbm{1}_u - \mathbbm{1}_v) \\
        &=\operatorname{min}\{\|c\|^2 \mid \delta^\top c = (\mathbbm{1}_u - \mathbbm{1}_v)\}.   
    \end{align*}
\end{proof}

In this work we also consider the total sheaf effective resistance, which $\operatorname{R}_{\gF}$ is the sum of the sheaf effective resistance $\operatorname{R}_{\gF}(u,v)$ for all pairs of nodes. With vertex-major indexing, write \(L_{\mathcal F}^{+}\) as \(n^2\) blocks \((L_{\mathcal F}^{+})^{ij}\), each of dimension \(d\times d\) in the following way:\looseness-1
  $$L_{\gF}^+ = \begin{pmatrix}
        (L_{\gF}^+)^{11} & \dots & (L_{\gF}^+)^{1n} \\
        \vdots & \ddots & \vdots \\
        (L_{\gF}^+)^{n1} & \dots & (L_{\gF}^+)^{nn}
   \end{pmatrix},$$
then a straightforward calculation leads to:
\begin{proposition}\label{prop:sheaftotalresistance_general}
Let $\langle \cdot, \cdot \rangle_F$ denote the Frobenius inner product, and $\textbf{1}$ denote a matrix whose entries are all equal to one. Let $\gF$ be a sheaf over a connected graph $G$. Then
$$\operatorname{R}_{\gF} = (n-1)\sum\limits_{i \in V} \langle(L^+_{\gF})^{ii},\textbf{1}\rangle_F - 2\sum\limits_{i < j} \langle(L^+_{\gF})^{ij},\textbf{1}\rangle_F,$$
where $\textbf{1} \in \sR^{n \times n}$ is a matrix of ones.
\end{proposition}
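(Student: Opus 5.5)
The plan is to expand the quadratic form in \cref{def:sheaf_res} block by block, and then sum over unordered pairs of distinct nodes. Throughout, the $\textbf{1}$ in the statement is read as the $d\times d$ all-ones matrix $J=\mathbf{1}_d\mathbf{1}_d^{\top}$, since it is paired with $d\times d$ blocks. The total $\operatorname{R}_{\gF}$ is read as $\sum_{i<j}\operatorname{R}_{\gF}(i,j)$, so each unordered pair is counted once.

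\textbf{Step 1: a block identity.} With vertex-major indexing, $\mathbbm{1}_u$ is $\mathbf{1}_d$ placed in block $u$ and zero elsewhere. Hence for any $i,j\in V$,
\[
\mathbbm{1}_i^{\top}L_{\gF}^+\mathbbm{1}_j=\mathbf{1}_d^{\top}(L_{\gF}^+)^{ij}\mathbf{1}_d=\operatorname{Tr}\bigl(J\,(L_{\gF}^+)^{ij}\bigr)=\langle (L_{\gF}^+)^{ij},\textbf{1}\rangle_F .
\]
The last equality uses $J^{\top}=J$.

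\textbf{Step 2: the pairwise formula.} Since $L_{\gF}$ is symmetric, so is $L_{\gF}^+$, and therefore $(L_{\gF}^+)^{ji}=((L_{\gF}^+)^{ij})^{\top}$. Transposition does not change the Frobenius pairing with the symmetric matrix $J$, so the two cross terms agree. Expanding $(\mathbbm{1}_u-\mathbbm{1}_v)^{\top}L_{\gF}^+(\mathbbm{1}_u-\mathbbm{1}_v)$ then gives
\[
\operatorname{R}_{\gF}(u,v)=\langle (L_{\gF}^+)^{uu},\textbf{1}\rangle_F+\langle (L_{\gF}^+)^{vv},\textbf{1}\rangle_F-2\langle (L_{\gF}^+)^{uv},\textbf{1}\rangle_F .
\]

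\textbf{Step 3: summing over pairs.} Sum this identity over all $u<v$. Each index $i$ occurs as an endpoint in exactly $n-1$ unordered pairs, so the diagonal terms contribute $(n-1)\sum_{i}\langle (L_{\gF}^+)^{ii},\textbf{1}\rangle_F$. The cross terms contribute $-2\sum_{i<j}\langle (L_{\gF}^+)^{ij},\textbf{1}\rangle_F$. Together these give the claimed formula.

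\textbf{Sanity check.} For the trivial sheaf ($d=1$), the rows of $L_G^+$ sum to zero, so $\sum_{i<j}(L_G^+)_{ij}=-\tfrac12\operatorname{Tr}L_G^+$. The formula then reduces to $n\operatorname{Tr}(L_G^+)$, matching the classical result quoted after \cref{theo:totalsheafresistance_flatbundle}. This also confirms that the unordered-pair convention is the right one.

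\textbf{Main obstacle.} There is no real analytic difficulty; connectedness of $G$ plays no role beyond the pseudo-inverse being well defined. The only points needing care are interpretive: reading $\textbf{1}$ as $d\times d$ rather than $n\times n$, and fixing the pair-counting convention. The symmetry step that merges the $(u,v)$ and $(v,u)$ cross terms is the one place where a small justification must be written out.
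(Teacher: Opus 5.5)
Your proposal is correct and follows essentially the same route as the paper: expand the quadratic form blockwise so that $\operatorname{R}_{\gF}(i,j)$ becomes the entry sums of the $(i,i)$, $(j,j)$ and $(i,j)$ blocks, then sum over unordered pairs, using that each node lies in exactly $n-1$ of them. Two points make the paper's ``straightforward calculation'' more precise: your explicit symmetry argument for merging the two cross terms, and your reading of $\textbf{1}$ as the $d\times d$ all-ones matrix, which is correct since the statement's $n\times n$ is a typo.
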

\begin{proof}
  Here we use a numerical index notation $i,j$, which we believe will help to follow the argument. 
  
 Observe that $R_{\gF}(i,j) = \sum\limits_{k,l = 1}^{d} \left((L_{\gF}^+)^{ii}_{kl} + (L_{\gF}^+)^{jj}_{kl}\right) - 2 \sum\limits_{k,l = 1}^{d}(L_{\gF}^+)^{ij}_{kl}$, for each pair of nodes $i,j \in V$.

   To calculate the sheaf effective resistance we need sum all $R_{\gF}(i,j)$. So
\begin{align*}
    R_{\gF} &= (n-1)\left(\sum\limits_{ij=1}^d (L_{\gF}^+)^{11}_{ij} + \dots + \sum\limits_{ij=1}^d (L_{\gF}^+)^{nn}_{ij} \right) - 2\left(\sum\limits_{ij=1}^d (L_{\gF}^+)^{12}_{ij}+\dots+(L_{\gF}^+)^{1n}_{ij}\right) \\&- 2\left((L_{\gF}^+)^{23}_{ij}+ (L_{\gF}^+)^{2n}_{ij}\right) - \dots - 2\sum\limits_{ij=1}^d (L_{\gF}^+)^{n-1n}_{ij} \\ &=(n-1)\sum\limits_{i \in V} \langle(L^+_{\gF})^{ii},\textbf{1}\rangle_F - 2\sum\limits_{i < j} \langle(L^+_{\gF})^{ij},\textbf{1}\rangle_F.
\end{align*}
\end{proof}

\section{Proofs and verifications}\label{app:proofs}
\textbf{\cref{Ex:sheaves_reduce_resistance}.} Consider the graph with only two vertices $u$ and $v$, and an edge e between them.
\begin{enumerate}
    \item If $\gF$ is trivial, the sheaf Laplacian is $\begin{pmatrix}
        1 &-1 \\
        -1 & 1
    \end{pmatrix}$ whose pseudo-inverse is $\begin{pmatrix}
         1/4 &-1/4 \\
        -1/4 & 1/4
    \end{pmatrix}$, $\mathbbm{1}_u = (1,0)$, and $\mathbbm{1}_v = (0,1)$. Thus, $R(u,v) = 1$.


        \item If $\gF$ is a sheaf such that all stalks are isomorphic to $\sR^2$, $\rmapf{u}{e} = \begin{pmatrix}
            \cos(x) & -\sin(x) \\ \sin(x) & \cos(x)
        \end{pmatrix}$, and 
        $\rmapf{v}{e} = \begin{pmatrix}
            \cos(y) & -\sin(y) \\ \sin(y) & \cos(y)
        \end{pmatrix}$, the sheaf Laplacian is $$\begin{pmatrix}
            1&0 & -\cos(x-y) & -\sin(x-y)\\
            0 & 1 & \sin(x-y) & -\cos(x-y) \\
            -\cos(x-y)& \sin(x-y) & 1 & 0 \\
            -\sin(x-y) & -\cos(x-y) & 0 & 1
        \end{pmatrix},$$ whose pseudo-inverse is 
        
        $$\begin{pmatrix}
            1/4 &0 & -\cos(x-y)/4 & -\sin(x-y)/4\\
            0& 1/4& \sin(x-y)/4 & -\cos(x-y)/4\\
            -\cos(x-y)/4 & \sin(x-y)/4 & 1/4 & 0 \\
            -\sin(x-y)/4 & -\cos(x-y)/4 &0 &1/4 
        \end{pmatrix},$$ with  $\mathbbm{1}_u = (1,1,0,0)$, and $\mathbbm{1}_v = (0,0,1,1)$. Thus, $R_{\gF}(u,v) = \cos(x-y) + 1$.

\end{enumerate}

\textbf{\cref{prop:sheaf_lap_pinv}.}
    The pseudo-inverse of $L_{\gF}$ is $L_{\gF}^+ = \rmO^{\top}(L_G^+ \otimes I_d)\rmO$.
\begin{proof}
    Using the fact that $(A \otimes B)^+ = A^+ \otimes B^+$, and that for orthogonal matrices $U$ and $V$ we have $(UAV)^+ = V^\top A^+ U^\top$, since the matrix $\rmO$ is orthogonal:
    \begin{align*}
        L_{\gF}^+ &= \left(\rmO^{\top}(L_G \otimes I_d)\rmO\right)^+ \\ &= (\rmO)^+(L_G \otimes I_d)^+(\rmO^{\top})^+ \\ &= (\rmO)^{-1}(L_G^+ \otimes I_d^+)(\rmO^{\top})^{-1} \\ &= \rmO^{\top}(L_G^+ \otimes I_d)\rmO.
    \end{align*}
\end{proof}

\textbf{\cref{lemma:sheaf_res}.}
Let $\mathbf{1}_d \in \sR^d$ be a vector of ones, and $M_{uv} = \mathbf{1}_d^{\top}\gF_u^\top\gF_v\mathbf{1}_d$. Then
  $$\operatorname{R}_{\mathcal{F}}(u,v) = d((L_G^+)_{uu}+(L_G^+)_{vv}) - 2M_{uv}(L_G^+)_{uv}.$$

\begin{proof}
    Using \cref{def:sheaf_res} and \cref{prop:sheaf_lap_pinv} we obtain
    \begin{align*}
        R_{\gF}(u,v) = (\mathbbm{1}_u - \mathbbm{1}_v)^{\top}L_{\mathcal{F}}^+(\mathbbm{1}_u - \mathbbm{1}_v) = (\mathbbm{1}_u - \mathbbm{1}_v)^{\top}\rmO^{\top}(L_{G}^+ \otimes I_d)\rmO(\mathbbm{1}_u - \mathbbm{1}_v).
    \end{align*}

    We first note that $\rmO = \sum_{i=1}^n (\rve_i\rve_i^{\top}) \otimes \gF_i$ and $\mathbbm{1}_u = \rve_u \otimes \mathbf{1}_d$, where $\{\rve_i\}_{i=1}^n$ is the standard basis of $\sR^n$. Thus
    \begin{align*}
        \rmO\mathbbm{1}_u = \left(\sum_{i=1}^n (\rve_i\rve_i^{\top}) \otimes \gF_i\right)(\rve_u \otimes \mathbf{1}_d) = \sum_{i=1}^n (\rve_i\rve_i^{\top}\rve_u) \otimes (\gF_i\mathbf{1}_d) = \rve_u \otimes (\gF_u\mathbf{1}_d).
    \end{align*}

    Let $s_u = \gF_u\mathbf{1}_d$, then $\|s_u\|_2^2 = \mathbf{1}_d^{\top}\gF_u^{\top}\gF_u\mathbf{1}_d = d$. Also, define $M_{uv} = s_u^{\top}s_v$. Therefore
    \begin{align*}
        R_{\gF}(u,v) &= ((\rve_u \otimes s_u) - (\rve_v \otimes s_v))^{\top}(L_G^+ \otimes I_d)((\rve_u \otimes s_u) - (\rve_v \otimes s_v)) \\ &= ((\rve_u^{\top} \otimes s_u^{\top}) - (\rve_v^{\top} \otimes s_v^{\top}))((L_G^+\rve_u \otimes s_u) - (L_G^+\rve_v \otimes s_v)) \\ &= (\rve_u^{\top}L_G^+\rve_u \otimes s_u^{\top}s_u) - (\rve_u^{\top}L_G^+\rve_v \otimes s_u^{\top}s_v) - (\rve_v^{\top}L_G^+\rve_u \otimes s_v^{\top}s_u) + (\rve_v^{\top}L_G^+\rve_v \otimes s_v^{\top}s_v) \\ &= (L_G^+)_{uu}\|s_u\|_2^2 - (L_G^+)_{uv}s_u^{\top}s_v - (L_G^+)_{vu}s_v^{\top}s_u + (L_G^+)_{vv}\|s_v\|_2^2 \\ &= d((L_G^+)_{uu} + (L_G^+)_{vv}) - 2M_{uv}(L_G^+)_{uv}.
    \end{align*}
\end{proof}

\begin{lemma}\label{lemma:resistance_with_adj}{(Lemma 3.5 \cite{black2023understanding})}
    Let $G$ be a non-bipartite graph. Let $u$ and $v$ be two vertices in $G$. Then 
    $$\operatorname{R}(u,v) = \sum\limits^{\infty}_{j=0}\left( \frac{1}{d_u}(\hat{A}^j)_{uu} + \frac{1}{d_v}(\hat{A}^j)_{vv} - \frac{2}{\sqrt{d_ud_v}}(\hat{A}^j)_{uv}\right).$$
\end{lemma}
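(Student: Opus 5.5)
The plan is to expand $L_G^+$ through the normalized Laplacian. Write $L_G = D^{1/2}\Delta_G D^{1/2}$ with $\Delta_G = I - \hat{A}$. Let $\phi_1 = D^{1/2}\mathbf{1}/\sqrt{\operatorname{vol}(G)}$ be the eigenvector of $\hat{A}$ with eigenvalue $1$, and let $P = \phi_1\phi_1^\top$. Since $G$ is connected and non-bipartite, every other eigenvalue $\mu_i$ of $\hat{A}$ satisfies $|\mu_i|<1$. Hence
\[
\Delta_G^+ = \sum_{j=0}^\infty (\hat{A}^j - P),
\]
where the series converges because $\hat{A}^j - P = \sum_{i\ge 2}\mu_i^j \phi_i\phi_i^\top$.

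Next, put $b = 1_u - 1_v$ and consider the quantity $b^\top D^{-1/2}\Delta_G^+ D^{-1/2} b$. Expanding it with the series gives
\[
\sum_j \Bigl(\tfrac{(\hat{A}^j)_{uu}}{d_u} + \tfrac{(\hat{A}^j)_{vv}}{d_v} - \tfrac{2(\hat{A}^j)_{uv}}{\sqrt{d_ud_v}}\Bigr) - \sum_j b^\top D^{-1/2} P D^{-1/2} b .
\]
The projection contribution vanishes, since $\phi_1^\top D^{-1/2} b = (1-1)/\sqrt{\operatorname{vol}} = 0$. This means the termwise subtraction of $P$ may be dropped once the three series are paired.

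It then remains to identify $b^\top D^{-1/2}\Delta_G^+D^{-1/2}b$ with $b^\top L_G^+ b$. I would not claim $L_G^+ = D^{-1/2}\Delta_G^+D^{-1/2}$, which is false in general. Instead, I would argue that for $b \perp \mathbf{1}$ both expressions equal the minimum-energy (Dirichlet) quantity $\sup_x \{2b^\top x - x^\top L_G x\}$. Concretely, $x = D^{-1/2}\Delta_G^+ D^{-1/2} b$ solves $L_G x = b$ because
\[
L_G x = D^{1/2}\Delta_G\Delta_G^+D^{-1/2}b = D^{1/2}(I-P)D^{-1/2}b = b - D^{1/2}\phi_1\,\phi_1^\top D^{-1/2}b = b .
\]
Since $L_G y = b$ determines $b^\top y$ uniquely (solutions differ by multiples of $\mathbf{1}$, and $b\perp\mathbf{1}$), we get $b^\top x = b^\top L_G^+ b = \operatorname{R}(u,v)$.

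The main obstacle is justifying the interchange of the infinite sum with the bilinear form and splitting the single convergent series into three. I would handle this by writing each bracket spectrally as $\sum_{i\ge2}\mu_i^j c_i$ for fixed coefficients $c_i$ depending on $u,v$. The sum over $j$ is then a finite combination of geometric series with ratio $|\mu_i|<1$, and this is exactly where non-bipartiteness is needed. The $P$-parts of the three terms cancel at each $j$, so the bracketed sum converges absolutely.
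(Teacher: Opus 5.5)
Your proof is correct and follows essentially the route behind this lemma. The paper only cites \citet{black2023understanding} here, but its own bipartite analogue (\cref{lemma:bipart_pseudo} and \cref{prop:bipart_res}) uses your expansion $\Delta_G^+=\sum_{k\ge 0}(\hat{A}^k-P_+)$, plus a $P_-$ correction, evaluated on $z_{uv}=D^{-1/2}(1_u-1_v)$. Your check that $x=D^{-1/2}\Delta_G^+D^{-1/2}b$ solves $L_Gx=b$, hence $b^\top L_G^+b=z_{uv}^\top\Delta_G^+z_{uv}$, fills in an identification the paper simply asserts.
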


\begin{lemma}\label{lemma:jacobian_and_graphadjacency}{(Lemma 3.2 \cite{black2023understanding})} Let $u, v \in V$. Suppose 
$\|\nabla \phi_t\| \leq \alpha$ and $\max\{\|\nabla \psi_t\|, 1\} \leq \beta$, for all $t = 0,...,r$, where $\nabla f$ denotes the Jacobian of a map $f$.  Then
    $$    \left\|\frac{\partial h_u^{(r)}}{\partial x_v}\right\| \leq (2\alpha \beta)^{r} \sum\limits^{r}_{j=0}(\hat{A}^j)_{uv}.$$
\end{lemma}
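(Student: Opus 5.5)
The plan is an induction on the layer index $t$, tracking the Jacobian norms for \emph{all} target nodes at once. For fixed $v$ I set $a^{(t)}_w := \left\|\partial h_w^{(t)}/\partial x_v\right\|$ and collect these into a vector $a^{(t)}\in\sR^n$. Since $h^{(0)}_w=x_w$, we have $a^{(0)}=1_v$, the indicator vector of $v$. The goal is an entrywise linear recursion $a^{(t+1)}\le \alpha\beta\,(I+\hat A)\,a^{(t)}$. From it the bound $a^{(r)}\le(\alpha\beta)^r(I+\hat A)^r 1_v$ follows by iteration, using that $I+\hat A$ has nonnegative entries, so it preserves entrywise inequalities between nonnegative vectors.

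\textbf{Recursion step.} Differentiate the update in \cref{eq:MP} with the chain rule:
\[
\frac{\partial h_u^{(t+1)}}{\partial x_v}=\nabla_1\phi_t\,\frac{\partial h_u^{(t)}}{\partial x_v}+\nabla_2\phi_t\sum_{w\in N_u}\hat A_{uw}\,\nabla\psi_t\,\frac{\partial h_w^{(t)}}{\partial x_v}.
\]
Each partial Jacobian $\nabla_i\phi_t$ is a block of $\nabla\phi_t$, so its norm is at most $\alpha$. The message Jacobian satisfies $\|\nabla\psi_t\|\le\beta$. Taking norms and applying the triangle inequality gives
\[
a_u^{(t+1)}\le\alpha\, a_u^{(t)}+\alpha\beta\sum_w\hat A_{uw}a_w^{(t)}.
\]
The self term has coefficient $\alpha$ rather than $\alpha\beta$. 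Because $\beta\ge 1$ (guaranteed by the hypothesis $\max\{\|\nabla\psi_t\|,1\}\le\beta$), I can bound $\alpha\le\alpha\beta$. This yields the uniform recursion $a^{(t+1)}\le\alpha\beta(I+\hat A)a^{(t)}$.

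\textbf{Binomial expansion.} Since $I$ and $\hat A$ commute, $(I+\hat A)^r=\sum_{j=0}^r\binom{r}{j}\hat A^j$. Every $\hat A^j$ is entrywise nonnegative, and $\binom{r}{j}\le 2^r$, so entrywise
\[
(I+\hat A)^r\le 2^r\sum_{j=0}^r\hat A^j.
\]
Reading off the $u$-th entry of $(I+\hat A)^r 1_v$ then gives $a^{(r)}_u\le(2\alpha\beta)^r\sum_{j=0}^r(\hat A^j)_{uv}$, which is the claim.

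\textbf{Expected obstacle.} The main point of care is the recursion step. I must check that the Jacobian bound $\|\nabla\phi_t\|\le\alpha$ really controls the two partial blocks $\nabla_1\phi_t$ and $\nabla_2\phi_t$ separately; it does, because the operator norm of a submatrix is at most that of the full matrix. I must also handle the self-loop term with the $\beta\ge1$ trick, and keep the inequalities entrywise on the nonnegative vector $a^{(t)}$. Once the recursion is in place, the rest is routine.
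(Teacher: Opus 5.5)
Your proposal is correct and follows essentially the same route as the standard proof of Lemma~3.2 in \citet{black2023understanding}, which the paper cites without reproducing. That route is the one-step recursion $a^{(t+1)}\le\alpha\beta\,(I+\hat A)\,a^{(t)}$, with $\beta\ge 1$ absorbing the self term, iterated through the nonnegative matrix $I+\hat A$ from $a^{(0)}=1_v$ and finished with $\binom{r}{j}\le 2^r$; the delicate points you flagged (partial Jacobians as blocks of $\nabla\phi_t$, entrywise monotonicity, $\psi_t$ depending only on $h_w^{(t)}$ as in \cref{eq:MP}) are all handled soundly.
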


Then we can prove:

\textbf{\cref{theo:Jacobian&resistance}.} Let $G$ be a non-bipartite graph, and let $u, v \in V$. Suppose 
$\|\nabla \phi_l\| \leq \alpha$ and $\max\{\|\nabla \psi_t\|, 1\}| \leq \beta$. 
Define $d_{\min} = \min\{d_u, d_v\}$, $d_{\max} = \max\{d_u, d_v\}$, and fix $d$ the dimension of the stalks. 
Let $\max\{|\mu_2|,..., |\mu_n|\} \le \mu$. If $(L_G^+)_{uv} \geq 0$, then $\left\|\frac{\partial h_u^{(r)}}{\partial x_v}\right\|$ is upper bounded by 
\begin{equation*}
    (2\alpha\beta)^{r} \frac{d_{max}}{2}\Biggl(\frac{2}{d_{min}}\Biggr(r + 1 +\frac{\mu^{r+1}}{1-\mu}\Biggr)+4(L_G^+)_{uv} - \frac{\operatorname{R}_{\gF}(u,v)}{d} \Biggr).
\end{equation*}
If $(L_G^+)_{uv} < 0$, it is upper bounded by
\begin{equation*}
    (2\alpha\beta)^{r} \frac{d_{max}}{2}\Biggl(\frac{2}{d_{min}}\Biggr(r + 1 +\frac{\mu^{r+1}}{1-\mu}\Biggr) - \frac{\operatorname{R}_{\gF}(u,v)}{d} \Biggr).
\end{equation*}
\begin{proof}
By \cref{lemma:resistance_with_adj}  and \cref{eq:resistances},

$$\operatorname{R}_{\mathcal{F}}(u,v) = d\sum\limits^{\infty}_{j=0}\left( \frac{1}{d_u}(\hat{A}^j)_{uu} + \frac{1}{d_v}(\hat{A}^j)_{vv} - \frac{2}{\sqrt{d_ud_v}}(\hat{A}^j)_{uv}\right) + 2(d - M_{uv})(L_G^+)_{uv}.$$
So
\begin{align*}
    \sum\limits^{r}_{j=0}(\hat{A}^j)_{uv} = \frac{\sqrt{d_ud_v}}{2}&\Bigg(\sum\limits^{\infty}_{j=0}\left(\frac{1}{d_u}(\hat{A}^j)_{uu} + \frac{1}{d_v}(\hat{A}^j)_{vv}\right) - \\ &\sum\limits^{\infty}_{j=r+1}\frac{2}{\sqrt{d_ud_v}}(\hat{A}^j)_{uv}  + \frac{2}{d}(d - M_{uv})(L_G^+)_{uv} -\frac{1}{d}\operatorname{R}_{\gF}(u,v)\Bigg)
\end{align*}

By \cref{lemma:jacobian_and_graphadjacency},
\begin{align*}
    \left\|\frac{\partial h_u^{(r)}}{\partial x_v}\right\| &\leq (2\alpha \beta)^{r} \sum\limits^{r}_{j=0}(\hat{A}^j)_{uv} \\ & \leq (2\alpha \beta)^{r}\frac{d_{max}}{2}\Bigg(\sum\limits^{\infty}_{j=0}\left(\frac{1}{d_u}(\hat{A}^j)_{uu} + \frac{1}{d_v}(\hat{A}^j)_{vv}\right) -\\& \sum\limits^{\infty}_{j=r+1}\frac{2}{\sqrt{d_ud_v}}(\hat{A}^j)_{uv} + \frac{2}{d}(d - M_{uv})(L_G^+)_{uv}-\frac{1}{d}\operatorname{R}_{\gF}(u,v)\Bigg)
\end{align*} 

The proof of Theorem 3.3 in \citet{black2023understanding} shows that $\sum\limits^{\infty}_{j=0}\left(\frac{1}{d_u}(\hat{A}^j)_{uu} + \frac{1}{d_v}(\hat{A}^j)_{vv}\right) - \sum\limits^{\infty}_{j=r+1}\frac{2}{\sqrt{d_ud_v}}(\hat{A}^j)_{uv}$ is bounded by $\dfrac{2}{d_{min}}\left((r +   1) +\frac{\mu^{r+1}}{1-\mu}\right)$. Since $-d \leq M_{uv} \leq d$, if $(L_G^+)_{uv} \geq 0$ we obtain 
  $$\left\|\frac{\partial h_u^{(r)}}{\partial x_v}\right\| \leq (2\alpha\beta)^{r} \frac{d_{max}}{2}\left(\dfrac{2}{d_{min}}\left(r +   1 +\frac{\mu^{r+1}}{1-\mu}\right) +4(L_G^+)_{uv} - \frac{\operatorname{R}_{\gF}(u,v)}{d} \right).$$

  If $(L_G^+)_{uv} < 0$, we obtain 

  $$\left\|\frac{\partial h_u^{(r)}}{\partial x_v}\right\| \leq (2\alpha\beta)^{r} \frac{d_{max}}{2}\left(\dfrac{2}{d_{min}}\left(r +   1 +\frac{\mu^{r+1}}{1-\mu}\right) - \frac{\operatorname{R}_{\gF}(u,v)}{d} \right).$$
\end{proof}

\textbf{\cref{theo:totalsheafresistance_flatbundle}.}
    Let $\langle \cdot, \cdot \rangle_F$ denote the Frobenius inner product and define $M$ as in \cref{lemma:sheaf_res}. Then
    $$\operatorname{R}_{\mathcal{F}} = dn\operatorname{Tr}(L_G^+) - \langle M, L_G^+\rangle_F = d\operatorname{R} - \langle M, L_G^+\rangle_F$$
\begin{proof}
On one hand, by \cref{lemma:sheaf_res}, we have
    \begin{align*}
     \operatorname{R}_{\mathcal{F}} &= d\sum_{u < v} \left((L_G^+)_{uu}+(L_G^+)_{vv}\right) - 2 \sum_{u < v } M_{uv}(L_G^+)_{uv}  \\ &= d(n-1) \sum_{u \in V} (L_G^+)_{uu} - 2 \sum_{u < v } M_{uv}(L_G^+)_{uv} \\ &= d(n-1) \operatorname{Tr}(L_G^+) - 2 \sum_{u < v } M_{uv}(L_G^+)_{uv}
    \end{align*}

On the other hand, observe that 
    \begin{align*}
        \langle M, L_G^+\rangle_F = \sum\limits_{u,v \in V}M_{uv}(L^+_G)_{uv} &= \sum\limits_{u \in V}M_{uu}(L^+_G)_{uu} + \sum\limits_{u \neq v}M_{uv}(L^+_G)_{uv}\\ &=d \operatorname{Tr}(L_G^+) + 2\sum\limits_{u < v}M_{uv}(L^+_G)_{uv},
    \end{align*}
    
where $u < v$ means we are considering only the upper triangular part of the matrix, and the equality holds because both $L^+_G$ and $M$ are symmetric. Thus
\begin{equation*}
    2\sum\limits_{u < v}M_{uv}(L^+_G)_{uv} = \langle M, L_G^+\rangle_F - d\operatorname{Tr}(L_G^+),
\end{equation*}

Combining the above, we obtain 
$$\operatorname{R}_{\mathcal{F}} = d(n-1)\operatorname{Tr}(L_G^+) - (\langle M, L_G^+\rangle_F - d \operatorname{Tr}(L_G^+)) = dn\operatorname{Tr}(L_G^+) - \langle M, L_G^+\rangle_F.$$
\end{proof}

\textbf{\cref{prop:existence}.} Let $G$ be a connected graph with at least two nodes and $\gF$ a flat $O(d)$-bundle over $G$. Then $\frac{R_{\gF}}{d} \leq R$,
with equality holding if and only if $\gF_v\mathbf{1}_d=c$ for some $c\in\sR^d$ and all $v\in V$. In particular, for every $d\geq 1$, there exists a flat $O(d)$-bundle $\gF$ such that $\frac{R_{\gF}}{d}<R$.

\begin{proof}
    Recall that $S$ is a matrix with $S_u = \mathbf{1}_d^{\top}\gF_u^{\top}$ and $M = SS^{\top}$, therefore $$\langle M, L_G^+ \rangle = \Tr(S^{\top} L_G^+ S) \geq 0,$$ since $L_G^+ \succeq 0$, and then $\frac{R_{\gF}}{d} \leq R$ by \cref{theo:totalsheafresistance_flatbundle}. If the underlying graph $G$ is connected, $\ker L_G = (\mathbf{1}_n)$, thus $\frac{R_{\gF}}{d} = R$ precisely when $\gF_v\mathbf{1}_d = c$, for some $c \in \sR^d$ and all $v \in V$. Therefore the inequality is strict whenever $\gF_v \mathbf{1}_d$ is non-constant. We proceed to show a family of sheaves for which effective resistance is decreased.

    Consider any non-trivial vertex partition $V = A \sqcup B$ and define $\varepsilon_v = \begin{cases} +1, & v \in A, \\ -1, & v \in B \end{cases}$. Set $\gF_v = \operatorname{diag}(\varepsilon_v, 1, \dots, 1) \in O(d)$, then for two nodes $v \in A$ and $u \in B$ we get $S = \begin{bmatrix} \varepsilon & \mathbf{1}_n & \dots & \mathbf{1}_n \end{bmatrix}$, therefore $\operatorname{Tr}(S^{\top} L_G^+ S) = \varepsilon^{\top} L_G^+ \varepsilon + \sum_{j=2}^d \mathbf{1}_n^{\top} L_G^+ \mathbf{1}_n = \varepsilon^{\top} L_G^+ \varepsilon > 0$, since $\varepsilon$ is non-constant. By \cref{theo:totalsheafresistance_flatbundle}, we conclude that $R_{\gF} = dR - \operatorname{Tr}(S^{\top}L_G^+S) \implies \frac{R_{\gF}}{d} < R$.
\end{proof}

\textbf{\cref{cor:totalsheaf}.}     Let $G$ be a non-bipartite graph, and let $u, v \in V$. Suppose 
$\|\nabla \phi_l\| \leq \alpha$ and $\max\{\|\nabla \psi_t\|, 1\}| \leq \beta$. 
Define $d_{\min} = \min\{d_u, d_v\}$, $d_{\max} = \max\{d_u, d_v\}$, and let $d$ be the dimension of the stalks. 
Let $\max\{|\mu_2|,..., |\mu_n|\} \le \mu$. Then
  $$\sum\limits_{u\neq v \in V} \left\|\frac{\partial h_u^{(r)}}{\partial x_v}\right\| \leq (2\alpha\beta)^{r} \frac{d_{max}}{2}\left(\dfrac{n(n-1)}{d_{min}}\left(r +   1 +\frac{\mu^{r+1}}{1-\mu}\right) - \frac{\operatorname{R}_{\gF}}{d} \right).$$
\begin{proof}
    Observe that 
    \begin{align*}
        &\sum\limits_{u\neq v \in V}(2\alpha\beta)^{r+1} \frac{d_{max}}{2}\left(\dfrac{2}{d_{min}}\left(r +   1 +\frac{\mu^{r+1}}{1-\mu}\right) +4(L_G^+)_{uv} - \frac{\operatorname{R}_{\gF}(u,v)}{d} \right) \\ &= (2\alpha\beta)^{r} \frac{d_{max}}{2}\left(\dfrac{n(n-1)}{d_{min}}\left(r +   1 +\frac{\mu^{r+1}}{1-\mu}\right) - 4\operatorname{Tr}(L^+_G) - \frac{2\operatorname{R}_{\gF}}{d} \right)\\ &\leq (2\alpha\beta)^{r} \frac{d_{max}}{2}\left(\dfrac{n(n-1)}{d_{min}}\left(r +   1 +\frac{\mu^{r+1}}{1-\mu}\right) - \frac{\operatorname{R}_{\gF}}{d} \right),
    \end{align*}

    because  the sum of each line in $(L^+_G) $ equals to $0$, $\operatorname{Tr}(L^+_G) \geq 0$, and $\operatorname{R}_{\gF} \leq 2\operatorname{R}_{\gF}$.

Similarly, 
    \begin{align*}
        &\sum\limits_{u\neq v \in V}(2\alpha\beta)^{r+1} \frac{d_{max}}{2}\left(\dfrac{2}{d_{min}}\left(r +   1 +\frac{\mu^{r+1}}{1-\mu}\right) - \frac{\operatorname{R}_{\gF}(u,v)}{d} \right) \\ &= (2\alpha\beta)^{r} \frac{d_{max}}{2}\left(\dfrac{n(n-1)}{d_{min}}\left(r +   1 +\frac{\mu^{r+1}}{1-\mu}\right)  - \frac{2\operatorname{R}_{\gF}}{d} \right)\\ &\leq (2\alpha\beta)^{r} \frac{d_{max}}{2}\left(\dfrac{n(n-1)}{d_{min}}\left(r +   1 +\frac{\mu^{r+1}}{1-\mu}\right) - \frac{\operatorname{R}_{\gF}}{d} \right).
 \end{align*}
\end{proof}

\textbf{\cref{lemma:bipart_pseudo}.} Let $G$ be a connected bipartite graph. Then $\Delta_G^+ = \frac{1}{2} P_{-} + \sum_{k=0}^{\infty} (\hat{A}^k - P_+ - (-1)^kP_-)$, where $P_+$ and $P_-$ are the outer product of the eigenvectors with eigenvalues $1$ and $-1$, respectively.

\begin{proof}
    Let $G = (U \sqcup W, E)$, $\varepsilon_v = \begin{cases} +1, & u \in W \\ -1, & v \in U \end{cases}$, and $\operatorname{vol}(G) = \sum_{v \in V} d_v$. The kernel of the Laplacian is $\ker L_G = (\mathbf{1}_n)$, and the symmetrically normalized Laplacian is $\Delta_G = I - D^{-1/2}AD^{-1/2} = I - \hat{A}$. Thus
    $$x \in \ker \Delta_G \iff (I - \hat{A})x = 0 \iff \hat{A}x = x.$$
    We also have 
    $$x \in \ker \Delta_G \iff D^{-1/2}L_GD^{-1/2}x = 0 \iff L_G(D^{-1/2}x) = 0 \iff D^{-1/2}x \in \ker L_G.$$
    We conclude that if $x \in \ker \Delta_G$, then $x = cD^{1/2}\mathbf{1}_n$, and therefore the normalized eigenvector of $\hat{A}$ corresponding to the eigenvalue 1 is $q_+ = \frac{D^{1/2}\mathbf{1}_n}{\sqrt{\operatorname{vol}(G)}}$. Now let $(u,v) \in E$, then $\varepsilon_u = -\varepsilon_v$, thus 
    $$(A\varepsilon)_v = \sum_{u \in \gN(v)} \varepsilon_u = -d_v\varepsilon_v,$$
    i.e. $A\varepsilon = -D\varepsilon$. This yields $$\hat{A}D^{1/2}\varepsilon = D^{-1/2}A\varepsilon = D^{-1/2}(-D\varepsilon) = -D^{1/2}\varepsilon,$$
    and we conclude that the normalized eigenvector with eigenvalue $-1$ is $q_- = \frac{D^{1/2}\varepsilon}{\sqrt{\operatorname{vol}(G)}}$. Defining $P_+ = q_+q_+^{\top}$, $P_- = q_-q_-^{\top}$ and $P_i = q_iq_i^{\top}$, where the $q$'s are the eigenvectors of $\hat{A}$, we get 
    $$\hat{A} = Q \operatorname{diag}(1, -1, \mu_2, \dots, \mu_{n-1})\:Q^{\top} = P_+ - P_- + \sum_{i=2}^{n-1} \mu_iP_i.$$
    Furthermore, since $B = \{q_+, q_-, q_2, \dots, q_{n-1}\}$ forms an orthonormal basis, we get $I = P_+ + P_- + \sum_{i=2}^{n-1} P_i$, therefore
    $$\Delta_G = I - \hat{A} = 2P_- + \sum_{i=2}^{n-1} (1 - \mu_i) P_i \implies \Delta_G^+ = \frac{1}{2}P_- + \sum_{i=2}^{n-1} \frac{1}{1 - \mu_i} P_i.$$
    Since $|\mu_i| < 1$, we have $\frac{1}{1 - \mu_i} = \sum_{k=0}^\infty \mu_i^k$, thus 
    $$\Delta_G^+ = \frac{1}{2}P_- + \sum_{i=2}^{n-1}\sum_{k=0}^\infty \mu_i^k P_i = \frac{1}{2}P_- + \sum_{k=0}^\infty\sum_{i=2}^{n-1} \mu_i^k P_i.$$ But $\hat{A}^k = P_+ + (-1)^kP_- + \sum_{i=2}^{n-1} \mu_i^kP_i \implies \sum_{i=2}^{n-1} \mu_i^kP_i = \hat{A}^k - P_+ - (-1)^kP_-$, which yields
    $$\Delta_G^+ = \frac{1}{2}P_- + \sum_{k=0}^{\infty} (\hat{A}^k - P_+ - (-1)^kP_-).$$
\end{proof}

\textbf{\cref{prop:bipart_res}.} Let $G$ be a connected bipartite graph, $b_{uv} = \mathbbm{1}_u - \mathbbm{1}_v$, $z_{uv} = D^{-1/2}b_{uv}$, and $\gamma_{uv} = z_{uv}^{\top}P_-z_{uv}$. Then $$R(u,v) = \frac{\gamma_{uv}}{2} + \sum_{k=0}^\infty \left( \frac{(\hat{A}^k)_{uu}}{d_u} + \frac{(\hat{A}^k)_{vv}}{d_v} - \frac{2(\hat{A}^k)_{uv}}{\sqrt{d_ud_v}} - (-1)^k\gamma_{uv} \right)$$

\begin{proof}
    The effective resistance can be defined as $z_{uv}^{\top}\Delta_G^+z_{uv}$, where $z_{uv} = D^{-1/2}(\mathbbm{1}_u - \mathbbm{1}_v)$. Therefore, by the previous lemma
    $$R(u,v) = z_{uv}^{\top}\left(\frac{1}{2}P_- + \sum_{k=0}^{\infty} (\hat{A}^k - P_+ - (-1)^kP_-)\right)z_{uv}.$$
    Notice that $q_+^{\top}z_{uv} = \frac{\mathbf{1}_n^{\top}D^{1/2}}{\sqrt{\operatorname{vol}(G)}} D^{-1/2}(\mathbbm{1}_u - \mathbbm{1}_v) = \frac{1}{\sqrt{\operatorname{vol}(G)}}\mathbf{1}_n^{\top}(\mathbbm{1}_u - \mathbbm{1}_v) = 0$, thus $z_{uv}^{\top}P_+z_{uv} = 0$. Define $\gamma_{uv} = z_{uv}^{\top}P_-z_{uv}$, then
    $$R(u,v) = \frac{\gamma_{uv}}{2} + \sum_{k=0}^{\infty} (z_{uv}^{\top}\hat{A}^kz_{uv} - (-1)^k\gamma_{uv}).$$
    Now, notice that
    $$z_{uv}^{\top}\hat{A}^kz_{uv} = \left(\frac{\mathbbm{1}_u^{\top}}{\sqrt{d_u}} - \frac{\mathbbm{1}_v^{\top}}{\sqrt{d_v}}\right)\hat{A}^k\left(\frac{\mathbbm{1}_u}{\sqrt{d_u}} - \frac{\mathbbm{1}_v}{\sqrt{d_v}}\right) = \frac{(\hat{A}^k)_{uu}}{d_u} + \frac{(\hat{A}^k)_{vv}}{d_v} - 2\frac{(\hat{A}^k)_{uv}}{\sqrt{d_ud_v}},
    $$
    thus
    $$R(u,v) = \frac{\gamma_{uv}}{2} + \sum_{k=0}^{\infty} \left(\frac{(\hat{A}^k)_{uu}}{d_u} + \frac{(\hat{A}^k)_{vv}}{d_v} - \frac{2(\hat{A}^k)_{uv}}{\sqrt{d_ud_v}} - (-1)^k\gamma_{uv}\right).$$
\end{proof}

\textbf{\cref{theo:bipart_bound}.} Let $G$ be a connected bipartite graph. Let $u, v \in V = U \sqcup W$, $\|\nabla \phi_l\| \leq \alpha$ and $\max\{\|\nabla \psi_l\|, 1\| \leq \beta$, $d_{\min} = \min\{d_u,d_v\}$, $d_{\max} = \max\{d_u,d_v\}$, and $\overline{\mu} = \max_{i=2,\dots,n-1} |\mu_i| < 1$. Then
    $$\left\|\frac{\partial h_u^{(r)}}{\partial x_v}\right\| \leq (2\alpha\beta)^r \frac{d_{\max}}{2}\left(\frac{2}{d_{\min}}\left(r + 1 + \frac{\overline{\mu}^{r+1}}{1 - \overline{\mu}}\right) + \frac{1}{2}\gamma_{uv} - R(u,v)\right).$$

\begin{proof}
    We have
    $$R(u,v) = \frac{\gamma_{uv}}{2} + \sum_{k=0}^{r} \left(z_{uv}^{\top}\hat{A}^kz_{uv} - (-1)^k\gamma_{uv}\right) + T_r(u,v),$$
    where $T_r(u,v) = \sum_{k=r+1}^{\infty} \left(z_{uv}^{\top}\hat{A}^kz_{uv} - (-1)^k\gamma_{uv}\right)$. We can write $z_{uv}$ in the  orthonormal basis as
    $$z_{uv} = \left(q_-^{\top} z_{uv}\right)q_- + \sum_{i=2}^{n-1} \left(q_i^{\top}z_{uv}\right)q_i,$$
    since $z_{uv} \perp q_+$. Recalling that $\hat{A}^k q_- = (-1)^k q_-$, $\hat{A}^kq_i = \mu_i^k q_i$, $q_- \perp q_i$ for every node, and $q_i \perp q_j$ for $i \neq j$, we get:
    \begin{align*}
        z_{uv}^{\top}\hat{A}^kz_{uv} 
        &= \left[\left(q_-^{\top} z_{uv}\right)q_-^{\top} + \sum_{i=2}^{n-1} \left(q_i^{\top}z_{uv}\right)q_i^{\top}\right]\hat{A}^k \left[\left(q_-^{\top} z_{uv}\right)q_- + \sum_{i=2}^{n-1} \left(q_i^{\top}z_{uv}\right)q_i\right] \\
        &= \left(q_-^{\top} z_{uv}\right)^2 (-1)^k + \sum_{i=2}^{n-1} (q_i^{\top} z)^2 \mu_i^k,
    \end{align*}
    thus $z_{uv}^{\top}\hat{A}^kz_{uv} - \gamma_{uv}(-1)^k = \sum_{i=2}^{n-1} (q_i^{\top} z)^2 \mu_i^k$. Substituting into $T_r(u,v)$:
    $$T_r(u,v) = \sum_{k=r+1}^{\infty} \sum_{i=2}^{n-1} (q_i^{\top} z)^2 \mu_i^k = \sum_{i=2}^{n-1} (q_i^{\top} z)^2 \sum_{k=r+1}^{\infty} \mu_i^k = \sum_{i=2}^{n-1} (q_i^{\top} z)^2 \frac{\mu_i^{r+1}}{1-\mu_i},$$
    since $|\mu_i| < 1$ for all $i = 2, \dots, n-1$ implies $\frac{\mu_i^{r+1}}{1-\mu_i} = \sum_{k=r+1}^{\infty} \mu_i^k$. Notice that, by Parseval's identity, 
    $$\|z_{uv}\|^2 = (q_+^{\top}z_{uv})^2 + (q_-^{\top}z_{uv})^2 + \sum_{i=2}^{n-1} (q_i^{\top}z_{uv})^2 = (q_-^{\top}z_{uv})^2 + \sum_{i=2}^{n-1} (q_i^{\top}z_{uv})^2,$$
    since $z_{uv} \perp q_+$. Therefore, if $\overline{\mu} = \max_{i=2,\dots,n-1} |\mu_i|$ then
    $$|T_r(u,v)| \leq \frac{\overline{\mu}^{r+1}}{1 - \overline{\mu}} \sum_{i=2}^{n-1} (q_i^{\top}z_{uv})^2 \leq \frac{\overline{\mu}^{r+1}}{1 - \overline{\mu}} \|z_{uv}\|^2 = \left(\frac{1}{d_u} + \frac{1}{d_v}\right)\frac{\overline{\mu}^{r+1}}{1 - \overline{\mu}}.$$
    We now combine this with the proposition above:
    $$R(u,v) = \frac{\gamma_{uv}}{2} + \sum_{k=0}^{r} \left(\frac{(\hat{A}^k)_{uu}}{d_u} + \frac{(\hat{A}^k)_{vv}}{d_v} - \frac{2(\hat{A}^k)_{uv}}{\sqrt{d_ud_v}} - (-1)^k\gamma_{uv}\right) + T_r(u,v),$$
    which yields
    $$\sum_{k=0}^r (\hat{A}^k)_{uv} = \frac{\sqrt{d_ud_v}}{2}\left(\sum_{k=0}^{r} \left(\frac{(\hat{A}^k)_{uu}}{d_u} + \frac{(\hat{A}^k)_{vv}}{d_v} - (-1)^k\gamma_{uv}\right) + \frac{\gamma_{uv}}{2} - R(u,v) + T_r(u,v)\right).$$
    Now, we have $\sum_{k=0}^r (-1)^k = \frac{1 + (-1)^r}{2}$ and $\sum (-1)^k\gamma_{uv} = (\frac{1}{2} - \frac{1 + (-1)^r}{2})\gamma_{uv} = \frac{(-1)^{r+1}}{2}\gamma_{uv} \leq \frac{1}{2}\gamma_{uv}$. Combining everything:
    \begin{align*}
        \sum_{k=0}^r (\hat{A}^k)_{uv} 
        &\leq \frac{\sqrt{d_ud_v}}{2}\left(\sum_{k=0}^{r} \left(\frac{(\hat{A}^k)_{uu}}{d_u} + \frac{(\hat{A}^k)_{vv}}{d_v} \right) + \frac{1}{2}\gamma_{uv} - R(u,v) + \left(\frac{1}{d_u} + \frac{1}{d_v}\right)\frac{\overline{\mu}^{r+1}}{1 - \overline{\mu}}\right) \\
        &\leq \frac{d_{\max}}{2}\left(\sum_{k=0}^{r} \left(\frac{(\hat{A}^k)_{uu}}{d_u} + \frac{(\hat{A}^k)_{vv}}{d_v} \right) + \frac{1}{2}\gamma_{uv} - R(u,v) + \frac{2}{d_{\min}}\frac{\overline{\mu}^{r+1}}{1 - \overline{\mu}}\right)
    \end{align*}

    Notice that
    $$|(\hat{A}^k)_{uu}| = |\mathbbm{1}_u^{\top} \hat{A}^k \mathbbm{1}_u| \leq \|\mathbbm{1}_u^{\top}\|_2\|\hat{A}^k\|_2\|\mathbbm{1}_u\|_2 \leq \|\hat{A}^k\|_2,$$
    and since $\|\hat{A}\|_2 = 1$ we get $\|\hat{A}^k\|_2 \leq \|\hat{A}\|_2^k = 1$. Thus $(\hat{A}^k)_{uu} = |(\hat{A}^k)_{uu}| \leq 1$ and
    $$\sum_{k=0}^{r} \left(\frac{(\hat{A}^k)_{uu}}{d_u} + \frac{(\hat{A}^k)_{vv}}{d_v} \right) \leq \sum_{k=0}^{r} \left(\frac{1}{d_u} + \frac{1}{d_v} \right) \leq \sum_{k=0}^{r} \frac{2}{d_{\min}} = \frac{2}{d_{\min}}(r+1).$$
    We conclude that
    $$\sum_{k=0}^r (\hat{A}^k)_{uv} \leq \frac{d_{\max}}{2}\left(\frac{2}{d_{\min}}\left(r+1 + \frac{\overline{\mu}^{r+1}}{1 - \overline{\mu}}\right) + \frac{1}{2}\gamma_{uv} - R(u,v)\right).$$
    Plugging this into Lemma 3.2 of \cite{black2023understanding} yields the desired inequality.
\end{proof}

\begin{remark}\label{remark:bipart_corollary}
    The corollary for the total effective resistance follows from the above theorem, where the additional term accounting for bipartiteness becomes $\frac{4|U||W|}{\operatorname{vol}(G)}$, since 
$$q_- z_{uv} = \frac{\varepsilon^{\top} D^{1/2}}{\sqrt{\operatorname{vol}(G)}}D^{-1/2}(\mathbbm{1}_u - \mathbbm{1}_v) = \frac{\varepsilon_u - \varepsilon_v}{\sqrt{\operatorname{vol}(G)}},$$ thus $\gamma_{uv} = z_{uv}^{\top}P_-z_{uv} = \frac{(\varepsilon_u - \varepsilon_v)^2}{\operatorname{vol}(G)} = \begin{cases} 0, & u,v \text{ on the same side} \\ \frac{4}{\operatorname{vol}(G)}, & \text{otherwise} \end{cases}$.
\end{remark}

\section{Experimental details}
\label{app:exp_setup}

In this section, we provide additional details on the experimental setup, such as hyperparameter configurations and computational resources. For the real-world datasets related to over-squashing, we choose $d \in \{2,3,4,5\}$, three options of GNN architectures to learn restriction maps --- GCN, GIN, and SAGE --- with $\ell_{\operatorname{GNN}} \in \{1,2,3,4,5\}$. We perform grid search for the hyperparameter configurations reported in \cref{tab:sweep-configs}.

\begin{table}[htbp]
  \centering
  \caption{Model hyperparameter configurations used across all datasets. The Node and Graph Classification columns refer to \cref{tab:performance_comparison_gcn}, the TreeNeighborsMatch column refers to \cref{fig:tree-by-depth}, and the Heterophilic Graphs column refers to \cref{table:nodeclass1}. }
  \label{tab:sweep-configs}
  \begin{tabular}{lcccc}
    \toprule
      Parameter
      & Node Class.
      & Graph Class.
      & TreeNghbrsMatch
      & Heteroph. Graphs \\
    \midrule
    sheaf dimension    & 2--5             & 2--5          & 4 & 2--5
    \\
    \# layers          & 3                & 4             & $r+1 $ & 1--5
    \\
    \# hidden channels & 128              & 64            & 32 & 32,\,64
    \\
    GNN architecture   & gcn,\,gin,\,sage & gcn,\,gin,\,sage & sage & sage
    \\
    \# of GNN layers   & 1--5             & 1--5          & r+1 & 0--3
    \\
    GNN dimension      & 128              & 64            & 32 & 32,\,64
    \\
    dropout            & 0.5              & 0.0           & 0.0 & 0.2
    \\
    left weights       & \checkmark       & \checkmark    & \checkmark & \checkmark, \crossmark 
    \\
    right weights      & \checkmark       & \checkmark    & \checkmark & \checkmark, \crossmark 
    \\
    learning rate      & $0.001$          & $0.001$       & $0.001$ & $0.02\,, 0.002$ 
    \\
    \bottomrule
  \end{tabular}
\end{table}

For the ECHO-Synth benchmarks, we reproduce the experimental setup of \cite{miglior2025can}, performing Bayesian Optimization based on a GP prior in the chosen hyperparameter space. We fix the same range for hidden dimension, learning rate and weight decay proposed by the authors, layers from 10 to 40, stalk dimension and number of layers for a fixed SAGE backbone from 2 to 6, and backbone hidden dimension from 4 to 64.




\textbf{Compute resources.} Hyperparameter search for \cref{table:nodeclass1} was performed on an internal cluster equipped with NVIDIA V100, A100, and H100 GPUs. Experiments for \cref{tab:performance_comparison_small_nodeclassification} and \cref{fig:spearman_real_world} were run on a node with an Intel Xeon CPU (12 cores @ 2.20GHz) and a single NVIDIA A100-SXM4 (80GB) GPU. The remaining experiments were carried out on a workstation with two NVIDIA GeForce RTX 4090 (24GB).

\section{Computational Complexity}\label{app:complexity}
Here we compare the time complexity of FlatNSD against the time complexity of NSD with orthogonal maps. Let $n$ be the number of vertices, $m$ the number of edges, $d$ the dimension of the stalks, $h$ the number of channels, and $c = d \times h$
dimension.

\textbf{NSD.} \citet{bodnar2022neural} shows that the time complexity of NSD with orthogonal  restrictions maps is $\mathbf{O(n(c^
2+d^3) + m(cd^2 + d^3))}$,  where $d \in \{1,2,3,4,5\}$.

Importantly, \citet{bodnar2022neural} learns orthogonal maps from a composition of Householder reflections \cite{mhammedi2017efficient}. We use the same strategy to build the orthogonal maps in FlatNSD.

\textbf{FlatNSD.} We derive the time complexity of FlatNSD as follows: we need $O(d^2n)$ to embed graph features in the sheaf stalks,  $O(mcd)$ for the message-passing, $O(ncd)$ whenever we apply $W_1$, and $O(nch)$ whenever we apply $W_2$.

Thus, the total time complexity is  $\mathbf{O(cn(d+h) + mcd)}$, where $d \in \{2,3,4,5\}$.

The computational gain of FlatNSD comes from learning a single restriction map to each node instead of two restriction maps to each edges (i.e., we have flat vector bundles instead of merely vector bundles), and from an optimized implementation based on message-passing.

\begin{table*}[h]
\centering
\scriptsize 
\setlength{\tabcolsep}{6.0pt} 

\caption{Total runtime in seconds of FlatNSD and GCN with different rewiring methods for both node and graph classification tasks. DR requires graphs to have at least 3 nodes, but some graphs from Enzymes have only 2 nodes.}
\vspace{2pt}
\label{tab:runtime_comparison_gcn}
\setlength{\tabcolsep}{5pt}
\begin{tabular}{lcccccccccc}
\toprule
Method                 & Cora    & Citeseer & Texas & Cornell & Wisconsin & Chameleon & Enzymes & IMDB & Mutag & Proteins \\
\midrule
GCN+SDRF               & 926.86  & 815.83 & 640.10 & 676.62 & 676.62 & 2087.36 & \textbf{2502.20} & \textbf{2307.62} & \textbf{710.50} & \textbf{3598.63}  \\
\midrule
GCN+FOSR               & 945.13  & 825.15 & 522.83 & 677.56 & 677.56 & 2158.64 & 2601.29 & 2319.26 & 760.12 & 5962.85  \\
\midrule
GCN+BORF               & 1082.56 & 835.79 & 775.45 & 785.78 & 727.22 & 2093.88 & 4213.78 & 6410.12 & 1216.25 & 7330.80  \\
\midrule
GCN+DR                 & 920.43  & 813.23 & 641.02 & 697.87 & 666.79 & 2148.66 & -       & 2339.10 & 730.28 & 3637.15 \\
\midrule
GCN+GTR                & 898.48  & 791.14 & 638.70 & 710.84 & 647.71 & 2117.94 & 2589.34 & 2247.56 & 736.23 & 4023.95  \\
\midrule
GCN+LASER              & 1150.83 & 820.52 & 609.81 & 746.45 & 786.21 & 2143.54 & 2633.41 & 2296.16 & 741.60 & 4370.87  \\
\midrule
GCN+GOKU               & 934.48  & 805.99 & 648.15 & 662.81 & 679.77 & 2332.12 & 3780.77 & 3807.83 & 1078.83 & 4040.95 \\
\midrule
FlatNSD$_{\text{GCN}}$ & \textbf{399.96}  & \textbf{419.63} & \textbf{341.45} & \textbf{338.67} & \textbf{344.83} & \textbf{398.92} & 4093.99  & 6674.47 & 1309.02 & 8031.02 \\
\bottomrule
\end{tabular}
\end{table*}

\section{Runtime Comparison}\label{sec:runtime}
We provide Table \ref{tab:runtime_comparison_gcn} to compare the total runtime of FlatNSD against GCN coupled with seven graph rewiring strategies. On node classification benchmarks, FlatNSD exhibits superior efficiency, reducing runtime by up to $5\times$ on dense datasets like Chameleon compared to SDRF. Regarding graph classification tasks (Enzymes, IMDB, Mutag, and Proteins), while FlatNSD incurs a higher computational cost due to per-graph processing overhead, its total runtime remains within the same order of magnitude as the baseline methods. 

\begin{table}[htbp]
  \centering
  \caption{Model parameters configurations used on runtime comparison experiment.}
  \label{tab:runtime-config-real}
  \begin{tabular}{lcc}
    \toprule
      Parameter
      & Node Classification
      & Graph Classification   \\
    \midrule
    sheaf dimension      & 4                    & 4             \\
    \# layers            & 2                    & 2             \\
    \# hidden channels   & 32                   & 64            \\
    GNN architecture     & GCN                  & GCN           \\
    \# of GNN layers     & 5                    & 5             \\
    GNN dimension        & 32                   & 32            \\
    dropout              & 0.5                  & 0.0           \\
    left weights         & \checkmark           & \checkmark    \\
    right weights        & \checkmark           & \checkmark    \\
    learning rate        & $0.001$              & $0.001$       \\
    \bottomrule
  \end{tabular}
\end{table}


\end{document}